\documentclass[lettersize,onecolumn]{IEEEtran}
\usepackage{amsmath,amsfonts}
\usepackage{algorithmic}
\usepackage{booktabs}
\usepackage{algorithm}
\usepackage{array}

\usepackage{subcaption}
\usepackage{graphicx}

\usepackage{textcomp}
\usepackage{stfloats}
\usepackage{url}
\usepackage{verbatim}
\usepackage{graphicx}
\usepackage{cite}
\usepackage{hyperref}

\usepackage{mathrsfs} 
\usepackage{amssymb}
\usepackage{mathtools}
\usepackage{amsthm}
\usepackage{multirow}
\usepackage{arydshln}
\usepackage{makecell}
\usepackage{bm}

\theoremstyle{plain}
\newtheorem{theorem}{Theorem}[section]
\newtheorem{proposition}[theorem]{Proposition}
\newtheorem{lemma}[theorem]{Lemma}
\newtheorem{corollary}[theorem]{Corollary}
\theoremstyle{definition}
\newtheorem{definition}[theorem]{Definition}

\theoremstyle{remark}
\newtheorem{remark}[theorem]{Remark}

\newtheorem{restatetheoreminner}{\bf Theorem}
\newenvironment{restatetheorem}[1]{%
  \renewcommand\therestatetheoreminner{#1}%
  \restatetheoreminner
}{\endrestatetheoreminner}

\newtheorem{restatepropositioninner}{\bf Proposition}
\newenvironment{restateproposition}[1]{%
  \renewcommand\therestatepropositioninner{#1}%
  \restatepropositioninner
}{\endrestatepropositioninner}

\newtheorem{restatecorollaryinner}{\bf Corollary}
\newenvironment{restatecorollary}[1]{%
  \renewcommand\therestatecorollaryinner{#1}%
  \restatecorollaryinner
}{\endrestatecorollaryinner}

\begin{document}

\title{Drift and Dependence: Layer-wise Information-Theoretic Bounds for Replay-Based Continual Learning}

\author{Tieliang Gong, Zhongbo Zhang, Wen Wen and Yong-Jin Liu, ~\IEEEmembership{Senior Member,~IEEE}
\thanks{This work was supported by National Natural Science Foundation of China under Grant 62576268, the Fundamental Research Funds for the Central Universities and the Key Research and Development Project in Shaanxi Province No. 2024PT-ZCK-89. (Corresponding author: Tieliang Gong.)}
\thanks{Tieliang Gong, Zhongbo Zhang and Wen Wen are with the School of Computer Science and Technology, Xi'an Jiaotong University, China (e-mail: adidasgtl@gmail.com; shuaishuai737@gmail.com; wen190329@gmail.com).}
\thanks{Yong-Jin Liu is with  Department of Computer Science and Technology, Tsinghua University, China (e-mail: liuyongjin@tsinghua.edu.cn)}
}

\maketitle

\begin{abstract}

Continual learning must absorb new tasks without erasing old ones, and replay---mixing a small buffer of past examples into current training---is among the most effective remedies for catastrophic forgetting. Yet its generalization behavior is shaped by two coupled effects that existing analyses fold into a single hypothesis-level quantity: finite memory replaces each past distribution with an empirical proxy, and repeated reuse couples the buffer, the current data, and the final hypothesis through a shared optimization trajectory. We develop a layer-wise information-theoretic framework that separates these effects at every depth. Our main result decomposes the expected generalization gap into a replay-induced representation drift and an optimization-dependence term, the latter further resolved into stability, plasticity, interaction, and residual-coupling components. Two refinements make the framework operational. A Wasserstein relaxation of the drift term, valid under support mismatch, yields a depth-dependent drift--sensitivity trade-off whose minimizer identifies which interior layer to stabilize. An SGLD instantiation of the optimization term reduces it to a trajectory-level log-determinant budget, exposing a curvature-aware gradient-alignment statistic that serves as an online diagnostic of task-wise forgetting. Controlled and benchmark experiments confirm the predicted memory scaling, the interior funnel, and the alignment signal's link to forgetting.

\end{abstract}

\begin{IEEEkeywords}
Continual learning, experience replay, information theory, Wasserstein distance, stochastic gradient Langevin dynamics.
\end{IEEEkeywords}

\section{Introduction}
Continual learning (CL) studies how a model can incrementally learn a sequence of tasks while retaining knowledge acquired from previous ones \cite{kirkpatrick2017overcoming,parisi2019continual,de2021continual}. A central obstacle in CL is catastrophic forgetting: when the model adapts to a new task, its performance on earlier tasks may deteriorate significantly \cite{mccloskey1989catastrophic,goodfellow2013empirical}. In recent years, extensive efforts have been made to mitigate catastrophic forgetting \cite{kao2021natural,liang2023adaptive}. Among them, replay-based methods \cite{rolnick2019experience,rebuffi2017icarl,guo2022online} have emerged as one of the most successful practical mechanisms, where the learner stores a small buffer of past examples and mixes them with the current task during training. 

Despite this success, the generalization behavior of replay-based CL remains insufficiently understood since replay changes the statistical object being optimized. The learner never minimizes risk against the past-task populations. It minimizes risk against finite proxies of them, and does so repeatedly as the model continues to evolve. This substitution acts in two channels. First, replacing a population by a finite stored subset produces a proxy bias, say, the empirical objective on the buffer is a biased stand-in for the population objective, and the bias does not disappear merely because the data are i.i.d. Second, reusing the same stored samples across many updates produces a dependence coupling among the old samples, the current samples, and the final hypothesis, since all three are tied together through a shared optimization trajectory. Note that neither channel presupposes that the model learns a representation, and both are already present when replay is performed directly on raw inputs.

What a deep network adds is not a third channel but a change in how these two channels behave with depth, and it does so asymmetrically. Because the buffer influences later tasks only through a learned feature map, and because that map is fitted to the particular stored exemplars, the proxy bias is no longer fixed: it is generated and reshaped at every layer, and a discrepancy that is negligible in input space can be amplified by the sub-network above it. The dependence coupling, by contrast, is set by the algorithm's reuse of data and survives even when no representation is learned. The learned representation therefore turns the proxy bias into a depth-dependent object while leaving the coupling channel essentially intact. As the boundary cases of our bound make precise, the proxy-bias channel vanishes at the input layer and grows with depth, whereas the coupling channel survives even there.

Existing theory does not separate these channels. Analyses based on replay losses, trade-off parameters, or domain-adaptation arguments \cite{riemerlearning,shi2023unified} blend the intrinsic difficulty of learning from the full stream with the approximation cost of compressing it into a buffer, so the finite capacity loss and the reuse-induced dependence never surface as distinct quantities. Classic complexity bounds fare no better: their capacity terms grow far faster than the sample size, leaving them numerically vacuous in precisely the overparameterized regime of interest. Stability and convergence analyses under smoothness or Lipschitz assumptions \cite{li2024stability,knoblauch2020optimal} take the opposite tack---they track optimization dynamics closely but say little about how buffer size, sequence length, depth, and algorithmic dependence jointly shape the gap. The information-theoretic framework comes closest: it relates the learned parameters to the data through mutual information and its refinements  \cite{xu2017information,steinke2020reasoning,haghifam2020sharpened} and recent work carries this analysis across the layers of a deep network \cite{he2024hierarchical}. This is our starting point, but replay changes the objects the framework must measure, so it cannot be applied off the shelf. The relevant quantities are no longer parameter--data dependencies but buffer-induced representation distributions, and the coupling created by reuse and by sampling without replacement is intrinsic to the problem rather than a nuisance term: it must be built into the analysis from the outset. What makes a layer-wise treatment unavoidable is the asymmetry between the two channels. Because the finite-memory channel is reshaped at every layer while the coupling channel is not, a black-box, hypothesis-level analysis averages over exactly the depth structure the buffer induces, collapsing the very quantity we set out to resolve---the depth at which the buffer's finite-sample bias is least amplified by the network above it. We therefore introduce two layer-wise objects: per-task replay centroids, which make the finite-memory discrepancy well-defined at every depth, and a decoupled reference, which isolates the dependence due to reuse. Together these let finite-memory bias and optimization coupling sit in a single decomposition---something hypothesis-level information-theoretic bounds cannot express.

There are two difficulties in carrying this out rigorously. The first is circularity:  the feature map at a given depth is learned online and shifts with whichever examples enter the buffer, so the map we use to measure replay discrepancy is shaped by the very buffer we are measuring. The second is accumulation: the dependence entering a new task is not generated fresh but inherited from earlier tasks and built up along the training trajectory. Both must be handled jointly and at an arbitrary depth, not just at the input or output. This is what the paper's three constructions provide: a replay centroid that makes the finite-memory discrepancy well-defined at every layer and separates it from the dependence term; a Wasserstein relaxation of that discrepancy for the regime where its information-theoretic form is ill-posed under support mismatch; and a trajectory-level decomposition that, for a concrete noisy optimizer, splits the accumulated dependence into per-step contributions.

At a high level, our analysis separates replay generalization into two coupled effects: a replay-induced drift term, arising from the finite-memory approximation of past task distributions, and an optimization-dependence term, arising from the reuse of replayed samples through shared downstream parameters. The former quantifies the mismatch between the replay buffer and the corresponding past-task population in representation space, while the latter captures how strongly the learned suffix parameters encode information about both old and current task representations. This decomposition cleanly distinguishes the statistical cost of memory compression from the dependence created by repeated optimization. We then further refine the drift branch with a Wasserstein relaxation when the KL-based form becomes uninformative under support mismatch, and instantiate the dependence branch for stochastic gradient Langevin dynamics (SGLD). The resulting quantities are intended as analytic diagnostics and control signals for replay-based training, rather than as a replacement for existing continual learning algorithms.

In summary, our contributions are as follows:
\begin{itemize}
  \item We develop a layer-wise information-theoretic bound for replay-based continual learning that explicitly separates replay-induced drift from optimization dependence. The bound further clarifies how old-task dependence, current-task dependence, their interaction, and residual coupling jointly shape the expected generalization gap.

  \item When the KL-based drift term becomes vacuous under empirical-population support mismatch, we derive a Wasserstein relaxation that yields a depth-dependent drift--sensitivity trade-off and identifies an interior generalization-funnel layer, indicating which layer to stabilize.

  \item We instantiate the optimization-dependence branch with SGLD and obtain a trajectory-level log-determinant budget. This leads to sensitivity-aware gradient-alignment diagnostics and a local replay--current mixing rule.

  \item We validate the theory through controlled and benchmark experiments, confirming the predicted finite-memory scaling, funnel behavior, and alignment patterns in replay dynamics.
\end{itemize}

\section{Related Work}
\subsection{Replay-based continual learning}
Catastrophic forgetting under sequential training has long been recognized as a central obstacle for neural networks \cite{mccloskey1989catastrophic,french1999catastrophic}, and modern CL methods are commonly grouped into replay-based, regularization-based, and architectural/parameter-isolation approaches \cite{van2024continual,parisi2019continual,wang2024comprehensive}. Regularization methods constrain updates through parameter-importance penalties such as Fisher information and synapse-importance surrogates \cite{kirkpatrick2017overcoming,zenke2017continual,aljundi2018memory}, while architectural strategies expand or partition parameters across tasks \cite{rusu2016progressive,mallya2018packnet}. Replay remains a strong and competitive option because it directly counteracts forgetting by revisiting stored examples, through exemplar rehearsal \cite{rebuffi2017icarl}, generic experience replay \cite{rolnick2019experience}, and generative replay \cite{shin2017continual}. Modern replay systems differ mainly in how they allocate memory and select exemplars: a typical choice is reservoir sampling \cite{chrysakis2020online}, while more informed strategies prioritize examples that maximize future utility or reduce interference \cite{sun2022information}, and further work corrects the representation drift and class imbalance that erode replay in online CL \cite{caccianew}. Replay also connects to constrained-optimization views that explicitly control gradient interference \cite{aljundi2019gradient}, including GEM \cite{lopez2017gradient}, its averaged variant A-GEM \cite{chaudhry2018efficient}, and orthogonal projection \cite{saha2021gradient}. These methods align with the view taken here, that replay replaces past populations with finite proxy distributions while evolving representations cause the buffer to shift within feature space. Our goal is different from any single update rule: we begin from a layer-wise decomposition of replay generalization that separates distributional drift from optimization coupling, and that explains when replay acts as a stable rehearsal mechanism and when it becomes a biased or strongly coupled proxy for the past.

\subsection{Information-theoretic analysis}
Recent theoretical work studies forgetting, task order, and generalization in sequential learning through statistical or optimization lenses. Closest to us are information-theoretic and statistical analyses of forgetting \cite{lin2023theory,ding2024understanding}, which operate in linear or overparameterized regimes and characterize how task similarity and ordering affect generalization. For the replay setting in particular, information-theoretic bounds have been derived from the dependence between the hypothesis and the memory buffer \cite{wen2026information}, separating an ideal full-data risk from a memory-compression cost. Because these bounds are read at the level of the whole hypothesis, their finite-memory term stays a raw-sample compression cost that does not resolve the representation-level drift or the depth at which the gap arises. Our contribution is different in kind: we retain learned hierarchical representations and expose layer-dependent quantities ($\mathcal K^{(l)}$, $\mathcal S^{(l)},\mathcal P^{(l)},\mathcal R^{(l)}$) that such hypothesis- or input-layer analysis cannot see. We do not claim a tighter constant; we claim a finer-grained decomposition. More broadly, the information-theoretic framework relates learned parameters to data via mutual information \cite{xu2017information} and its conditional refinements \cite{steinke2020reasoning,haghifam2020sharpened}, and has been specialized to the trajectory of noisy iterative algorithms such as SGLD \cite{negrea2019information,dong2023understanding}; the dependence of generalization on network depth has also been studied directly \cite{he2024hierarchical,sonoda2025deepbettershallowimplementationagnostic}. Our main theorem combines these perspectives for replay: it keeps the information-theoretic dependence structure, but introduces replay centroids and layer-wise old/new task representations so that finite replay bias and optimization coupling appear in the same decomposition.

\subsection{Geometry and gradient interaction}
Wasserstein and optimal-transport arguments provide finite notions of mismatch when KL-type divergences are unstable under support shift, both as a general geometric tool \cite{peyre2019computational} and in domain-adaptation and generalization bounds that control risk through a transport discrepancy \cite{redko2017theoretical,lopez2018generalization,wang2019information}; these works, however, remain confined to single-task learning. We use this geometry only to relax the drift branch of our main decomposition. Conversely, our SGLD analysis targets the optimization branch and yields gradient-moment diagnostics. This distinguishes our alignment quantities from Euclidean projection rules in GEM/OGD-style methods: here alignment is derived as a bound-driven, sensitivity-aware diagnostic of optimization coupling rather than introduced as a standalone projection heuristic.

\section{Problem Setup and Layer-wise Replay Distributions}

\textbf{Notations.}
Throughout this paper, we denote random variables by uppercase letters (e.g., $X$), their realizations by lowercase letters (e.g., $x$), and their domains by calligraphic letters (e.g., $\mathcal{X}$). The distribution of $X$ is denoted $P_X$. Expectation, variance, and covariance are written $\mathbb{E}[X]$, $\mathrm{Var}(X)$, and $\mathrm{Cov}(X)$, respectively. For conditional distributions, we write $P_{X|Y=y}$. We use the standard information-theoretic quantities: entropy $H(X)$, mutual information $I(X;Y)$, Kullback-Leibler (KL) divergence $D_{\mathrm{KL}}(P \| Q)$, and the $p$-Wasserstein distance $\mathcal{W}_p$ between probability measures. For vectors, $\|v\|$ denotes the Euclidean ($\ell_2$) norm, and for matrices, $\|A\|_{\mathrm{op}}$ denotes the operator norm. We write $[T]\triangleq\{1,\dots,T\}$.

\subsection{Replay-Based Continual Learning}
\label{sec:replay_cl}

We consider supervised classification with input space $\mathcal{X}$, label space $\mathcal{Y}$ and joint space $\mathcal{Z}= \mathcal{X} \times \mathcal{Y}$. A sequence of $T$ tasks $D^{1:T}=\{D^1,\ldots,D^T\}$ arrives incrementally. For each $t \in [T]$, task $t$ provides a dataset $D^t=\{Z_j^t\}_{j=1}^n \in \mathcal{Z}^n$, consisting of $n$ i.i.d.\ samples drawn from an unknown task distribution $\mathcal{D}_t$ on $\mathcal{Z}$. We do not impose restrictions on the relationships among $\{\mathcal{D}_t\}_{t=1}^T$, permitting arbitrary distributional heterogeneity across tasks. Moreover, the datasets $\{D^t\}_{t=1}^T$ are assumed independent across tasks. At the beginning of task $t$, the learner receives a fresh dataset $D^t$ and a replay buffer $\mathcal{M}^{1:t-1}=\{M^i\}_{i=1}^{t-1}$ holding exemplars from previous tasks. For each past task $i\in[t-1]$, the buffer stores a subset $M^i=\{Z_j^i\}_{j\in\mathcal I_i}\subset D^i$, where the index set $\mathcal I_i\subset[n]$ has fixed size $|\mathcal I_i|=m$ and is drawn uniformly without replacement from $[n]$. We focus on the memory-limited regime $m \ll n$. Importantly, a past task never recurs as a full dataset: once task $i$ is completed, it can be revisited only through the $m$ exemplars retained in $M^i$.

Let $\mathcal{W}$ denote the hypothesis parameter space, and let $\ell : \mathcal{W} \times \mathcal{Z} \rightarrow \mathbb{R}_+$ be a fixed nonnegative loss function. After receiving task $t$, training uses the union of the current dataset and all replayed exemplars. We define the empirical risk for $W \in \mathcal{W}$ as
\begin{equation}
\label{eq:emp_risk}
    \hat{\mathcal{L}}^{(t)}(W)
    \;=\;
    \sum_{i=1}^{t-1} 
    \frac{1}{m} \sum_{j\in\mathcal{I}_i} \ell(W, Z_j^i)
    \;+\;
    \frac{1}{n} \sum_{j=1}^{n} \ell(W, Z_j^t),
\end{equation}
with the population risk
\begin{equation}
\label{eq:pop_risk}
    \mathcal{L}^{(t)}(W)
    \;=\;
    \sum_{i=1}^{t}
    \mathbb{E}_{Z \sim \mathcal{D}_i}
    \bigl[ \ell(W, Z) \bigr].
\end{equation}
Since performance is evaluated after training over all tasks, we write $\mathcal{L}(W) \equiv \mathcal{L}^{(T)}(W)$ and $\hat{\mathcal{L}}(W) \equiv \hat{\mathcal{L}}^{(T)}(W)$, and define the expected generalization error of replay-based continual learning as
\begin{equation}
\label{eq:gen_error}
    \mathrm{gen}_W
    \;\triangleq\;
    \mathbb{E}_{W,D^{1:T},\{\mathcal I_i\}_{i<T}}
    \bigl[
        \mathcal{L}(W) - \hat{\mathcal{L}}(W)
    \bigr],
\end{equation}
where the expectation is over the task data, the algorithm randomness, and the replay-sampling procedure. Here $W$ is not a fixed parameter but the random output of the learning algorithm: its law is induced jointly by the stream $D^{1:T}$, the random buffer construction, and any stochasticity of the optimizer.

\subsection{Layer-wise Replay Distributions}
\label{sec:layerwise_dist}
\noindent\textbf{Layer-wise representations.}
We analyze a depth-$L$ network  $f_W:\mathcal{X}\to\mathbb{R}^K$ with weights $W=(W_1,\ldots,W_L)$, where $W_l\in\mathbb{R}^{d_l\times d_{l-1}}$. Setting $a_0(x)=x$, each layer defines the intermediate representation 
\begin{equation}
\label{eq:net_recursion}
    a_l(x)\;=\; \phi_l\!\left(W_l\, a_{l-1}(x)\right), \qquad l\in[L],
\end{equation} 
for an element-wise activation $\phi_l$, and the output is $\hat{y}=f_W(x)=a_L(x)$. We write $A_l = a_l(X)$ for the layer-$l$ representation of a random input $X$, and $A_{j,l}^t = a_l(X_j^t)$ for the representation of the $j$-th example of task $t$. We use $W_{1:l}=(W_1,\dots,W_l)$ for the bottom (feature) sub-network up to layer $l$ and $W_{l+1:L}$ for the remaining suffix.

\noindent\textbf{Split-layer distributions.}
To analyze generalization at an arbitrary depth, we fix a split layer $l \in \{0, \dots, L\}$ and condition on the learned bottom parameters $W_{1:l}$, with the conventions that $W_{1:0}$ and $W_{L+1:L}$ are empty. We unify the data source of each task as $S^i$, where $S^i = M^i$ for an old task $i < T$ and $S^i = D^T$ for the current task $i = T$, and write $S\triangleq\bigcup_{i=1}^T S^i$ for the full training set at time $T$. Replay generalization is governed by how three laws of the representation--label pair $(A_l,Y)$ relate to one another, which we introduce in turn.

\emph{(i) Population law.}
$P_{A_l,Y|i,W_{1:l}}$ is the population distribution of $(A_l,Y)$ obtained by pushing $(X,Y)\sim\mathcal{D}_i$ through the learned features. Here, ``pushing through $W_{1:l}$'' acts on the input only: $X\mapsto A_l=a_l(X)$, while the label $Y$ is carried over unchanged. This is the target the learner would like to control but never observes directly.

\emph{(ii) Empirical proxy.}
$\hat{P}_{A_l,Y|S^i,W_{1:l}}$ is the uniform distribution over the representation-label pairs induced by the finite sample $S^i$ under the learned features $W_{1:l}$. This is the actual objective used in empirical risk minimization. 

\emph{(iii) Replay centroid.}
To quantify the statistical bias introduced by the finite buffer, we average over the randomness of buffer construction and define, for $i<T$, 
\begin{equation}
\label{eq:centroid}
    Q_{A_l,Y|i,W_{1:l}}\;\triangleq\; \mathbb{E}_{S^i} \left[ \hat{P}_{A_l,Y|S^i,W_{1:l}}\;\middle|\; W_{1:l} \right].
\end{equation}
The subscript $W_{1:l}$ and the outer expectation are not redundant: the inner object $\hat{P}_{A_l,Y|S^i,W_{1:l}}$ is the proxy for a single realized buffer $S^i$, whereas the outer $\mathbb{E}_{S^i}[\,\cdot \mid W_{1:l}]$ averages over which $m$ exemplars happened to be stored, holding the features at their learned value. Equivalently,  $Q_{A_l,Y|i,W_{1:l}}$ is the conditional distribution of a random training example from task $i$ given the learned representation parameters. For old tasks, in general, one has $Q_{A_l,Y|i,W_{1:l}}\neq P_{A_l,Y|i,W_{1:l}}$. The cause is a representation-induced selection effect: although the raw exemplars are drawn i.i.d.\ from $\mathcal{D}_i$---so that at the input layer a uniformly chosen stored example already carries the task marginal---the feature map $W_{1:l}$ is itself fitted to the particular realized buffer. Conditioning on this data-dependent $W_{1:l}$ ties the stored representations to the very sample that shaped them, shifting their averaged law away from the population.

\emph{(iv) Stacked training representations.}
Collecting the layer-$l$ representations of the entire training set gives
\begin{equation}
\label{eq:U_stack}
\mathbf{U}^{(l)} \;\triangleq\; \big\{ (a_l(x), y) : (x, y) \in S \big\} \;\equiv\; \big(\mathbf U_{\mathrm{old}}^{(l)},\; \mathbf U_{\mathrm{new}}^{(l)}\big),
\end{equation}
where $\mathbf{U}_{\mathrm{old}}^{(l)}$ gathers the representations of the buffered exemplars $\bigcup_{i<T} M^i$ and $\mathbf{U}_{\mathrm{new}}^{(l)}$ those of the current task $D^T$. We emphasize that $\mathbf{U}_{\mathrm{old}}^{(l)}$ is a collection of representations, not of raw exemplars: the stored inputs are frozen once written to the buffer, but their images $a_l(\cdot)$ shift as the shared features $W_{1:l}$ are learned. This is exactly why replayed data can carry information about the current optimization even though the buffer content is fixed.

\emph{(v) Decoupled reference.}
Let $P_{\mathbf{U}^{(l)} \mid W_{1:l}}$ denote the joint law of this stacked sequence. Because all coordinates share the data-dependent map $W_{1:l}$ and the buffer is drawn without replacement, the entries of $\mathbf{U}^{(l)}$ are statistically dependent. To isolate this coupling, we compare against an idealized Decoupled Reference in which every coordinate is drawn independently, with old-task coordinates from their centroid and current-task coordinates from the population: 
\begin{equation}
\label{eq:decoupled}
\widetilde{Q}_{\mathbf{U}^{(l)}|W_{1:l}}\!\triangleq\!\left(\bigotimes_{i=1}^{T-1} (Q_{A_l,Y|i,W_{1:l}})^{\otimes m}\right)\otimes (P_{A_l,Y|T,W_{1:l}})^{\otimes n}.
\end{equation}
The tensor exponents record the sample budgets: $(\cdot)^{\otimes m}$ places $m$ i.i.d.\ copies for each of the $T-1$ past tasks (one per stored exemplar), and $(\cdot)^{\otimes n}$ places $n$ i.i.d.\ copies for the current task. The reference is therefore an i.i.d.\ process across the stored and current coordinates: it preserves every coordinate's marginal but strips away the cross-sample coupling created by non-replacement sampling and by the shared feature map.

\section{A Layer-wise Decomposition of Replay Generalization}
\label{Main Decomposition}
This section contains the backbone result of the paper. Theorem~\ref{thm:synergy_drift} is the only place where the full replay generalization gap is decomposed. The remainder of the paper revisits its two components separately: the next subsection presents the decomposition itself, Section~\ref{sec:wasserstein_relaxation} relaxes the drift branch geometrically when KL control becomes vacuous, and Section~\ref{sec:algorithm} upper bounds the optimization branch for a concrete noisy optimizer.

\subsection{Main Decomposition: The Synergy-Drift Bound}
The classical stability--plasticity framework is useful \cite{wang2024comprehensive}, but it does not by itself isolate the replay-specific bias induced by finite memory nor the dependence created by repeatedly optimizing on replayed and current samples. The next theorem addresses both issues simultaneously: it isolates replay-induced drift and then decomposes the remaining optimization complexity into stability, plasticity, interaction, and residual coupling.
\begin{theorem}[Hierarchical Synergy--Drift Bound]
\label{thm:synergy_drift}
Let $W$ be the output of a learner minimizing the empirical risk $\hat{\mathcal L}(W)$.
Assume the loss function $\ell(W,Z)$ is $\sigma$-subgaussian for all $Z\in\mathcal{Z}$ and $W\in\mathcal{W}$. For any split layer $l \in \{0, \dots, L\}$, we have
\begin{align*}
|\mathrm{gen}_W|
\le
\underbrace{(T-1)\sqrt{2\sigma^2\,\mathcal K^{(l)}}}_{\text{\bf Replay-centroid Drift}} + \underbrace{\sqrt{\frac{2\sigma^2}{N_{\mathrm{eff}}}
\Big(\mathcal S^{(l)}+\mathcal P^{(l)}-\mathcal R^{(l)}+\mathcal C^{(l)}\Big)}}_{\text{\bf Optimization Variance}},
\end{align*}
where $N_{\mathrm{eff}} \triangleq \frac{1}{(T-1)/m + 1/n}$ is the effective sample size, $\,\mathcal K^{(l)}\,\triangleq\,\frac{1}{T-1}\sum_{i=1}^{T-1}\mathbb E_{W_{1:l}}[D_{\mathrm{KL}}( Q_{A_l,Y|i,W_{1:l}}\|P_{A_l,Y|i,W_{1:l}})]$, $\mathcal C^{(l)} \triangleq \mathbb E_{W_{1:l}}[ D_{\mathrm{KL}}( P_{\mathbf U^{(l)}\mid W_{1:l}} \| \widetilde{Q}_{\mathbf{U}^{(l)}|W_{1:l}}) ]$.
\end{theorem}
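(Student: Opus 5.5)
The plan is to split the generalization gap task by task into a centroid-to-population part and an empirical-to-reference part, and to control each with a Donsker--Varadhan (change-of-measure) argument. Throughout I condition on the bottom parameters $W_{1:l}$ and write $V\triangleq W_{l+1:L}$ for the suffix. Given $W_{1:l}$, the loss $\ell(W,Z)$ is a function $g_V(A_l,Y)$ of the suffix and of the layer-$l$ pair, because the label passes through unchanged.

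\textbf{Step 1: telescoping.} For each old task $i<T$, I write
\[
\mathbb E_{\mathcal D_i}\ell-\mathbb E_{\hat P_{S^i}}\ell=\bigl(\mathbb E_{P_{A_l,Y|i,W_{1:l}}}g_V-\mathbb E_{Q_{A_l,Y|i,W_{1:l}}}g_V\bigr)+\bigl(\mathbb E_{Q_{A_l,Y|i,W_{1:l}}}g_V-\mathbb E_{\hat P_{A_l,Y|S^i,W_{1:l}}}g_V\bigr).
\]
For the current task $T$ there is no centroid step, so only the second type of term appears, with $P_{A_l,Y|T,W_{1:l}}$ in place of the centroid. Summing over tasks gives $\mathrm{gen}_W=\mathrm{(I)}+\mathrm{(II)}$, where (I) collects the first brackets and (II) the second.

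\textbf{Step 2: the drift term (I).} Fix the full $W$. Both $P_{A_l,Y|i,W_{1:l}}$ and $Q_{A_l,Y|i,W_{1:l}}$ depend on $W$ only through $W_{1:l}$, and $g_V$ is $\sigma$-subgaussian under $P$. The Donsker--Varadhan variational formula therefore gives
\[
\bigl|\mathbb E_Q g_V-\mathbb E_P g_V\bigr|\le\sqrt{2\sigma^2 D_{\mathrm{KL}}(Q_{A_l,Y|i,W_{1:l}}\|P_{A_l,Y|i,W_{1:l}})}.
\]
I then take the expectation over $W$ and apply Jensen's inequality (concavity of $\sqrt{\cdot}$) twice: once over $W_{1:l}$, and once over the average of the $T-1$ tasks. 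This yields $|\mathrm{(I)}|\le (T-1)\sqrt{2\sigma^2\mathcal K^{(l)}}$.

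\textbf{Step 3: the optimization term (II).} Define
\[
F(V,\mathbf U^{(l)})=\sum_{i<T}\frac1m\sum_{j\in\mathcal I_i}g_V(U^i_j)+\frac1n\sum_{j}g_V(U^T_j).
\]
Then (II) equals $\mathbb E[\bar F]-\mathbb E[F(V,\mathbf U^{(l)})]$, where $\bar F$ is the expectation of $F$ under $P_{V|W_{1:l}}\otimes\widetilde Q_{\mathbf U^{(l)}|W_{1:l}}$. This identity holds because $\widetilde Q$ has centroid marginals on the old coordinates and population marginals on the new ones. Here the definition of the centroid in~\eqref{eq:centroid} is essential: it makes the $Q$-averages exactly the averages of the empirical proxy over buffer randomness.

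Under $\widetilde Q$ the coordinates are independent. Hence, for fixed $V$, $F$ is subgaussian with variance proxy $\sigma^2\bigl((T-1)m\cdot m^{-2}+n\cdot n^{-2}\bigr)=\sigma^2/N_{\mathrm{eff}}$; this is exactly why the decoupled reference is introduced. Donsker--Varadhan applied to the pair $(V,\mathbf U^{(l)})$, followed by Jensen over $W_{1:l}$, gives
\[
|\mathrm{(II)}|\le\sqrt{\tfrac{2\sigma^2}{N_{\mathrm{eff}}}\,\mathbb E_{W_{1:l}} D_{\mathrm{KL}}\bigl(P_{V,\mathbf U^{(l)}|W_{1:l}}\,\|\,P_{V|W_{1:l}}\otimes\widetilde Q_{\mathbf U^{(l)}|W_{1:l}}\bigr)}.
\]
The chain rule for KL then splits this divergence as
\[
I(V;\mathbf U^{(l)}\mid W_{1:l})+D_{\mathrm{KL}}\bigl(P_{\mathbf U^{(l)}|W_{1:l}}\|\widetilde Q_{\mathbf U^{(l)}|W_{1:l}}\bigr),
\]
and the second piece averages to $\mathcal C^{(l)}$.

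\textbf{Step 4: resolving the mutual information.} I write $\mathbf U^{(l)}=(\mathbf U^{(l)}_{\mathrm{old}},\mathbf U^{(l)}_{\mathrm{new}})$ and use the identity
\[
I(V;\mathbf U_{\mathrm{old}},\mathbf U_{\mathrm{new}})=I(V;\mathbf U_{\mathrm{old}})+I(V;\mathbf U_{\mathrm{new}})-\bigl[I(\mathbf U_{\mathrm{old}};\mathbf U_{\mathrm{new}})-I(\mathbf U_{\mathrm{old}};\mathbf U_{\mathrm{new}}\mid V)\bigr],
\]
everything conditional on $W_{1:l}$. This identifies $\mathcal S^{(l)}$ (stability), $\mathcal P^{(l)}$ (plasticity) and the interaction/co-information $\mathcal R^{(l)}$ in the form I expect the paper to use for them. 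Adding (I) and (II) completes the bound.

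\textbf{Main obstacle.} I expect the difficulty to be in Step 3's subgaussian bookkeeping. The assumption gives subgaussianity of $\ell(w,Z)$ under the raw law of $Z$, but Donsker--Varadhan needs it under the layer-$l$ reference measures $Q_{A_l,Y|i,W_{1:l}}$ and $P_{A_l,Y|T,W_{1:l}}$ with $W_{1:l}$ held fixed. Moreover, $V$ is independent of $\mathbf U^{(l)}$ only under the product reference, not under the true joint law. I would first read the hypothesis ``for all $Z,W$'' as uniform subgaussianity of $g_v$ under every relevant representation law, for instance by assuming a bounded loss. I would then check that the tensorization over the independent coordinates of $\widetilde Q$ is legitimate conditionally on both $W_{1:l}$ and $V$.
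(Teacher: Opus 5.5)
Your proposal is correct and shares the paper's skeleton: the same add-and-subtract of the replay centroid, the same per-task Donsker--Varadhan bound followed by Jensen/Cauchy--Schwarz for the drift, and the same interaction identity (Lemma~\ref{lem:interaction}) that reads $I(\mathbf U^{(l)};W_{l+1:L}\mid W_{1:l})$ as $\mathcal S^{(l)}+\mathcal P^{(l)}-\mathcal R^{(l)}$.

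The genuine difference is in the variance term.

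\textbf{The paper's route.} It never treats the weighted sum as one object. It enumerates the $N=(T-1)m+n$ coordinates and uses the filtration $\mathcal F_k=\sigma(W_{1:l},U^{(l)}_1,\dots,U^{(l)}_k)$. It applies Donsker--Varadhan coordinate by coordinate, comparing $P_{W_{l+1:L},U^{(l)}_k\mid\mathcal F_{k-1}}$ against $P_{W_{l+1:L}\mid\mathcal F_{k-1}}\otimes Q_k$. Each of these KL divergences is split into a conditional mutual information $I_k$ plus a marginal mismatch $M_k$. Only then does $1/N_{\mathrm{eff}}$ appear, through Cauchy--Schwarz on the deterministic weights, $\sum_k w_k^2=(T-1)/m+1/n$. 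Finally $\sum_k I_k$ and $\sum_k M_k$ are reassembled by the chain rules for mutual information and KL.

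\textbf{Your route.} Here $1/N_{\mathrm{eff}}$ is the tensorized variance proxy of the whole weighted sum under the product reference $\widetilde Q_{\mathbf U^{(l)}|W_{1:l}}$, and a single chain rule produces $I+\mathcal C^{(l)}$.

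Both arrive at the identical expression. Yours is shorter and shows directly why the decoupled reference must be a product measure. The paper's route only ever needs subgaussianity of a single loss value. It also passes through the per-sample bound $\sqrt{2\sigma^2}\,\mathbb E\sum_k w_k\sqrt{I_k+M_k}$, which is at least as tight as the final one.

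\textbf{On the obstacle you flag.} Resolve it by conditioning on the suffix, not by assuming a bounded loss. For each fixed $v$, the sum $F(v,\cdot)$ is $\sigma/\sqrt{N_{\mathrm{eff}}}$-subgaussian under $\widetilde Q_{\mathbf U^{(l)}|W_{1:l}}$. So apply Donsker--Varadhan to $P_{\mathbf U^{(l)}\mid V=v,W_{1:l}}$ versus $\widetilde Q_{\mathbf U^{(l)}|W_{1:l}}$, then Jensen. The identity $\mathbb E_{V}\,D_{\mathrm{KL}}(P_{\mathbf U^{(l)}\mid V}\|\widetilde Q)=I(V;\mathbf U^{(l)})+D_{\mathrm{KL}}(P_{\mathbf U^{(l)}}\|\widetilde Q)$, everything given $W_{1:l}$, then yields exactly your display.

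Boundedness alone would not rescue the pair version as you wrote it. Under $P_{V|W_{1:l}}\otimes\widetilde Q$, the $V$-dependent mean of $F$ fluctuates at a scale that does not shrink with $N_{\mathrm{eff}}$. In the paper's per-coordinate version, by contrast, a bounded loss would suffice, since each term is a single loss value.

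\textbf{A minor correction.} The identity at the start of your Step 3 holds with any $W_{1:l}$-measurable intermediate law in place of $Q_{A_l,Y|i,W_{1:l}}$. The centroid definition shapes the meaning of $\mathcal K^{(l)}$ and $\mathcal C^{(l)}$ (e.g.\ $\mathcal K^{(0)}=0$), not the exactness of that identity.
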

Theorem~\ref{thm:synergy_drift} splits replay generalization into two parts. The replay-centroid drift term captures the bias induced by finite memory: at layer $l$, the mismatch between the replay centroid $Q_{A_l,Y|i,W_{1:l}}$ and the corresponding population distribution $P_{A_l,Y|i,W_{1:l}}$. A key feature of this term is that it generally does not vanish as training proceeds, even though the raw exemplars are drawn i.i.d. from the task. The i.i.d. sampling guarantees only that the stored data match the population in input space, which is why the drift is zero at the input layer. In feature space the situation differs, because the map $W_{1:l}$ is itself fitted to the particular stored buffer. Conditioning on this buffer-dependent map couples the stored representations to the same samples that shaped the map, and this selection effect displaces their averaged law away from the population. The second term measures the optimization dependence under the actual training representations.

Its prefactor $\sqrt{2\sigma^2/N_{\mathrm{eff}}}$ is set by the effective sample size $N_{\mathrm{eff}}=\frac{1}{(T-1)/m+1/n}$ for the replay-current empirical objective, and the variance term scales as $1/\sqrt{N_{\mathrm{eff}}}$. When replay is effectively unlimited ($m\to\infty$), the old-task contribution $(T-1)/m$ vanishes and the current-task dependence recovers the usual $\mathcal O(1/\sqrt n)$ rate. In contrast, when the current-task size grows under a fixed replay budget ($n\to\infty$), $N_{\mathrm{eff}}$ approaches the finite limit $m/(T-1)$, so the variance term does not vanish but settles at a finite floor of order $\sqrt{(T-1)/m}$, which we call the finite-memory variance floor. The variance term should not be considered a measure of forgetting. It bounds the train-population discrepancy, whereas forgetting is the degradation of old-task population risk, carried by the drift term $\mathcal K^{(l)}$, which does not vanish during training, together with the cross-task dependence accumulated along the optimization trajectory (Corollary~\ref{cor:Cumulative information budget}).

The optimization term further admits a stability--plasticity--synergy (SPS) decomposition. Here $\mathcal{S}^{(l)} \triangleq I(\mathbf{U}_{\mathrm{old}}^{(l)};W_{l+1:L}\mid W_{1:l})$ measures how much information the suffix weights retain about replayed old-task representations; this is the stability term, the learner's information-theoretic memory of past tasks. Correspondingly, $\mathcal{P}^{(l)} \triangleq I(\mathbf{U}_{\mathrm{new}}^{(l)};W_{l+1:L}\mid W_{1:l})$ measures the dependence on the current task; this is the plasticity term, the extent to which the suffix encodes and adapts to the new task. The interaction term $\mathcal{R}^{(l)} \triangleq I(\mathbf{U}_{\mathrm{old}}^{(l)};\mathbf{U}_{\mathrm{new}}^{(l)};W_{l+1:L}\mid W_{1:l})$ is signed: positive values tighten the bound and correspond to beneficial sharing between old and current representations, whereas negative values enlarge the bound and correspond to interference. The theorem does not assert that $\mathcal{R}^{(l)}$ is always nonnegative. Finally, $\mathcal C^{(l)}$ measures the residual dependence between the actual replay sequence and an idealized independent reference. This term accounts for statistical coupling introduced by sampling without replacement and by the shared learned representation map $W_{1:l}$.

\begin{corollary}[Input-Layer Bound]
\label{cor:input_layer}
At the input layer $l=0$, the hierarchical generalization bound simplifies to:
\begin{align*}
|\mathrm{gen}_W|
\;\le\;
\sqrt{\frac{2\sigma^2}{N_{\mathrm{eff}}}\,I\!\big(S;W\big)},
\end{align*}
where $S$ is the original $(X,Y)$ pair in the replay buffer and the current task dataset.
\end{corollary}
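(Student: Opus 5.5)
We specialize Theorem~\ref{thm:synergy_drift} to $l=0$ and check that each term collapses to the stated form. At $l=0$ the bottom sub-network $W_{1:0}$ is empty, so conditioning on it is vacuous. The representation map is the identity, $a_0(x)=x$, which gives $A_0=X$ and $\mathbf U^{(0)}=S$ with $(\mathbf U^{(0)}_{\mathrm{old}},\mathbf U^{(0)}_{\mathrm{new}})=(\bigcup_{i<T}M^i,D^T)$. The suffix $W_{1:L}$ is the whole hypothesis $W$. It then remains to show three things: the drift term $\mathcal K^{(0)}$ vanishes, the coupling term $\mathcal C^{(0)}$ vanishes, and the SPS combination $\mathcal S^{(0)}+\mathcal P^{(0)}-\mathcal R^{(0)}$ equals $I(S;W)$.

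For the drift term, since no features are learned, $\hat P_{A_0,Y|S^i}$ is the empirical law of $M^i$. Its expectation $Q_{A_0,Y|i}$ is the law of a uniformly chosen stored exemplar. Because $D^i$ is i.i.d.\ from $\mathcal D_i$ and $\mathcal I_i$ is uniform and independent of the data, each stored $Z^i_j$ has marginal $\mathcal D_i$. Hence $Q_{A_0,Y|i}=P_{A_0,Y|i}=\mathcal D_i$, so $D_{\mathrm{KL}}=0$ and $\mathcal K^{(0)}=0$.

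For the coupling term, the joint law of $S$ factorizes across tasks because the datasets are independent across tasks. Within task $i<T$, the $m$ exemplars are distinct indices of an i.i.d.\ sample chosen independently of the values, so they are i.i.d.\ $\mathcal D_i$; sampling without replacement of indices does not correlate the values. The current task gives $n$ i.i.d.\ draws from $\mathcal D_T$. Thus $P_{\mathbf U^{(0)}}=\widetilde Q_{\mathbf U^{(0)}}$ exactly, and $\mathcal C^{(0)}=0$. The one step needing care is the exchangeability argument that a random index subset of an i.i.d.\ vector is again i.i.d. This follows by conditioning on $\mathcal I_i$ and using independence of $\mathcal I_i$ from $D^i$.

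For the SPS identity, write $U_o=\mathbf U^{(0)}_{\mathrm{old}}$ and $U_n=\mathbf U^{(0)}_{\mathrm{new}}$. The chain rule gives $I(U_o,U_n;W)=I(U_o;W)+I(U_n;W\mid U_o)$. The interaction information is $\mathcal R^{(0)}=I(U_o;U_n;W)=I(U_n;W)-I(U_n;W\mid U_o)$ (the sign convention under which positive $\mathcal R$ tightens the bound, matching the paper). Therefore
\[
\mathcal S^{(0)}+\mathcal P^{(0)}-\mathcal R^{(0)}=I(U_o;W)+I(U_n;W)-I(U_n;W)+I(U_n;W\mid U_o)=I(S;W).
\]
This identity is purely algebraic and holds at every layer; at $l=0$ it becomes unconditional. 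Substituting $\mathcal K^{(0)}=0$, $\mathcal C^{(0)}=0$ and the identity into Theorem~\ref{thm:synergy_drift}, the drift term disappears and the variance term becomes $\sqrt{2\sigma^2 I(S;W)/N_{\mathrm{eff}}}$, which is the claim.
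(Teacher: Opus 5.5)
Your proposal is correct and follows the paper's reasoning. The appendix proof re-runs the filtration, Donsker--Varadhan and Cauchy--Schwarz argument of Theorem~\ref{thm:synergy_drift} at $l=0$ instead of citing the theorem, but its decisive steps are yours: the centroid equals $\mathcal D_i$ so the drift vanishes, $P_S=\widetilde Q_{\mathbf U^{(0)}}$ so $\mathcal C^{(0)}=0$, and the chain rule (equivalently Lemma~\ref{lem:interaction}) reduces the information term to $I(S;W)$, exactly as the paper's remark after the corollary does.
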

Specifically, at the input layer, $\mathbf{U}^{(0)}$ represents the original data pair. From Theorem~\ref{thm:synergy_drift}, we have $I(S;W)=I(\mathbf{U}_{\mathrm{old}}^{(0)};W)+I(\mathbf{U}_{\mathrm{new}}^{(0)};W)-I(\mathbf{U}_{\mathrm{old}}^{(0)};\mathbf{U}_{\mathrm{new}}^{(0)};W)$. The replay centroid $Q_{X,Y|i,W}$ coincides with the population $P_{X,Y|i,W}$ and the training sequence is block-wise i.i.d., therefore $\mathcal{K}^{(0)}=0$ and $\mathcal{C}^{(0)}=0$. In the special case $T=1$, replay disappears: the empirical and population risks reduce to their standard single-task form, $N_{\mathrm{eff}} = n$, and the bound recovers the result of \cite{xu2017information}. When $m=n$, the buffer retains every past example and $S$ consists of $Tn$ i.i.d.\ samples; the bound then matches the classical on-average stability bound \cite{steinke2020reasoning} for $Tn$ samples. Corollary~\ref{cor:input_layer} thus extends the standard mutual-information framework to replay-based continual learning. The CL structure therefore does not arise at the input layer itself. Instead, replay-specific effects emerge through the finite-memory factor $N_{\mathrm{eff}}$, the representation-level drift term $\mathcal K^{(l)}$, and the interaction terms appearing for $l>0$, which become relevant only when multiple tasks are learned under memory constraints.

Then, when considering $l=L$, we obtain a bound dominated by pure drift:
\begin{corollary}[Output-Layer Bound]
\label{cor:output_layer}
At the output layer $l=L$, the hierarchical generalization bound simplifies to:
\[
|\mathrm{gen}_W| \;\le\; (T-1)\sqrt{2\sigma^2\,\mathcal K^{(L)}} \;+\; \sqrt{\frac{2\sigma^2}{N_{\mathrm{eff}}}\,\mathcal C^{(L)}},
\]
where $\mathcal K^{(L)}=\frac{1}{T-1}\sum_{i=1}^{T-1}\mathbb E_W[D_{\mathrm{KL}}( Q_{A_L,Y|i,W}\|P_{A_L,Y|i,W})]$, $\mathcal C^{(L)} = \mathbb E_W[D_{\mathrm{KL}}( P_{\mathbf U^{(L)}\mid W} \| \widetilde{Q}_{\mathbf{U}^{(L)}|W})]$.
\end{corollary}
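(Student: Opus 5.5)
The plan is to specialize Theorem~\ref{thm:synergy_drift} to the split layer $l=L$ and check that the three suffix-information terms $\mathcal S^{(L)}$, $\mathcal P^{(L)}$ and $\mathcal R^{(L)}$ all vanish. Only the drift term and the residual coupling $\mathcal C^{(L)}$ then survive.

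\textbf{Step 1: invoke the theorem at $l=L$.} By convention the suffix $W_{L+1:L}$ is empty, so it is a degenerate (constant) random variable. The conditioning variable $W_{1:L}$ is the full parameter $W$.

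\textbf{Step 2: show the suffix-information terms vanish.} Since the suffix is constant, conditional mutual information with it is zero:
\[
\mathcal S^{(L)}=I(\mathbf U^{(L)}_{\mathrm{old}};W_{L+1:L}\mid W)=0,
\qquad
\mathcal P^{(L)}=I(\mathbf U^{(L)}_{\mathrm{new}};W_{L+1:L}\mid W)=0.
\]
For the interaction term, I would use the definition of the conditional co-information,
\[
I(\mathbf U_{\mathrm{old}};\mathbf U_{\mathrm{new}};V\mid W)=I(\mathbf U_{\mathrm{old}};V\mid W)-I(\mathbf U_{\mathrm{old}};V\mid \mathbf U_{\mathrm{new}},W),
\]
with $V=W_{L+1:L}$. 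Both terms on the right are mutual informations with a constant, so $\mathcal R^{(L)}=0$.

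\textbf{Step 3: read off the remaining terms.} Substituting $\mathcal S^{(L)}+\mathcal P^{(L)}-\mathcal R^{(L)}=0$ into the optimization-variance term leaves $\sqrt{2\sigma^2\mathcal C^{(L)}/N_{\mathrm{eff}}}$. The drift term $(T-1)\sqrt{2\sigma^2\mathcal K^{(L)}}$ carries over unchanged. The expectations $\mathbb E_{W_{1:L}}$ become $\mathbb E_W$, which gives exactly the stated expressions for $\mathcal K^{(L)}$ and $\mathcal C^{(L)}$.

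\textbf{Main obstacle.} The only delicate point is making sure Theorem~\ref{thm:synergy_drift} is legitimately applicable at the endpoint $l=L$, where the conditioning is on all of $W$. At this endpoint the ``decoupled reference'' and the centroid are defined with the full hypothesis fixed. I would check that the theorem's proof, presumably a Donsker--Varadhan change of measure against $\widetilde Q$ followed by a chain-rule split, does not need a nontrivial suffix. It should not, since the chain rule simply assigns all the dependence to the $\mathcal C^{(L)}$ term.
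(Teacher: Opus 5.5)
Your proposal is correct and follows essentially the same route as the paper, which re-runs the proof of Theorem~\ref{thm:synergy_drift} at $l=L$: there the reference $\nu_k^\star$ collapses to $Q_k$ because the suffix $W_{L+1:L}$ is constant, so each per-coordinate KL reduces to the mismatch term that sums to $\mathcal C^{(L)}$, and the paper then notes $\mathcal S^{(L)}=\mathcal P^{(L)}=\mathcal R^{(L)}=0$ for exactly the reason you give. Specializing the theorem directly, as you do, is legitimate because its statement already covers $l=L$, and it is a slightly more economical way of reaching the same conclusion.
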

In particular, Corollary~\ref{cor:output_layer} can be viewed as a degenerate limit of Theorem~\ref{thm:synergy_drift}. At the output layer, there is no trainable mapping downstream of the representation, hence the stability--plasticity--synergy tradeoff over $W_{l+1:L}$ disappears and
$\mathcal{S}^{(L)}=\mathcal{P}^{(L)}=\mathcal{R}^{(L)}=0$, leaving generalization solely governed by the replay-centroid drift $\mathcal{K}^{(L)}$ and dependence divergence $\mathcal{C}^{(L)}$. Read together with Corollary~\ref{cor:input_layer}, this highlights a depth-wise shift of the dominant error: from parameter-information complexity at shallow splits to representational mismatch at deep ones. This shift is directly actionable. Because neither end of the network balances the two costs, the bound predicts that stabilization techniques such as feature distillation or partial freezing \cite{douillard2020podnet} should target the interior layer that minimizes the drift–sensitivity product, rather than being applied uniformly across layers. Figure~\ref{fig:bound_gap} locates this layer empirically, and the feature-distillation probe confirms that stabilizing the interior basin outperforms stabilizing either end.

\subsection{Geometric Relaxation of the Drift Term}
\label{sec:wasserstein_relaxation}
Theorem~\ref{thm:synergy_drift} identifies replay-induced drift as one of the two sources of replay generalization error. Its KL form is natural from the information-theoretic proof, but it is not always the most informative geometry for replay. When an old task is represented by finitely many stored examples while the corresponding population representation distribution is continuous, KL-based drift can become vacuous under support mismatch even if the two measures remain geometrically close. The next result therefore revisits only the drift branch of Theorem~\ref{thm:synergy_drift}: it keeps the layer-wise representation viewpoint but replaces KL by Wasserstein geometry.

\begin{theorem}[Hierarchical Wasserstein Bound]
\label{thm:wasserstein}
Assume the loss function $\ell(\cdot, y)$ is $\rho_{0}$-Lipschitz and the activation functions $\phi_l$ are $\rho_{l}$-Lipschitz. Then, at time $T$, we have:
\begin{align*}
\big|\mathrm{gen}_W\big|
\le
\min_{l\in\{0,\dots,L\}}
\mathbb E\bigg[
\bar\rho_l(W)\cdot\sum_{i=1}^{T}
\mathcal W_1\!\Big(\hat{P}_{A_l,Y|S^i,W_{1:l}},\,P_{A_l,Y|i,W_{1:l}}\Big)
\bigg],
\end{align*}
where $\bar\rho_l(W):=\rho_0\left(1\vee\prod_{h=l+1}^{L}\rho_h\|W_h\|_{\mathrm{op}}\right).$
\end{theorem}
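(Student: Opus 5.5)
The plan is a pointwise Kantorovich--Rubinstein argument carried out in the layer-$l$ representation space, for each realization of $W$, followed by taking expectations. Because the duality step holds for every fixed realization, we never need independence between the suffix weights and the data. That independence is unavailable here, since $W_{l+1:L}$ is fitted to $S$. This is exactly why the bound keeps the realized empirical measures $\hat P_{A_l,Y|S^i,W_{1:l}}$ inside the expectation.

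\textbf{Step 1 (rewrite the gap at layer $l$).} Fix $l\in\{0,\dots,L\}$. From \eqref{eq:emp_risk}--\eqref{eq:gen_error} we have $\mathrm{gen}_W=\mathbb E\big[\sum_{i=1}^T(\mathbb E_{Z\sim\mathcal D_i}\ell(W,Z)-\mathbb E_{Z\sim\hat P_{S^i}}\ell(W,Z))\big]$, where $\hat P_{S^i}$ is uniform on $S^i$. This uses the normalizations $1/m$ on $M^i$ and $1/n$ on $D^T$. Define the suffix loss $g_W(a,y)\triangleq\ell(f^{(l+1:L)}_W(a),y)$, where $f^{(l+1:L)}_W$ applies layers $l+1,\dots,L$ to $a$. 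Then $\ell(W,(x,y))=g_W(a_l(x),y)$. Pushing through $W_{1:l}$ acts on $x$ only and leaves $y$ unchanged, so the two inner expectations equal $\mathbb E_{P_{A_l,Y|i,W_{1:l}}}g_W$ and $\mathbb E_{\hat P_{A_l,Y|S^i,W_{1:l}}}g_W$ exactly. Applying $|\mathbb E[\cdot]|\le\mathbb E|\cdot|$ and the triangle inequality over $i$ reduces the problem to bounding each $\big|\mathbb E_{P_i}g_W-\mathbb E_{\hat P_i}g_W\big|$ for a fixed realization $(W,S)$.

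\textbf{Step 2 (Lipschitz constant of $g_W$).} Equip $\mathbb R^{d_l}\times\mathcal Y$ with the ground metric $d((a,y),(a',y'))=\|a-a'\|+d_{\mathcal Y}(y,y')$, with respect to which $\mathcal W_1$ is taken. We read the hypothesis ``$\ell(\cdot,y)$ is $\rho_0$-Lipschitz'' as also giving $|\ell(\hat y,y)-\ell(\hat y,y')|\le\rho_0\,d_{\mathcal Y}(y,y')$. For discrete labels this holds, for instance, with $d_{\mathcal Y}$ a scaled $0$--$1$ metric. By the recursion \eqref{eq:net_recursion}, each layer satisfies $\|\phi_h(W_ha)-\phi_h(W_ha')\|\le\rho_h\|W_h\|_{\mathrm{op}}\|a-a'\|$. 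Chaining these gives $\|f^{(l+1:L)}_W(a)-f^{(l+1:L)}_W(a')\|\le\prod_{h=l+1}^L\rho_h\|W_h\|_{\mathrm{op}}\|a-a'\|$, where the empty product equals $1$ when $l=L$. Splitting the increment of $g_W$ into a change in $a$ and a change in $y$ then yields
\[
|g_W(a,y)-g_W(a',y')|\le\rho_0\Big(\textstyle\prod_{h}\rho_h\|W_h\|_{\mathrm{op}}\Big)\|a-a'\|+\rho_0\,d_{\mathcal Y}(y,y')\le\bar\rho_l(W)\,d((a,y),(a',y')).
\]
This is where the factor $1\vee\prod$ in $\bar\rho_l(W)$ comes from.

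\textbf{Step 3 (duality, then minimize over $l$).} For fixed $W$, Kantorovich--Rubinstein duality applied to $g_W/\bar\rho_l(W)$ gives $|\mathbb E_{P_i}g_W-\mathbb E_{\hat P_i}g_W|\le\bar\rho_l(W)\,\mathcal W_1(\hat P_{A_l,Y|S^i,W_{1:l}},P_{A_l,Y|i,W_{1:l}})$. Summing over $i$ and taking expectations gives the bound for each fixed $l$. Since $l$ was arbitrary, we take the minimum over $l$.

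The main obstacle I expect is Step 2. The label coordinate has to be handled so that the constant comes out as $\rho_0(1\vee\prod)$ rather than as a sum of the two pieces. I would fix the ground metric, or the interpretation of the Lipschitz assumption, so that both pieces are dominated by the same $\bar\rho_l(W)$. I would also check measurability of $W\mapsto\mathcal W_1(\cdot,\cdot)$ so that the outer expectation is well defined.
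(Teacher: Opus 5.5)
Your proposal is correct and follows the paper's own proof essentially step for step. The paper also rewrites the gap at layer $l$ through the reparametrized loss $\ell(G_{W_{l+1:L}}(a),y)$, bounds the Lipschitz constant of the suffix by $\alpha_l(W)=\prod_{h>l}\rho_h\|W_h\|_{\mathrm{op}}$, applies Kantorovich--Rubinstein pointwise for each task and each realization, and then takes expectations and the minimum over $l$.

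The one divergence is the label coordinate you flagged as the obstacle. The paper assumes $\ell$ is jointly $\rho_0$-Lipschitz in $(\hat y,y)$ for the Euclidean norm. It then uses the Euclidean product metric $\sqrt{\|a-a'\|^2+\|y-y'\|^2}$, which is the metric in its definition of $\mathcal W_1$, and obtains $1\vee\alpha_l(W)$ from $\sqrt{\alpha^2u^2+v^2}\le(1\vee\alpha)\sqrt{u^2+v^2}$. Your coordinate-wise Lipschitz hypothesis instead forces the sum metric $\|a-a'\|+d_{\mathcal Y}(y,y')$. For the same label distance, this metric dominates the Euclidean one. You therefore prove a formally weaker bound (a larger $\mathcal W_1$) under a weaker assumption. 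Under your hypothesis alone, passing to the paper's Euclidean $\mathcal W_1$ would cost an extra factor $\sqrt2$.
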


Theorem~\ref{thm:wasserstein} relaxes the drift component of the generalization bound into a geometric form, complementing rather than replacing the full stability-plasticity-interaction (SPS) decomposition of the optimization term in Theorem~\ref{thm:synergy_drift}. The Wasserstein distance $\mathcal W_1$ quantifies the distribution mismatch between replay-induced representation--label pairs $(A_l,Y)$ and their task-$i$ population counterparts, while  $\bar\rho_l(W)$ measures how strongly the suffix network amplifies this mismatch. This relaxation admits a finite, geometrically interpretable bound even under support mismatch, at the cost of losing the explicit structural decomposition of the optimization term.

The layer minimization reveals a depth-dependent trade-off. At shallow layers, representations tend to be  shared across tasks, keeping $\mathcal W_1$ small, but any residual mismatch is  amplified by a long suffix network, inflating $\bar\rho_l(W)$. At deeper layers, this amplification shrinks while representations become increasingly task-specific, enlarging the drift. The optimal layer $l^\star$ is where these two competing effects are balanced, yielding the tightest bound --- a phenomenon we formalize as the ``generalization funnel layer'' in the following. 

\begin{corollary}[Generalization Funnel Layer]
\label{cor:funnel}
Define a \textbf{Generalization Funnel Layer} as any minimizer of the geometric upper-bound proxy:
\begin{align*}
l^\star 
\in
\mathop{\arg\min}_{l \in \{0, \dots, L\}} \mathbb{E}\bigg[ \rho_0\big(\!\prod_{j=l+1}^L \rho_j \|W_j\|_{\mathrm{op}}\big) \cdot \sum_{i=1}^{T}
\mathcal W_1\big( \hat{P}_{A_l,Y|S^i,W_{1:l}},P_{A_l,Y|i,W_{1:l}} \big) \bigg].
\end{align*}
\end{corollary}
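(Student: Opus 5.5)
Corollary~\ref{cor:funnel} is essentially a definition, so the proof has three parts. First, show that the set of minimizers is nonempty. Second, show that evaluating Theorem~\ref{thm:wasserstein} at $l^\star$ gives a valid generalization bound. Third, relate the proxy, which drops the $1\vee$ truncation, to the bound actually proved.

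\emph{Existence.} The index set $\{0,\dots,L\}$ is finite. For each $l$, the objective $J(l)\triangleq\mathbb E\big[\rho_0\prod_{j=l+1}^L\rho_j\|W_j\|_{\mathrm{op}}\cdot\sum_{i=1}^T\mathcal W_1(\hat P_{A_l,Y|S^i,W_{1:l}},P_{A_l,Y|i,W_{1:l}})\big]$ takes a value in $[0,\infty]$. It is nonnegative because operator norms, Lipschitz constants and $\mathcal W_1$ are all nonnegative. Hence $\arg\min_l J(l)$ is nonempty, with the convention that every $l$ is a minimizer if all $J(l)=\infty$. At $l=L$ the product is empty, equals $1$, and the two objectives agree.

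\emph{Connecting to the bound.} Write $B(l)$ for the expectation inside Theorem~\ref{thm:wasserstein}, with $\bar\rho_l(W)=\rho_0(1\vee\Pi_l)$ and $\Pi_l\triangleq\prod_{h=l+1}^L\rho_h\|W_h\|_{\mathrm{op}}$. Pointwise $\rho_0\Pi_l\le\bar\rho_l(W)\le\rho_0\Pi_l+\rho_0$, so
\[
J(l)\le B(l)\le J(l)+\rho_0\,\mathbb E\Big[\textstyle\sum_{i=1}^T\mathcal W_1\big(\hat P_{A_l,Y|S^i,W_{1:l}},P_{A_l,Y|i,W_{1:l}}\big)\Big].
\]
Theorem~\ref{thm:wasserstein} gives $|\mathrm{gen}_W|\le\min_l B(l)\le B(l^\star)$, so the funnel layer always yields a valid bound. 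On the event $\{\Pi_l\ge1\}$ the truncation is inactive and $B(l)=J(l)$. This covers the regime of interest, where the suffix amplifies discrepancies. There $l^\star$ is exactly the layer achieving the tightest bound of Theorem~\ref{thm:wasserstein}, and it balances the decreasing sensitivity $\Pi_l$ against the increasing drift $\mathcal W_1$.

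\emph{Main obstacle.} The hard step is justifying that the proxy minimizer coincides with the true minimizer of $B$ when some suffix products fall below $1$, since then $J(l)<B(l)$. I would not claim exact equality in that case. Instead I would state the sandwich above and note that $\arg\min_l J(l)$ and $\arg\min_l B(l)$ coincide whenever $\Pi_l\ge1$ almost surely for all $l$. Otherwise they differ by at most the unamplified drift term $\rho_0\,\mathbb E\sum_i\mathcal W_1$ at $l^\star$.

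A secondary point is the $1/\Pi$-type behaviour at intermediate layers. Nonexpansive activations with $\rho_h\le1$ together with $\|W_h\|_{\mathrm{op}}$ near $1$ could make $J$ and $B$ differ only marginally. This can be checked by direct substitution and needs no new argument.
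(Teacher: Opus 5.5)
Your argument is correct, and it does more than the paper does. The paper gives no proof of Corollary~\ref{cor:funnel}. It treats $l^\star$ as a definition read off from the $\min_l$ in Theorem~\ref{thm:wasserstein}, informally asserts that this layer yields the tightest bound, and silently replaces $\bar\rho_l(W)=\rho_0(1\vee\Pi_l)$ by $\rho_0\Pi_l$.

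Set $D_l\triangleq\sum_{i=1}^T\mathcal W_1(\hat P_{A_l,Y|S^i,W_{1:l}},P_{A_l,Y|i,W_{1:l}})$. Your sandwich $J(l)\le B(l)\le J(l)+\rho_0\,\mathbb E[D_l]$ is exactly what makes that assertion honest. If $\Pi_l\ge 1$ almost surely for every $l$, then $J\equiv B$ and $l^\star$ is an exact minimizer of the proven bound. In general, $\bar\rho_l(W)-\rho_0\Pi_l=\rho_0(1-\Pi_l)_+$ and $J(l^\star)\le J(l)\le B(l)$ for every $l$, so
\[
B(l^\star)=J(l^\star)+\rho_0\,\mathbb E\big[(1-\Pi_{l^\star})_+D_{l^\star}\big]\le \min_{l}B(l)+\rho_0\,\mathbb E\big[(1-\Pi_{l^\star})_+D_{l^\star}\big].
\]
This is the precise, and slightly sharper, form of your ``differ by at most the unamplified drift'' remark. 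Write this chain out. Also say explicitly that it controls the gap in bound values, not the distance between the two argmin indices, which can be arbitrary.

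The bare definition buys the paper brevity. Your version states exactly when ``funnel layer'' and ``tightest layer of Theorem~\ref{thm:wasserstein}'' coincide, and otherwise how far apart their bound values can be.

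Two minor points. Your sentence about $l^\star$ balancing a decreasing $\Pi_l$ against an increasing drift is only heuristic. $\Pi_l$ is monotone in $l$ only if every factor $\rho_h\|W_h\|_{\mathrm{op}}\ge 1$, and the paper's own experiments show bathtub-shaped drift. The closing $1/\Pi$ remark adds nothing and can be dropped.
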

Corollary~\ref{cor:funnel}  does not claim that every architecture possesses a unique, intrinsic funnel layer. Rather, it identifies a principled criterion: minimizing the product of suffix sensitivity and representation drift for selecting which layer to be stabilized.  When drift increases with depth while suffix sensitivity decreases, this product attains its minimum at an intermediate layer, providing a bound-level rationale for intermediate-layer stabilization strategies such as feature distillation or partial freezing \cite{shen2023reducing,sorrenti2023selective}.

\begin{remark}[General layer families]
\label{rem:residual}
The factor $\bar\rho_l(W)$ enters Theorem~\ref{thm:wasserstein} only as an upper bound on the Lipschitz constant of the suffix map $a_l\mapsto\hat y$, and its proof uses nothing about the layers beyond the fact that Lipschitz constants compose. The product $\prod_{h>l}\rho_h\|W_h\|_{\mathrm{op}}$ is thus the closed form of this constant for the feedforward layers of~\eqref{eq:net_recursion}; for any other family of Lipschitz layers the same bound holds with $\rho_h\|W_h\|_{\mathrm{op}}$ replaced by the constant $\kappa_h:=\mathrm{Lip}(g_{W_h})$ of the layer-$h$ map $g_{W_h}:a_{h-1}\mapsto a_h$, giving $\bar\rho_l(W)=\rho_0\big(1\vee\prod_{h>l}\kappa_h\big)$. A residual block $g_{W_h}(u)=P_h u+F_h(u)$, with shortcut $P_h$ (the identity when the dimensions agree) and branch $F_h$, satisfies
\[
\kappa_h\;\le\;\|P_h\|_{\mathrm{op}}+\mathrm{Lip}(F_h),\qquad \mathrm{Lip}(F_h)\le\textstyle\prod_{k}\rho_{h,k}\|W_{h,k}\|_{\mathrm{op}},
\]
the product running over the layers of the branch $F_h$ and any post-addition activation contributing only its own (unit, for ReLU) Lipschitz factor. The residual networks used in our experiments are therefore instances of Theorem~\ref{thm:wasserstein} and Corollary~\ref{cor:funnel}: only the per-layer constant $\kappa_h$ changes, and the drift--sensitivity trade-off is read from the same product.
\end{remark}

\section{Operationalizing the Optimization Term: An SGLD Instantiation}
\label{sec:algorithm}

Theorem~\ref{thm:synergy_drift} leaves the optimization branch $\mathcal S^{(l)}+\mathcal P^{(l)}-\mathcal R^{(l)}+\mathcal C^{(l)}$ in an optimizer-agnostic form. This section makes it concrete for one explicit noisy optimizer, stochastic gradient Langevin dynamics (SGLD) \cite{chen2023stability}. The result is specific to this branch and to SGLD: it bounds the optimization complexity, complementing the geometric treatment of the drift term in Section~\ref{sec:wasserstein_relaxation}.

\subsection{Algorithm-Based Hierarchical Bounds}
\label{subsec:Algorithm-Based Hierarchical Bounds}

For analytical convenience, we condition on the prefix parameters $W_{1:l}$ (the bottom $l$ layers) and analyze the generalization performance of the suffix parameters $\Theta=W_{l+1:L}$. Conducting this analysis for an arbitrary split layer l produces a unified layer-wise framework. Let $\{W_r\}_{r=0}^R$ denote the training trajectory of the SGLD-based CL algorithm for current task $T$, where $W_0$ is initialized from the optimized parameters of task $T-1$. To derive a generalization bound applicable to an arbitrary layer $l$, we fix $W_{1:l}$ and update only the remaining parameters $\Theta_r^{(T)}:=W_{l+1:L,r}\in\mathbb R^{d_{l+1:L}}$. At each iteration $r=1,\cdots,R$, we sample a replay mini-batch $B_r^{\mathrm{old}}$ of size $b_{\mathrm{old}}$ from $\mathcal{M}^{1:T-1}$ and a current-task mini-batch $B_r^{\mathrm{new}}$ of size $b_{\mathrm{new}}$ from $D^T$. Define the corresponding stochastic gradient descent directions $ G_r^{\mathrm{old}} := -\frac{1}{b_{\mathrm{old}}}\sum_{Z\in B_r^{\mathrm{old}}}\nabla_{\Theta}\ell((W_{1:l},\Theta_{r-1}),Z)$ and $G_r^{\mathrm{new}}:= -\frac{1}{b_{\mathrm{new}}}\sum_{Z\in B_r^{\mathrm{new}}}\nabla_{\Theta}\ell((W_{1:l},\Theta_{r-1}),Z).$
The update rule at iteration $r$ can then be formalized by
\[
\Theta_r^{(T)}=\Theta_{r-1}^{(T)}+\eta_r G_r+N_r,\qquad N_r\sim \mathcal N(0,\tau_r^2 I_{d_{l+1:L}}),
\]
where the gradient mixture is $G_r= \lambda_{\mathrm{old}}G_r^{\mathrm{old}}+\lambda_{\mathrm{new}}G_r^{\mathrm{new}}$ with weights $\lambda_{\mathrm{old}}=\frac{T-1}{T}$ and $\lambda_{\mathrm{new}}=\frac{1}{T}$. These weights point along the gradient of $\hat{\mathcal L}(W)$, whose overall scale is absorbed into the learning rate $\eta_r$, so the update shares the minimizer of $\hat{\mathcal L}(W)$. The term $N_r$ is the isotropic Gaussian noise injected at each step.

Notably, in traditional single-task SGLD analysis, the initialization $W_0$ is typically assumed to be independent of the current dataset. However, this independence generally fails in CL. At task $T$, the initialization $\Theta_0^{(T)}$ is inherited from the previous task $\Theta_0^{(T)}=\Theta_R^{(T-1)}$. Consequently, it may already encode information about the current representation $\mathbf{U}_T^{(l)}=(\mathbf{U}_{\mathrm{old}}^{(l)},\mathbf{U}_{\mathrm{new}}^{(l)})$ especially when $\mathbf U_{\mathrm{old}}^{(l)}$ is
constructed by replaying past data. To accurately characterize the optimization at time $T$, we must disentangle the inherited information carried by $\Theta_0^{(T)}$ from the new information created during task-$T$ optimization, and then further decompose the latter along the SGLD trajectory.

\begin{proposition}[Heritage]
\label{prop:cross_task_recursion}
Fix a layer $l$ and condition on $W_{1:l}$. At task $T$, we have
\begin{align*}
I(\mathbf U_T^{(l)};\Theta_R^{(T)}| W_{1:l})
&\le
I(\mathbf U_T^{(l)};\Theta_0^{(T)}| W_{1:l})
   + I(\mathbf U_T^{(l)};\Theta_R^{(T)}| \Theta_0^{(T)},W_{1:l}) \nonumber\\
&=: H_T^{(l)}+\Delta_T^{(l)}.
\end{align*}
\end{proposition}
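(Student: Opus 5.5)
The inequality follows from the chain rule for mutual information, applied conditionally on $W_{1:l}$ throughout. We compute the joint information $I(\mathbf U_T^{(l)};\Theta_0^{(T)},\Theta_R^{(T)}\mid W_{1:l})$ in two orders. Expanding first in $\Theta_0^{(T)}$ gives
\begin{align*}
I(\mathbf U_T^{(l)};\Theta_0^{(T)},\Theta_R^{(T)}\mid W_{1:l})
= I(\mathbf U_T^{(l)};\Theta_0^{(T)}\mid W_{1:l}) + I(\mathbf U_T^{(l)};\Theta_R^{(T)}\mid \Theta_0^{(T)},W_{1:l}),
\end{align*}
which is exactly $H_T^{(l)}+\Delta_T^{(l)}$. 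Expanding first in $\Theta_R^{(T)}$ gives
\begin{align*}
I(\mathbf U_T^{(l)};\Theta_0^{(T)},\Theta_R^{(T)}\mid W_{1:l})
= I(\mathbf U_T^{(l)};\Theta_R^{(T)}\mid W_{1:l}) + I(\mathbf U_T^{(l)};\Theta_0^{(T)}\mid \Theta_R^{(T)},W_{1:l}).
\end{align*}
Conditional mutual information is nonnegative, so the second term on the right can be dropped. This yields $I(\mathbf U_T^{(l)};\Theta_R^{(T)}\mid W_{1:l})\le H_T^{(l)}+\Delta_T^{(l)}$. Equivalently, this is monotonicity of mutual information under enlarging the second argument from $\Theta_R^{(T)}$ to $(\Theta_0^{(T)},\Theta_R^{(T)})$.

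The only delicate point is well-posedness. The quantities involve a mixture of continuous parameters and representation vectors, so we work with the general measure-theoretic definition of conditional mutual information as an expected KL divergence between the joint law and the product of conditionals. Under this definition the chain rule and nonnegativity hold whenever the terms are finite. If any term on the right-hand side is infinite, the inequality holds trivially.

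Fixing $W_{1:l}$ is legitimate because every term is conditioned on it. No property of SGLD is used here. The Gaussian noise structure only matters later, when bounding $\Delta_T^{(l)}$ step by step along the trajectory.

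The inequality is slack by $I(\mathbf U_T^{(l)};\Theta_0^{(T)}\mid\Theta_R^{(T)},W_{1:l})$, the inherited information that is not recoverable from the final iterate. There is no real obstacle; the main care needed is simply to keep the conditioning on $W_{1:l}$ consistent in every term.
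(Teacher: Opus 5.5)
Your proof is correct and is essentially the paper's argument: the paper first bounds $I(\mathbf U_T^{(l)};\Theta_R^{(T)}\mid W_{1:l})$ by $I(\mathbf U_T^{(l)};\Theta_0^{(T)},\Theta_R^{(T)}\mid W_{1:l})$ (it calls this data processing, but it is the monotonicity you identify) and then applies the chain rule in the $\Theta_0^{(T)}$-first order, exactly as you do. Your second expansion only makes that monotonicity step explicit, by dropping the nonnegative term $I(\mathbf U_T^{(l)};\Theta_0^{(T)}\mid \Theta_R^{(T)},W_{1:l})$.
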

This proposition ties the SGLD trajectory back to the main decomposition. By the interaction identity, the optimization complexity in Theorem~\ref{thm:synergy_drift} is a single conditional mutual information, $\mathcal S^{(l)}+\mathcal P^{(l)}-\mathcal R^{(l)}=I(\mathbf U_T^{(l)};W_{l+1:L}\mid W_{1:l})=I(\mathbf U_T^{(l)};\Theta_R^{(T)}\mid W_{1:l})$, where $\Theta_R^{(T)}$ is the task-$T$ terminal suffix $W_{l+1:L}$. Proposition~\ref{prop:cross_task_recursion} splits it into a heritage term $H_T^{(l)}$, the dependence already present in the inherited initialization, and an increment $\Delta_T^{(l)}$, the dependence injected by task-$T$ optimization. Crucially, because $\Theta_0^{(t+1)}=\Theta_R^{(t)}$, the heritage term is not generated afresh at task $T$ but carries forward the dependent build up over all previous tasks. Corollary~\ref{cor:Cumulative information budget} makes this accumulation precise as a bound by past increments.
 
\begin{corollary}[Cumulative information budget]
\label{cor:Cumulative information budget}
Fix a layer $l$ and condition on $W_{1:l}$. For any $T\ge 2$, the heritage term admits the following cumulative-budget upper bound:
\[
H_T^{(l)} = I(\mathbf U_T^{(l)};\Theta_0^{(T)}\mid W_{1:l})
\le \sum_{t=1}^{T-1} \Delta_t^{(l)},
\]
where $\Delta_t^{(l)} := I(\mathbf U_t^{(l)};\Theta_R^{(t)}\mid \Theta_0^{(t)},W_{1:l})$ is the within-task information increment
in Proposition~\ref{prop:cross_task_recursion} applied to task $t$.
\end{corollary}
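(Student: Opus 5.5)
The plan is to unroll the Heritage decomposition of Proposition~\ref{prop:cross_task_recursion} backwards through the tasks. The link between consecutive tasks is the identity $\Theta_0^{(t+1)}=\Theta_R^{(t)}$, combined with a data-processing step. Concretely, I would prove by induction on $T\ge 2$ the one-step inequality
\[
H_T^{(l)} \;\le\; I(\mathbf U_{T-1}^{(l)};\Theta_R^{(T-1)}\mid W_{1:l}) \;\le\; H_{T-1}^{(l)}+\Delta_{T-1}^{(l)}.
\]
The second inequality is exactly Proposition~\ref{prop:cross_task_recursion} applied at task $T-1$. Telescoping then gives $H_T^{(l)}\le H_1^{(l)}+\sum_{t=1}^{T-1}\Delta_t^{(l)}$. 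The base case is $H_1^{(l)}=I(\mathbf U_1^{(l)};\Theta_0^{(1)}\mid W_{1:l})=0$, because the task-$1$ initialization is drawn independently of the data (the standard single-task SGLD convention). This yields the claim.

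For the first inequality, I would write $H_T^{(l)}=I(\mathbf U_T^{(l)};\Theta_R^{(T-1)}\mid W_{1:l})$ and show that $\mathbf U_T^{(l)}$ is produced from $\mathbf U_{T-1}^{(l)}$ and fresh randomness. The fresh randomness is $\xi_T=(D^T,\mathcal I_{T-1})$, the new task data and the buffer index set for task $T-1$. The components of $\mathbf U_T^{(l)}$ arise as follows:
\begin{itemize}
\item The old buffers $M^1,\dots,M^{T-2}$ already appear in $\mathbf U_{T-1}^{(l)}$.
\item The new buffer $M^{T-1}$ is the subset of $D^{T-1}\subset\mathbf U_{T-1}^{(l)}$ selected by $\mathcal I_{T-1}$.
\item $\mathbf U_{\mathrm{new}}^{(l)}$ is the image of $D^T$ under the fixed map $a_l$.
\end{itemize}
Hence $\mathbf U_T^{(l)}=g(\mathbf U_{T-1}^{(l)},\xi_T)$ for a measurable $g$ once $W_{1:l}$ is fixed. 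Provided $\xi_T$ is conditionally independent of $(\mathbf U_{T-1}^{(l)},\Theta_R^{(T-1)})$ given $W_{1:l}$, data processing and the chain rule give
\[
I(\mathbf U_T^{(l)};\Theta_R^{(T-1)}\mid W_{1:l})\le I(\mathbf U_{T-1}^{(l)},\xi_T;\Theta_R^{(T-1)}\mid W_{1:l}) = I(\mathbf U_{T-1}^{(l)};\Theta_R^{(T-1)}\mid W_{1:l}) + I(\xi_T;\Theta_R^{(T-1)}\mid \mathbf U_{T-1}^{(l)},W_{1:l}),
\]
and the last term vanishes under that independence.

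The main obstacle is justifying this conditional independence, since in general $W_{1:l}$ is learned from all the data, including $D^T$. I would resolve it in the way the SGLD section already sets things up. The prefix $W_{1:l}$ is held fixed (conditioned on) and only the suffix $\Theta$ is trained. Task data are independent across tasks, and the buffer indices are drawn uniformly and independently of everything else. So conditional on $W_{1:l}$, the pair $(D^T,\mathcal I_{T-1})$ has not influenced the task-$(T-1)$ trajectory. If a fully rigorous treatment is needed, I would state this explicitly as the standing frozen-prefix assumption of Section~\ref{subsec:Algorithm-Based Hierarchical Bounds}. The remaining steps are routine chain-rule manipulations.
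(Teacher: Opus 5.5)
Your proof is correct, and it reaches the bound by a different, leaner route than the paper's.

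The paper does not invoke Proposition~\ref{prop:cross_task_recursion} here. It first pushes $H_T^{(l)}$ through the memory to the full-history quantity $J_{T-1}:=I(\mathbf U_{1:T-1}^{(l)};\Theta_R^{(T-1)}\mid W_{1:l})$. It then proves a separate recursion $J_t\le J_{t-1}+\Delta_t^{(l)}$, starting from $J_0=0$. To get the recursion it expands $I(\mathbf U_{1:t}^{(l)};\Theta_0^{(t)},\Theta_R^{(t)})$ by the chain rule and discards the cross terms using two facts: $I(\mathbf U_t^{(l)};\Theta_0^{(t)}\mid\mathbf U_{1:t-1}^{(l)})=0$, and the within-task Markov property $\Theta_R^{(t)}\perp\mathbf U_{1:t-1}^{(l)}\mid(\Theta_0^{(t)},\mathbf U_t^{(l)})$.

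You instead telescope on the heritage terms themselves. You combine a one-step data-processing reduction $H_T^{(l)}\le I(\mathbf U_{T-1}^{(l)};\Theta_R^{(T-1)}\mid W_{1:l})$ with the Heritage proposition, which is pure monotonicity plus chain rule and needs no assumptions.

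On assumptions, the two routes compare as follows. Both need $(D^T,\mathcal I_{T-1})$ to be fresh given $W_{1:l}$, and both need a data-independent initialization. The paper uses the freshness only implicitly; your frozen-prefix caveat is exactly what justifies it. Your route never uses the within-task Markov property, so it would still hold for learners whose task-$t$ update also touches older data. Your explicit treatment of $\mathcal I_{T-1}$ is also cleaner than the paper's claim that $\mathcal M^{1:T-1}$ is measurable with respect to $\mathbf U_{1:T-1}^{(l)}$. That claim is not literally true, because $M^{T-1}$ also depends on the independent index set, though the Markov chain the paper needs holds anyway.

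In exchange, the paper's route produces the intermediate object $J_t$, the information the current suffix carries about the entire data history. The standalone recursion $J_t\le J_{t-1}+\Delta_t^{(l)}$ is arguably the more natural reading of a ``cumulative budget''. The final bound is the same either way.
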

Corollary~\ref{cor:Cumulative information budget} reveals that $H_T^{(l)}$ can be upper bounded by the cumulative sum of past increments. Therefore, to control the cross-task heritage $H_T^{(l)}$, it suffices to control the sequence of within-task increments $\{\Delta_t^{(l)}\}_{t<T}$. This insight reduces the problem of bounding the total complexity to controlling the within-task increment $\Delta_t^{(l)}$. Thus, our analysis now focuses on deriving a sharp, algorithm-dependent bound for general single-task update $\Delta_t^{(l)}$, which we instantiate for SGLD in the next theorem.

\begin{theorem}[Task-Wise SGLD-Based Bound]
\label{thm:Hierarchical SGLD}
Fix a split layer $l\in\{0,\ldots,L-1\}$ and condition on the base parameters $W_{1:l}$, so that the suffix network remains nonempty. Assume the loss is $\sigma$-subgaussian as in Theorem~\ref{thm:synergy_drift}. For task $t$ trained by SGLD for $R_t$ steps, we have
\begin{align*}
\big|\mathrm{gen}_{W^{(t)}}\big|
\le
(t-1)\sqrt{2\sigma^2 \mathcal{K}_t^{(l)}}
+
\sqrt{\frac{2\sigma^2}{N_{\mathrm{eff},t}}
\Big(\mathcal{C}_t^{(l)}+\sum_{s=1}^t \sum_{r=1}^{R_s}\mathbb E\!\left[\frac12\log\det\!\Big(I+\frac{\eta_{s,r}^2}{\tau_{s,r}^2}M^{(l)}_{s,r}\Big)\right]\Big)},
\end{align*}
where $M^{(l)}_{s,r}=\mathbb E\!\big[G_{s,r}G_{s,r}^\top\,\big|\,\Theta^{(s)}_{r-1},W_{1:l}\big]$, and $\mathcal{K}_t^{(l)}$, $\mathcal{C}_t^{(l)}$, and $N_{\mathrm{eff},t}$ denote the corresponding quantities of Theorem~\ref{thm:synergy_drift} with the task horizon $T$ replaced by $t$.
\end{theorem}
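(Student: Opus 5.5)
The plan is to start from Theorem~\ref{thm:synergy_drift} applied with horizon $T$ replaced by $t$. This leaves the drift term $(t-1)\sqrt{2\sigma^2\mathcal K_t^{(l)}}$ untouched and reduces everything to bounding the optimization complexity $\mathcal S_t^{(l)}+\mathcal P_t^{(l)}-\mathcal R_t^{(l)}$. By the interaction identity recalled after Proposition~\ref{prop:cross_task_recursion}, this complexity equals the single conditional mutual information $I(\mathbf U_t^{(l)};\Theta_{R_t}^{(t)}\mid W_{1:l})$. Proposition~\ref{prop:cross_task_recursion} splits it as $H_t^{(l)}+\Delta_t^{(l)}$, and Corollary~\ref{cor:Cumulative information budget} bounds $H_t^{(l)}\le\sum_{s<t}\Delta_s^{(l)}$. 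Hence the complexity is at most $\sum_{s=1}^t\Delta_s^{(l)}$, where $\Delta_s^{(l)}=I(\mathbf U_s^{(l)};\Theta_{R_s}^{(s)}\mid\Theta_0^{(s)},W_{1:l})$. Since $\mathcal C_t^{(l)}$ enters additively, the theorem follows once each within-task increment is shown to satisfy
\[
\Delta_s^{(l)}\le\sum_{r=1}^{R_s}\mathbb E\Big[\tfrac12\log\det\Big(I+\tfrac{\eta_{s,r}^2}{\tau_{s,r}^2}M_{s,r}^{(l)}\Big)\Big],
\]
because the square root is monotone.

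For a fixed task $s$, I would run the standard SGLD chain-rule argument, conditioning on $W_{1:l}$ throughout. Given $W_{1:l}$ and $\mathbf U_s^{(l)}$, the iterates $(\Theta_0,\dots,\Theta_{R_s})$ form a Markov chain. The data processing inequality together with the chain rule gives
\[
I(\mathbf U_s;\Theta_{R_s}\mid\Theta_0)\le I(\mathbf U_s;\Theta_{1:R_s}\mid\Theta_0)=\sum_{r=1}^{R_s}I(\mathbf U_s;\Theta_r\mid\Theta_{0:r-1}),
\]
where the first step uses that $\Theta_{R_s}$ is a function of the trajectory. The Markov property lets each summand be read as a conditional MI given $\Theta_{r-1}$, up to the usual step of conditioning on the full past when forming reference measures. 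Each summand equals $I(\mathbf U_s,B_r;\Theta_r\mid\Theta_{0:r-1})$ minus a nonnegative term, so it is bounded by the information the noisy increment $\eta_rG_r+N_r$ carries about its (data, minibatch) source. Conditional on $\Theta_{r-1}$, the noise $N_r\sim\mathcal N(0,\tau_r^2I)$ is independent of $\eta_rG_r$, so this is an additive Gaussian noise channel with input $\eta_rG_r$.

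The key step is the Gaussian channel capacity bound. For an additive white Gaussian channel with input $X$ satisfying $\mathbb E[XX^\top]=\Sigma$, the mutual information is at most $\tfrac12\log\det(I+\Sigma/\tau^2)$, since Gaussian inputs maximize output entropy at fixed second moment. With $X=\eta_rG_r$ and $\Sigma=\eta_r^2M_{s,r}^{(l)}$, where the second moment is taken conditionally on $\Theta_{r-1},W_{1:l}$, this gives the per-step term. Taking the outer expectation over $\Theta_{r-1}$ turns conditional MI into the stated $\mathbb E[\cdot]$. Summing over $r$ and then over $s\le t$ yields the double sum.

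I expect the main obstacle to be the conditioning bookkeeping. The entropy-maximization bound must be applied to $h(\Theta_r\mid\Theta_{0:r-1})$ using the uncentered second moment $M_{s,r}^{(l)}$. Using the uncentered moment is legitimate because it dominates the covariance and $\log\det$ is monotone, but the conditioning on $\Theta_{r-1}$ alone versus $\Theta_{0:r-1}$ must be made consistent. I would first handle this by bounding the conditional entropy given the full past by the entropy given $\Theta_{r-1}$, since conditioning reduces entropy. I would then subtract $h(N_r)=\tfrac{d}{2}\log(2\pi e\tau_r^2)$ exactly, using that $N_r$ is independent of everything else. A secondary point is checking that the index convention $\mathbf U_s$ versus $\mathbf U_t$ in Corollary~\ref{cor:Cumulative information budget} matches, so that summing past increments is valid. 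I would take that corollary as given.
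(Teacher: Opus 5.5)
Your proposal is correct and follows the same route as the paper's proof. Both apply Theorem~\ref{thm:synergy_drift} at horizon $t$ and identify the SPS combination with $I(\mathbf U_t^{(l)};\Theta_{R_t}^{(t)}\mid W_{1:l})$. Both bound it by $\sum_{s\le t}\Delta_s^{(l)}$ via Proposition~\ref{prop:cross_task_recursion} and Corollary~\ref{cor:Cumulative information budget}, then chain-rule each increment along the trajectory and reduce to conditioning on $\Theta_{r-1}^{(s)}$ using ``conditioning reduces entropy'' plus the Markov property. Both close with the second-moment maximum-entropy bound minus $h(N_{s,r})$.

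The differences are minor. You enlarge to the pair $(\mathbf U_s^{(l)},B_{s,r})$ by the chain rule, whereas the paper uses the Markov chain $\mathbf U_s^{(l)}\to B_{s,r}\to\Theta_r^{(s)}$. Your bookkeeping, which conditions on $\Theta_{r-1}^{(s)}$ alone so that $M_{s,r}^{(l)}$ is exactly the relevant second moment, is slightly cleaner than the paper's, which inconsistently keeps $\Theta_0^{(s)}$ in the conditioning.
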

Theorem~\ref{thm:Hierarchical SGLD} characterizes the generalization dynamics of SGLD-based CL algorithms from a hierarchical perspective. It operationalizes only the optimization branch: it reduces the suffix information complexity to a cumulative trajectory budget of log-determinants. This budget is governed by the ratio of learning rate to injected noise $\eta^2_{s,r}/\tau^2_{s,r}$ and the conditional second-moment matrix $M^{(l)}_{s,r}$. Unlike standard SGLD bounds that depend only on gradient covariance \cite{negrea2019information}, $M^{(l)}_{s,r}$ explicitly captures the systematic direction of the gradient update. The exact budget is high-dimensional; the next subsection expands it to first order to expose what about replay drives it and to obtain measurable diagnostics.

\subsection{Gradient-Moment Diagnostics of Task Interaction}
\label{sec:Gradient-Moment Diagnostics}

While Theorem~\ref{thm:Hierarchical SGLD} bounds the optimization term through a cumulative log-determinant budget, the mechanism by which replay affects this budget remains implicit. We therefore perform a second-order expansion that separates per-step cost into two orthogonal forces: (i) Optimization Instability, driven by gradient covariance; and (ii) Interaction Cost, driven by the alignment of old-task and new-task mean gradients in a sensitivity-aware metric. The resulting alignment quantities are motivated proxies for task interaction in the SGLD model. This reveals the physical meaning of ``Synergy'' in SGLD: it is not merely a reduction in variance, but a constructive interaction of mean descent directions in a curvature-induced metric.

\textbf{Setup.} Throughout this subsection, we fix a split layer $l$ and condition on the current $(\Theta^{(s)}_{r-1},W_{1:l})$. For each task $s$ and step $r$, let $\alpha_{s,r} := \eta_{s,r}^2/\tau_{s,r}^2$ be the signal-to-noise ratio. Recall that the second moment matrix $M^{(l)}_{s,r}$ is defined over the gradient mixture $G_{s,r} = \lambda_{s,\mathrm{old}}G^{\mathrm{old}}_{s,r} + \lambda_{s,\mathrm{new}}G^{\mathrm{new}}_{s,r}$.

\begin{lemma}[Second-moment split]
\label{lem:second_moment_split}
Define the conditional mean and covariance
\[
\mu^{(l)}_{s,r}\!:=\!\mathbb E[G_{s,r}\mid\! \Theta^{(s)}_{r-1},W_{1:l}],\!
V^{(l)}_{s,r}\!:=\!\mathrm{Cov}(G_{s,r}\mid\! \Theta^{(s)}_{r-1},W_{1:l}).
\]
Then
\[
M^{(l)}_{s,r}
=
V^{(l)}_{s,r}+\mu^{(l)}_{s,r}\mu^{(l)\top}_{s,r}.
\]
\end{lemma}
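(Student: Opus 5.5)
The identity $M^{(l)}_{s,r}=V^{(l)}_{s,r}+\mu^{(l)}_{s,r}\mu^{(l)\top}_{s,r}$ is the conditional version of the elementary fact that a second moment equals covariance plus the outer product of the mean. We work throughout under the conditional law given the $\sigma$-field $\mathcal F:=\sigma(\Theta^{(s)}_{r-1},W_{1:l})$. Given $\mathcal F$, the only remaining randomness in $G_{s,r}$ comes from drawing the mini-batches $B^{\mathrm{old}}_{s,r}$ and $B^{\mathrm{new}}_{s,r}$, together with the data and buffer variables not fixed by $\mathcal F$. To keep every expectation well defined, we assume $\mathbb E[\|G_{s,r}\|^2\mid\mathcal F]<\infty$. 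This is the same integrability already implicit in the definition of $M^{(l)}_{s,r}$ in Theorem~\ref{thm:Hierarchical SGLD}, and it makes $\mu^{(l)}_{s,r}$ finite by the conditional Jensen inequality.

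The computation has three steps.

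\textbf{Step 1.} Write $G_{s,r}=(G_{s,r}-\mu^{(l)}_{s,r})+\mu^{(l)}_{s,r}$ and expand the outer product:
\[
G_{s,r}G_{s,r}^\top=(G_{s,r}-\mu^{(l)}_{s,r})(G_{s,r}-\mu^{(l)}_{s,r})^\top+(G_{s,r}-\mu^{(l)}_{s,r})\mu^{(l)\top}_{s,r}+\mu^{(l)}_{s,r}(G_{s,r}-\mu^{(l)}_{s,r})^\top+\mu^{(l)}_{s,r}\mu^{(l)\top}_{s,r}.
\]

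\textbf{Step 2.} Take $\mathbb E[\,\cdot\mid\mathcal F]$ of each term.
\begin{itemize}
\item The first term gives $V^{(l)}_{s,r}$ by definition of the conditional covariance.
\item In the two cross terms, $\mu^{(l)}_{s,r}$ is $\mathcal F$-measurable, so it can be pulled out of the conditional expectation. What remains is $\mathbb E[G_{s,r}-\mu^{(l)}_{s,r}\mid\mathcal F]=0$, so both cross terms vanish.
\item The last term is $\mathcal F$-measurable and passes through the expectation unchanged.
\end{itemize}

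\textbf{Step 3.} Summing the four contributions gives the claimed identity almost surely.

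The one point that needs care is that $\mu^{(l)}_{s,r}$ must be measurable with respect to the conditioning variables, so that it can be treated as a constant inside $\mathbb E[\cdot\mid\mathcal F]$. This holds because it is defined as a conditional expectation given exactly $(\Theta^{(s)}_{r-1},W_{1:l})$. Nothing in the argument uses the mixture structure $G_{s,r}=\lambda_{s,\mathrm{old}}G^{\mathrm{old}}_{s,r}+\lambda_{s,\mathrm{new}}G^{\mathrm{new}}_{s,r}$. That structure will matter only afterwards, when $\mu^{(l)}_{s,r}$ and $V^{(l)}_{s,r}$ are expanded into their old-task and new-task parts to expose the alignment term.
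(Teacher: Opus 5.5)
Your proof is correct and takes the same route as the paper. The paper gives no standalone proof of this lemma; inside the proofs of Proposition~\ref{prop:logdet_split} and Corollary~\ref{Scalar diagnostic decomposition} it simply invokes $\mathbb E[XX^\top]=\mathrm{Cov}(X)+\mathbb E[X]\mathbb E[X]^\top$ conditionally on $(\Theta^{(s)}_{r-1},W_{1:l})$, and your expansion with vanishing cross terms is exactly the derivation of that identity. You are also right that the old/new mixture structure of $G_{s,r}$ plays no role here; the paper uses it only afterwards, when it splits $\mu^{(l)}_{s,r}$ and $V^{(l)}_{s,r}$ into replay and current-task parts.
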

Lemma~\ref{lem:second_moment_split} separates stochasticity from signal: the covariance term $V^{(l)}_{s,r}$ quantifies gradient instability when learning task $s$, while the rank-one term $\mu^{(l)}_{s,r}\mu^{(l)\top}_{s,r}$ encodes the systematic descent direction, representing the task interaction.

\begin{proposition}[Instability--mean decomposition]
\label{prop:logdet_split}
Let $ A_{s,r}^{(l)}:=I+\alpha_{s,r}V_{s,r}^{(l)}\succ 0$ with $\alpha_{s,r}=\eta_{s,r}^2/\tau_{s,r}^2$. Then
\begin{align*}
\frac12\log\det\!\big(I+\alpha_{s,r}M_{s,r}^{(l)}\big)
=
\underbrace{\frac12\log\det\!\big(A_{s,r}^{(l)}\big)}_{\textbf{instability}}
+
\underbrace{\frac12\log\!\Big(1+\alpha_{s,r}\,\mu_{s,r}^{(l)\top}\big(A_{s,r}^{(l)}\big)^{-1}\mu_{s,r}^{(l)}\Big)}_{\textbf{interaction cost}}.
\end{align*}
\end{proposition}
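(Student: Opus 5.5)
The plan is to combine Lemma~\ref{lem:second_moment_split} with the matrix determinant lemma. First, substitute $M^{(l)}_{s,r}=V^{(l)}_{s,r}+\mu^{(l)}_{s,r}\mu^{(l)\top}_{s,r}$ from Lemma~\ref{lem:second_moment_split} and regroup:
\[
I+\alpha_{s,r}M^{(l)}_{s,r}=A^{(l)}_{s,r}+\alpha_{s,r}\,\mu^{(l)}_{s,r}\mu^{(l)\top}_{s,r}.
\]
This writes the matrix as a rank-one perturbation of $A^{(l)}_{s,r}=I+\alpha_{s,r}V^{(l)}_{s,r}$.

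Next, verify that $A^{(l)}_{s,r}\succ 0$, so that its inverse and log-determinant are defined. This holds because $V^{(l)}_{s,r}$ is a conditional covariance matrix, hence positive semidefinite, and $\alpha_{s,r}=\eta_{s,r}^2/\tau_{s,r}^2\ge 0$. Every eigenvalue of $A^{(l)}_{s,r}$ is therefore at least $1$, and in particular $\det A^{(l)}_{s,r}\ge 1>0$.

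Then apply the matrix determinant lemma, $\det(A+uv^\top)=\det(A)\,(1+v^\top A^{-1}u)$ for invertible $A$. With $u=\alpha_{s,r}\mu^{(l)}_{s,r}$ and $v=\mu^{(l)}_{s,r}$ this gives
\[
\det\!\big(I+\alpha_{s,r}M^{(l)}_{s,r}\big)=\det\!\big(A^{(l)}_{s,r}\big)\Big(1+\alpha_{s,r}\,\mu^{(l)\top}_{s,r}\big(A^{(l)}_{s,r}\big)^{-1}\mu^{(l)}_{s,r}\Big).
\]
If self-containment is wanted, the lemma follows by factoring $A+uv^\top=A^{1/2}\big(I+A^{-1/2}uv^\top A^{-1/2}\big)A^{1/2}$. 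The matrix in the middle is the identity plus a rank-one matrix, whose only nontrivial eigenvalue is $1+v^\top A^{-1}u$.

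The second factor is at least $1$, since $(A^{(l)}_{s,r})^{-1}\succ 0$. Both factors are therefore positive, and taking $\frac12\log$ of the product yields the stated sum of the instability and interaction-cost terms.

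There is no real obstacle here. The only care needed is to state the positive-definiteness of $A^{(l)}_{s,r}$, which justifies the inverse and the logarithms, and to note that all quantities are conditional on $(\Theta^{(s)}_{r-1},W_{1:l})$, so the identity holds pointwise for each conditioning value.
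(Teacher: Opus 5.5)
Your proposal is correct and follows the paper's proof: substitute $M^{(l)}_{s,r}=V^{(l)}_{s,r}+\mu^{(l)}_{s,r}\mu^{(l)\top}_{s,r}$ to write $I+\alpha_{s,r}M^{(l)}_{s,r}$ as a rank-one perturbation of $A^{(l)}_{s,r}$, then apply the matrix determinant lemma and take $\frac12\log$. The paper uses the symmetric factor $u=\sqrt{\alpha_{s,r}}\,\mu^{(l)}_{s,r}$ where you use $u=\alpha_{s,r}\mu^{(l)}_{s,r}$, $v=\mu^{(l)}_{s,r}$, which is only a cosmetic difference; your positivity checks are a harmless addition.
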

Proposition~\ref{prop:logdet_split} splits each step's budget into two sources. The instability term is governed by the gradient covariance $V_{s,r}^{(l)}$; a large $V_{s,r}^{(l)}$ marks a sharp region of the landscape, which in CL typically means the replay gradients are inconsistent with the current parameters, resulting in catastrophic forgetting. The interaction cost is the mean-gradient energy, which captures the task alignment but is filtered by $(A_{s,r}^{(l)})^{-1}$; this matrix suppresses the mean gradient $\mu_{s,r}^{(l)}$ in directions where gradients are most unstable. This filtering has a consequence worth stating: when the instability term $\frac12\log\det\!\big(A_{s,r}^{(l)}\big)$ is large, the filter shrinks and the second term falls, yet the reduction is not a gain. It implies that the algorithm discards usable gradient signal along sharp directions rather than making progress, remaining stuck in a sharp region. Consequently, synergistic interaction requires that the two tasks' gradients align within flat, stable directions, which the Euclidean inner product does not distinguish. The next definition supplies a metric that does.
\begin{definition}[Sensitivity metric]
\label{def:metric_alignment}
Define the sensitivity metric
\[
H^{(l)}_{s,r}:=\big(A^{(l)}_{s,r}\big)^{-1}\succ 0.
\]
For vectors $u,v\neq 0$, define the $H^{(l)}_{s,r}$-inner product and norm by
$\langle u,v\rangle_{H}:=u^\top H^{(l)}_{s,r}v$ and $\|u\|_H:=\sqrt{\langle u,u\rangle_H}$,
and the induced cosine
\[
\cos_H(u,v):=\frac{\langle u,v\rangle_H}{\|u\|_H\|v\|_H}\in[-1,1].
\]
\end{definition}
Geometrically, $H_{s,r}^{(l)}$ induces a stability-aware Mahalanobis metric: it discounts directions of high curvature while prioritizing flat subspaces. This is consistent with the intuition above.
\begin{corollary}[Alignment of old/new gradients]
\label{cor:mean_energy_alignment}
Let $\mu^{\mathrm{old},(l)}_{s,r}:=\mathbb E[G^{\mathrm{old}}_{s,r}\mid \Theta^{(s)}_{r-1},W_{1:l}],
\mu^{\mathrm{new},(l)}_{s,r}:=\mathbb E[G^{\mathrm{new}}_{s,r}\mid \Theta^{(s)}_{r-1},W_{1:l}]$, so that $\mu^{(l)}_{s,r}=\lambda_{s,\mathrm{old}}\mu^{\mathrm{old},(l)}_{s,r}+\lambda_{s,\mathrm{new}}\mu^{\mathrm{new},(l)}_{s,r}.$
Then under the sensitivity metric $H^{(l)}_{s,r}$ in Definition~\ref{def:metric_alignment},
\begin{align*}
\mu^{(l)\top}_{s,r}H^{(l)}_{s,r}\mu^{(l)}_{s,r}
=
\lambda_{s,\mathrm{old}}^2\big\|\mu^{\mathrm{old},(l)}_{s,r}\big\|_{H}^2
+\lambda_{s,\mathrm{new}}^2\big\|\mu^{\mathrm{new},(l)}_{s,r}\big\|_{H}^2+
2\lambda_{s,\mathrm{old}}\lambda_{s,\mathrm{new}}
\big\|\mu^{\mathrm{old},(l)}_{s,r}\big\|_{H}
\big\|\mu^{\mathrm{new},(l)}_{s,r}\big\|_{H}
\cos_{H}(\mu^{\mathrm{old},(l)}_{s,r},\mu^{\mathrm{new},(l)}_{s,r}),
\end{align*}
where the information-geometric alignment
\[
\mathrm{Align}=\cos_{H}\Big(\mu^{\mathrm{old},(l)}_{s,r},\mu^{\mathrm{new},(l)}_{s,r}\Big).
\]
\end{corollary}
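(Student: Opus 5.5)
This identity is a direct algebraic expansion: linearity of conditional expectation gives the mean decomposition, and bilinearity of the $H$-inner product expands the quadratic form. The plan has three steps.

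First, since $G_{s,r}=\lambda_{s,\mathrm{old}}G^{\mathrm{old}}_{s,r}+\lambda_{s,\mathrm{new}}G^{\mathrm{new}}_{s,r}$ with deterministic weights $\lambda_{s,\mathrm{old}},\lambda_{s,\mathrm{new}}$, linearity of conditional expectation given $(\Theta^{(s)}_{r-1},W_{1:l})$ yields $\mu^{(l)}_{s,r}=\lambda_{s,\mathrm{old}}\mu^{\mathrm{old},(l)}_{s,r}+\lambda_{s,\mathrm{new}}\mu^{\mathrm{new},(l)}_{s,r}$. This needs only integrability of the minibatch gradients, which is implicit in Lemma~\ref{lem:second_moment_split}.

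Second, $H^{(l)}_{s,r}=(A^{(l)}_{s,r})^{-1}$ is symmetric positive definite: $A^{(l)}_{s,r}=I+\alpha_{s,r}V^{(l)}_{s,r}$ with $V^{(l)}_{s,r}\succeq0$ and $\alpha_{s,r}>0$, so $A^{(l)}_{s,r}\succeq I$. Hence $\langle\cdot,\cdot\rangle_H$ is a genuine symmetric inner product. Expanding $\mu^\top H\mu$ bilinearly gives
\[
\lambda_{s,\mathrm{old}}^2\|\mu^{\mathrm{old},(l)}_{s,r}\|_H^2+\lambda_{s,\mathrm{new}}^2\|\mu^{\mathrm{new},(l)}_{s,r}\|_H^2+2\lambda_{s,\mathrm{old}}\lambda_{s,\mathrm{new}}\langle\mu^{\mathrm{old},(l)}_{s,r},\mu^{\mathrm{new},(l)}_{s,r}\rangle_H .
\]
Symmetry of $H$ is what merges the two cross terms into the single factor $2\langle\cdot,\cdot\rangle_H$.

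Third, rewrite the cross term via Definition~\ref{def:metric_alignment} as $\langle u,v\rangle_H=\|u\|_H\|v\|_H\cos_H(u,v)$. The claim $\cos_H\in[-1,1]$ follows from Cauchy--Schwarz in the $H$-inner product.

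The only delicate point is degeneracy. If $\mu^{\mathrm{old},(l)}_{s,r}=0$ or $\mu^{\mathrm{new},(l)}_{s,r}=0$, the cosine is undefined. In that case the cross term vanishes, and the identity holds with any convention for $\mathrm{Align}$, for instance $0$. Otherwise no obstacle remains. One could also note that plugging the identity into Proposition~\ref{prop:logdet_split} shows that the interaction cost is increasing in $\mathrm{Align}$.
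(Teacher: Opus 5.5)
Your proposal is correct and matches the paper's proof essentially step for step: substitute $\mu^{(l)}_{s,r}=\lambda_{s,\mathrm{old}}\mu^{\mathrm{old},(l)}_{s,r}+\lambda_{s,\mathrm{new}}\mu^{\mathrm{new},(l)}_{s,r}$, expand $\|\cdot\|_H^2$ bilinearly, and rewrite the cross term as $\|u\|_H\|v\|_H\cos_H(u,v)$. Two points you make explicit are left implicit in the paper: that $H^{(l)}_{s,r}$ is symmetric positive definite, which justifies merging the cross terms, and how to treat the case where one mean gradient is zero and $\cos_H$ is undefined.
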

Corollary~\ref{cor:mean_energy_alignment} gives the interaction term $\mathcal R^{(l)}$ of Theorem~\ref{thm:synergy_drift} to a concrete geometric reading: whether old and new task gradients align determines whether the per-step budget is small or large. When the two gradients align within the stable subspaces identified by $H$ (constructive synergy), they reinforce a common descent direction, the mean-gradient energy $\|\mu\|_H$ is spent on progress, and the budget stays low. When they conflict (interference), the updates fight rather than reinforce: $\|\mu\|_H$ and the instability factor $A_{s,r}^{(l)}$ remain large, the optimization fails to settle, and the cumulative budget surges. Alignment and interference are thus the two regimes that drive the generalization cost down or up, respectively.

This sensitivity-aware alignment is related to the gradient-projection family---GEM, A-GEM, and orthogonal projection \cite{lopez2017gradient,chaudhry2018efficient,saha2021gradient}: both address the conflict between old- and current-task gradients, but ours differs in three concrete ways. First, the gradients compared here are the replay and current mini-batch gradients of the running optimizer, not gradients stored on a fixed past subspace. Second, alignment is read in the sensitivity metric $H_{s,r}^{(l)}=(I+\alpha_{s,r}V_{s,r}^{(l)})^{-1}$ rather than the Euclidean inner product, so unstable directions are discounted instead of all coordinates being weighted equally. Third, the quantity is diagnostic and feeds the soft, per-step mixing rule of Corollary~\ref{cor:Optimal mixing}, not a hard orthogonality constraint on the update. The alignment cosine is therefore a reading of the bound rather than a projection imposed on training.

\begin{figure*}[t]
\centering
\begin{minipage}[c]{0.24\linewidth}
    \centering
    \textbf{Log-Det Budget}
\end{minipage}
\hfill
\begin{minipage}[c]{0.24\linewidth}
    \centering
    \textbf{Alignment $\cos_H$}
\end{minipage}
\hfill
\begin{minipage}[c]{0.24\linewidth}
    \centering
    \textbf{Forgetting Dynamics}
\end{minipage}
\hfill
\begin{minipage}[c]{0.24\linewidth}
    \centering
    \textbf{Statistics over 20 seeds}
\end{minipage}

\begin{minipage}[c]{0.24\linewidth}
    \centering
    \subfloat[MNIST+\label{subfig:main1_mnistpos_logdet}]{%
        \includegraphics[width=\linewidth]{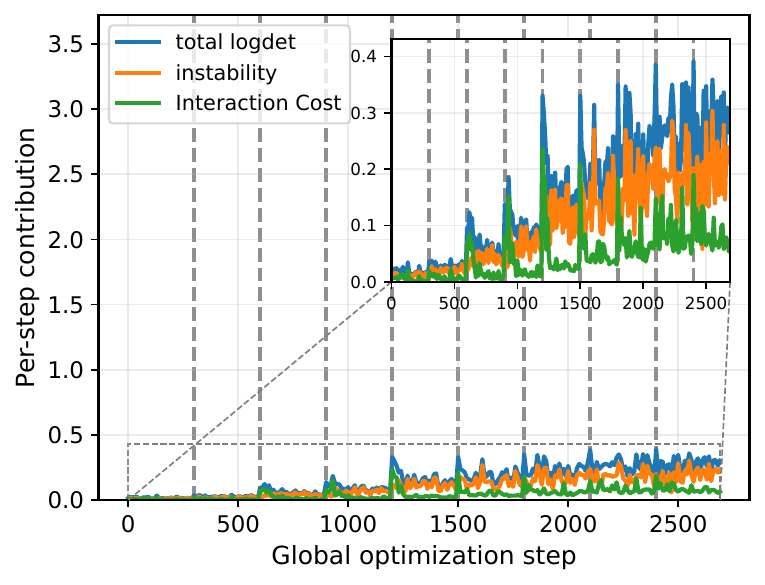}%
    }
\end{minipage}
\hfill
\begin{minipage}[c]{0.24\linewidth}
    \centering
    \subfloat[MNIST+\label{subfig:main3_mnistpos_alignment}]{%
        \includegraphics[width=\linewidth]{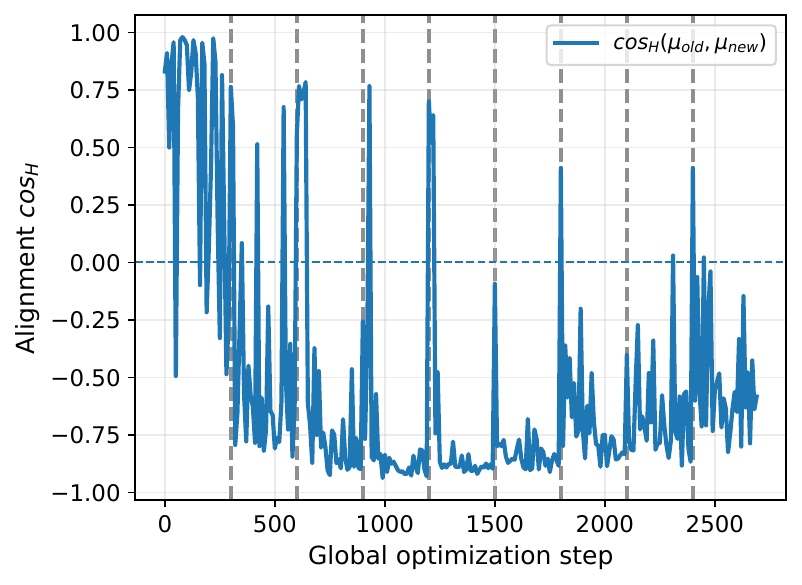}%
    }
\end{minipage}
\hfill
\begin{minipage}[c]{0.24\linewidth}
    \centering
    \subfloat[MNIST+\label{subfig:main5_mnistpos_forgetting}]{%
        \includegraphics[width=\linewidth]{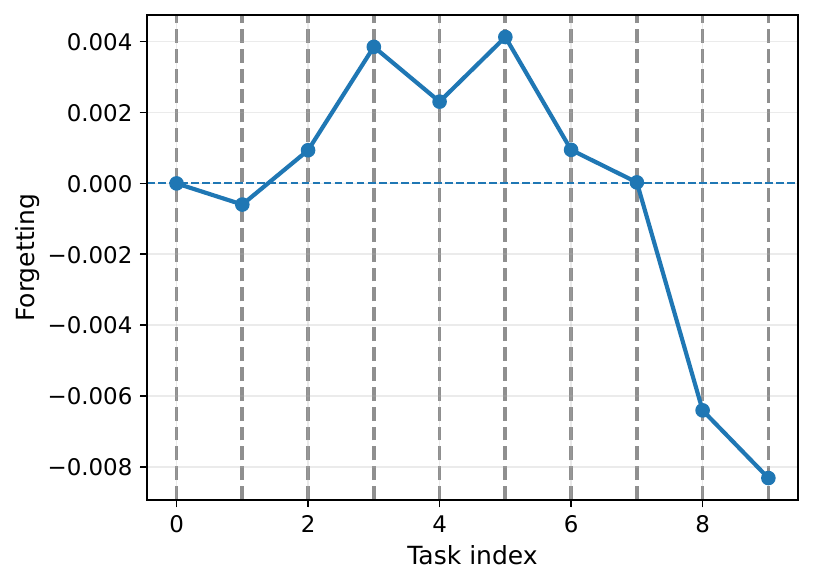}%
    }
\end{minipage}
\hfill
\begin{minipage}[c]{0.24\linewidth}
    \centering
    \subfloat[Alignment Signal $\cos_H$\label{subfig:main7_box_alignment}]{%
        \includegraphics[width=\linewidth]{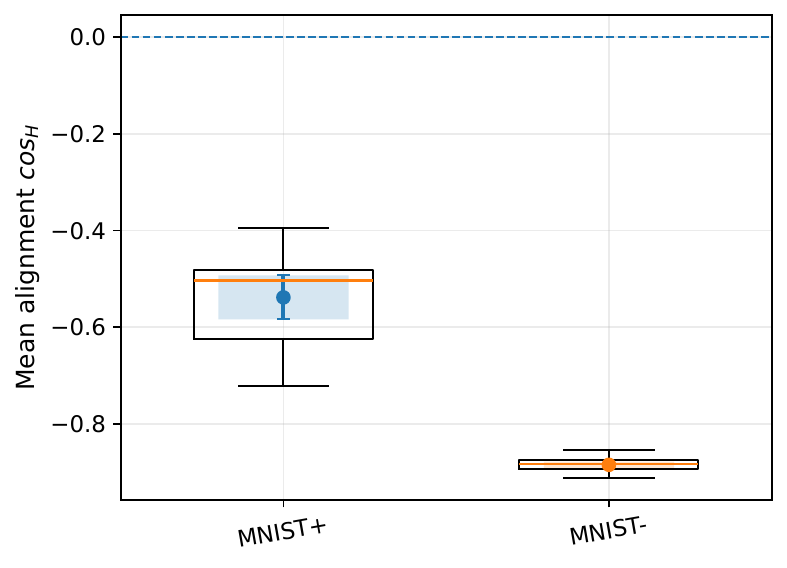}%
    }
\end{minipage}

\vspace{0.1cm}

\begin{minipage}[c]{0.24\linewidth}
    \centering
    \subfloat[MNIST-\label{subfig:main2_mnistneg_logdet}]{%
        \includegraphics[width=\linewidth]{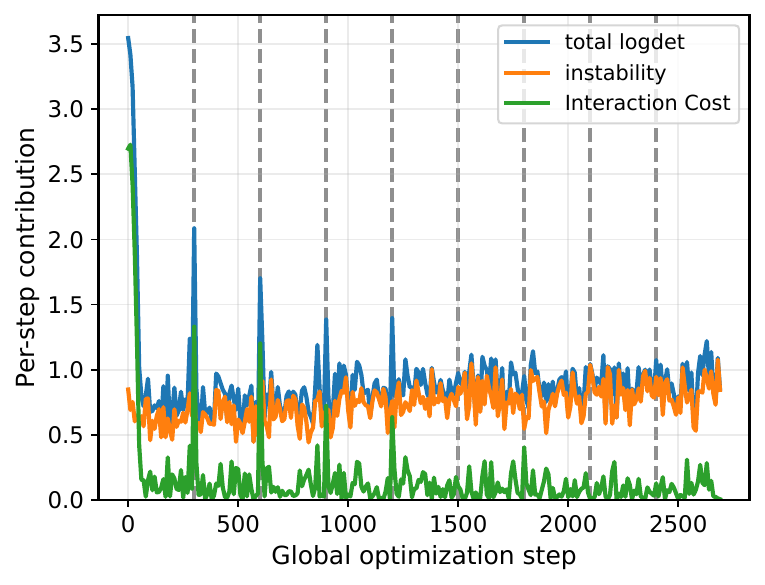}%
    }
\end{minipage}
\hfill
\begin{minipage}[c]{0.24\linewidth}
    \centering
    \subfloat[MNIST-\label{subfig:main4_mnistneg_alignment}]{%
        \includegraphics[width=\linewidth]{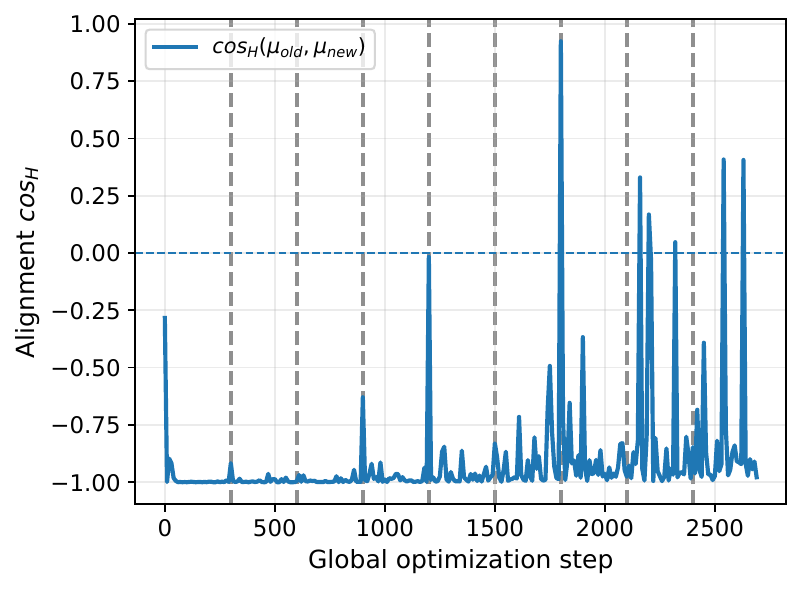}%
    }
\end{minipage}
\hfill
\begin{minipage}[c]{0.24\linewidth}
    \centering
    \subfloat[MNIST- \textbf{100$\times$y-axis}\label{subfig:main6_mnistneg_forgetting}]{%
        \includegraphics[width=\linewidth]{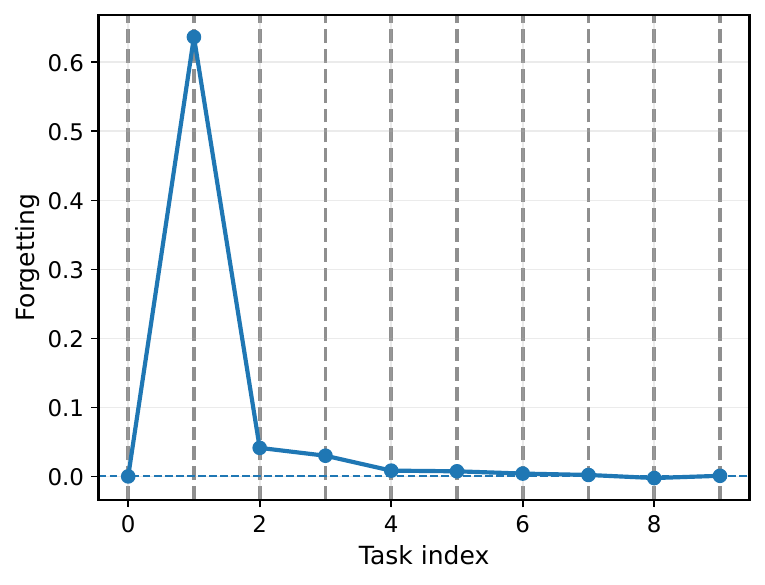}%
    }
\end{minipage}
\hfill
\begin{minipage}[c]{0.24\linewidth}
    \centering
    \subfloat[Budget ratio under $\lambda^{\star}$\label{subfig:main8_lambda_star}]{%
        \includegraphics[width=\linewidth]{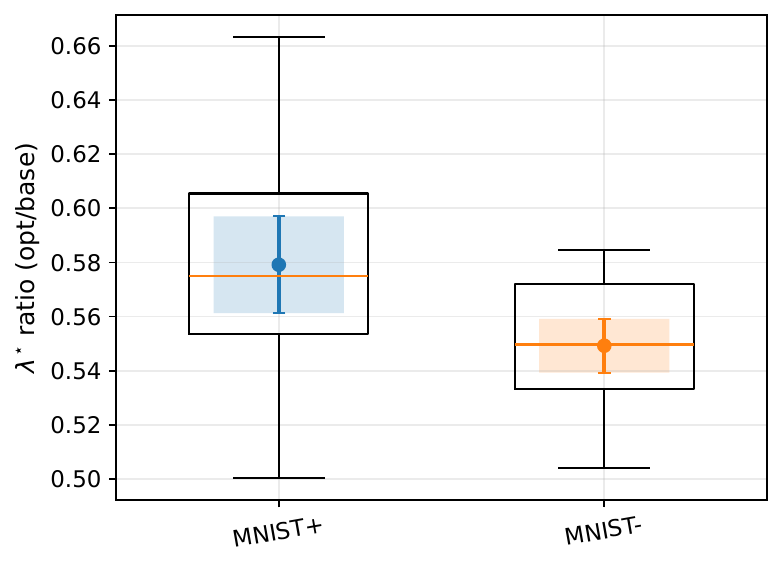}%
    }
\end{minipage}
\caption{
Decoupling Generalization Dynamics: Synergy vs. Interference on MNIST. Vertical dashed lines indicate task transitions.
Top (Rotate-MNIST): Gradient alignment ($\cos_H$) stays well above its interference-regime level (\ref{subfig:main3_mnistpos_alignment}), accompanied by stable retention (\ref{subfig:main5_mnistpos_forgetting}) and a low trajectory budget (\ref{subfig:main1_mnistpos_logdet}).
Bottom (Flip-MNIST): Persistent anti-alignment ($\cos_H \approx -1$, \ref{subfig:main4_mnistneg_alignment}) coincides with a sharp growth of the trajectory budget (\ref{subfig:main2_mnistneg_logdet}) and the onset of catastrophic forgetting (\ref{subfig:main6_mnistneg_forgetting}). (\ref{subfig:main7_box_alignment}) $\cos_H$ reliably distinguishes synergy from interference. (\ref{subfig:main8_lambda_star}) The one-step mixing coefficient $\lambda^{\star}$ (Corollary~\ref{cor:Optimal mixing}) reduces the local per-step budget ratio relative to fixed mixing, with a larger effect under interference; we present $\lambda^\star$ only as a local diagnostic/control signal, not as a global replay policy.}
\label{fig:mnist_dynamics}
\end{figure*}

\subsection{Interpretable Diagnostics}
We conclude our theoretical analysis by translating the trajectory budget into practical scalar signals and a local sensitivity-weighted mixing rule. First, a first-order relaxation of the budget yields decomposable diagnostics:
\begin{corollary}[Scalar diagnostic decomposition]
\label{Scalar diagnostic decomposition}
Conditioning on $(\Theta^{(s)}_{r-1},W_{1:l})$, the per-step trajectory budget admits the scalar upper bound
\begin{align*}
\sum_{s=1}^t\sum_{r=1}^{R_s}\mathbb  E\left[\frac12\log\det\Big(I+\alpha_{s,r}M^{(l)}_{s,r}\Big)\right]
\le
\frac12\sum_{s=1}^t\sum_{r=1}^{R_s}\alpha_{s,r}\,\mathbb E\Big[\mathrm{tr}\big(V^{(l)}_{s,r}\big)+\|\mu^{(l)}_{s,r}\|^2\Big].
\end{align*}
Moreover, the two scalar components admit the explicit decompositions $\mathrm{tr}\big(V^{(l)}_{s,r}\big)=
\lambda_{s,\mathrm{old}}^2\mathrm{tr}\big(\Sigma^{\mathrm{old},(l)}_{s,r}\big)
+\lambda_{s,\mathrm{new}}^2\mathrm{tr}\big(\Sigma^{\mathrm{new},(l)}_{s,r}\big),$ and $\|\mu^{(l)}_{s,r}\|^2 = \lambda_{s,\mathrm{old}}^2\|\mu^{\mathrm{old},(l)}_{s,r}\|^2 +\lambda_{s,\mathrm{new}}^2\|\mu^{\mathrm{new},(l)}_{s,r}\|^2 +2\lambda_{s,\mathrm{old}}\lambda_{s,\mathrm{new}}
\langle\mu^{\mathrm{old},(l)}_{s,r},\mu^{\mathrm{new},(l)}_{s,r}\rangle$.
\end{corollary}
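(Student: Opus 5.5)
The plan is to apply the elementary inequality $\log(1+x)\le x$ to each eigenvalue of $\alpha_{s,r}M^{(l)}_{s,r}$, which turns every log-determinant into a trace. We then use Lemma~\ref{lem:second_moment_split} to split that trace into a covariance part and a mean part. Finally, each part is expanded along the old/new mixture $G_{s,r}=\lambda_{s,\mathrm{old}}G^{\mathrm{old}}_{s,r}+\lambda_{s,\mathrm{new}}G^{\mathrm{new}}_{s,r}$.

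\textbf{Step 1 (log-det to trace).} Fix $(s,r)$ and condition on $(\Theta^{(s)}_{r-1},W_{1:l})$. The matrix $M^{(l)}_{s,r}$ is a conditional second moment, so it is symmetric positive semidefinite. Denote its eigenvalues by $\nu_k\ge 0$. Then
\[
\tfrac12\log\det\big(I+\alpha_{s,r}M^{(l)}_{s,r}\big)=\tfrac12\sum_k\log(1+\alpha_{s,r}\nu_k)\le \tfrac12\alpha_{s,r}\,\mathrm{tr}\big(M^{(l)}_{s,r}\big).
\]
By Lemma~\ref{lem:second_moment_split} and linearity of the trace, $\mathrm{tr}(M^{(l)}_{s,r})=\mathrm{tr}(V^{(l)}_{s,r})+\|\mu^{(l)}_{s,r}\|^2$. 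We then take the outer expectation and sum over $s\le t$ and $r\le R_s$. The bound holds pointwise in the conditioning variables, so it survives the expectation, and this gives the displayed scalar upper bound.

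\textbf{Step 2 (mean decomposition).} By linearity of conditional expectation, $\mu^{(l)}_{s,r}=\lambda_{s,\mathrm{old}}\mu^{\mathrm{old},(l)}_{s,r}+\lambda_{s,\mathrm{new}}\mu^{\mathrm{new},(l)}_{s,r}$, exactly as in Corollary~\ref{cor:mean_energy_alignment}. Expanding the squared Euclidean norm bilinearly gives the stated three-term formula. This is the Euclidean ($H=I$) analogue of Corollary~\ref{cor:mean_energy_alignment}.

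\textbf{Step 3 (covariance decomposition).} We set $\Sigma^{\mathrm{old},(l)}_{s,r}:=\mathrm{Cov}(G^{\mathrm{old}}_{s,r}\mid\Theta^{(s)}_{r-1},W_{1:l})$, and define $\Sigma^{\mathrm{new},(l)}_{s,r}$ analogously. Bilinearity of covariance gives
\[
V^{(l)}_{s,r}=\lambda_{s,\mathrm{old}}^2\Sigma^{\mathrm{old}}+\lambda_{s,\mathrm{new}}^2\Sigma^{\mathrm{new}}+\lambda_{s,\mathrm{old}}\lambda_{s,\mathrm{new}}\big(\mathrm{Cov}(G^{\mathrm{old}},G^{\mathrm{new}})+\mathrm{Cov}(G^{\mathrm{new}},G^{\mathrm{old}})\big),
\]
with all covariances taken conditionally on $(\Theta^{(s)}_{r-1},W_{1:l})$. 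Taking traces, the claimed identity requires the trace of the cross-covariance to vanish. We argue this from the sampling scheme. At step $r$ the replay mini-batch $B^{\mathrm{old}}_r$ is drawn from $\mathcal M^{1:s-1}$ and the current mini-batch $B^{\mathrm{new}}_r$ from $D^s$, by independent draws. Both gradients are evaluated at the fixed point $\Theta^{(s)}_{r-1}$. Hence, given the conditioning variables together with the stored data, $G^{\mathrm{old}}_{s,r}$ and $G^{\mathrm{new}}_{s,r}$ are conditionally independent, and their cross-covariance is zero.

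\textbf{Main obstacle.} Step 3 is the only nontrivial step. If the covariance is read purely as mini-batch noise (conditioning also on the buffer and on $D^s$), the argument above is immediate, and we will adopt that reading. If instead the conditioning is only on $(\Theta^{(s)}_{r-1},W_{1:l})$, the past trajectory may couple the buffer and the current dataset, and then the cross term need not vanish. In that case the law of total covariance would leave a between-data term. We would then either absorb this term into the mean-gradient part, or state the decomposition with the cross-trace retained and note that it vanishes under the independent-mini-batch convention.
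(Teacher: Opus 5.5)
Your proposal is correct and follows the paper's proof essentially step for step: you apply $\log(1+x)\le x$ to the eigenvalues of $\alpha_{s,r}M^{(l)}_{s,r}$, then use $\mathrm{tr}(M^{(l)}_{s,r})=\mathrm{tr}(V^{(l)}_{s,r})+\|\mu^{(l)}_{s,r}\|^2$, then expand bilinearly along the old/new mixture. The paper removes the cross-covariance with the same claim, ``minibatches are sampled independently given the state,'' made in its proof of Proposition~\ref{prop:logdet_split}; so the caveat you raise applies to the paper's argument as well, and your fix is the right way to make it rigorous, provided you read all of $M^{(l)}_{s,r}$, $\mu^{(l)}_{s,r}$ and $V^{(l)}_{s,r}$ as moments conditional on the stored data as well, in every step and not only in Step 3.
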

Corollary~\ref{Scalar diagnostic decomposition} yields three monitorable scalar quantities. $\mathrm{tr}(\Sigma^{\mathrm{old},(l)}_{s,r})$ measures gradient inconsistency within the replay buffer under the current representation, with elevated values serving as an early indicator of catastrophic forgetting when replay no longer provides stable optimization anchors. $\mathrm{tr}(\Sigma^{\mathrm{new},(l)}_{s,r})$ indicates the intrinsic stochasticity of the new task gradients.
$\langle\mu^{\mathrm{old},(l)}_{s,r},\mu^{\mathrm{new},(l)}_{s,r}\rangle$ directly quantifies cross-task alignment, distinguishing positive transfer from destructive interference. 

\begin{corollary}[Sensitivity-weighted local mixing]
\label{cor:Optimal mixing}
Fix any sensitivity metric $H^{(l)}_{s,r}$. For $\lambda\in[0,1]$, consider the mixed vector $\mu_{s,r}^{(l)}(\lambda):=\lambda \mu^{\mathrm{old},(l)}_{s,r}+(1-\lambda)\mu^{\mathrm{new},(l)}_{s,r}$ and its metric energy
$\|\mu_{s,r}^{(l)}(\lambda)\|_H^2$.
Then $\|\mu^{(l)}_{s,r}(\lambda)\|_H^2$ is minimized over $\lambda\in[0,1]$ by
\begin{equation*}
\lambda^\star
\!=\!
\mathrm{clip}_{[0,1]}\!\left(\!
\frac{{\mu^{\mathrm{new},(l)}_{s,r}}^\top H(\mu^{\mathrm{new},(l)}_{s,r}-\mu^{\mathrm{old},(l)}_{s,r})}{(\mu^{\mathrm{old},(l)}_{s,r}\!- \!\mu^{\mathrm{new},(l)}_{s,r})^\top H( \mu^{\mathrm{old},(l)}_{s,r}\!- \!\mu^{\mathrm{new},(l)}_{s,r})}
\!\right),
\end{equation*}
with the convention $\lambda^\star=0$ if $\mu^{\mathrm{old},(l)}_{s,r}=\mu^{\mathrm{new},(l)}_{s,r}$ .
\end{corollary}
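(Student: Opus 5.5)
The function $\lambda\mapsto\|\mu^{(l)}_{s,r}(\lambda)\|_H^2$ is a convex quadratic in the scalar $\lambda$, so I will write it in closed form, find its unconstrained stationary point, and then project onto $[0,1]$. To lighten notation, write $a:=\mu^{\mathrm{old},(l)}_{s,r}$, $b:=\mu^{\mathrm{new},(l)}_{s,r}$ and $d:=a-b$. Then $\mu^{(l)}_{s,r}(\lambda)=b+\lambda d$. Since $H=H^{(l)}_{s,r}\succ0$ by Definition~\ref{def:metric_alignment}, and is symmetric as the inverse of the symmetric matrix $A^{(l)}_{s,r}$, expanding the form gives
\[
q(\lambda):=\|b+\lambda d\|_H^2=b^\top H b+2\lambda\, b^\top H d+\lambda^2\, d^\top H d .
\]

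\emph{Case $d\neq0$.} Positive definiteness gives $d^\top H d>0$, so $q$ is strictly convex. Setting $q'(\lambda)=2b^\top H d+2\lambda d^\top H d=0$ gives the unique unconstrained minimizer
\[
\tilde\lambda=-\frac{b^\top H d}{d^\top H d}=\frac{b^\top H(b-a)}{(a-b)^\top H(a-b)},
\]
which is the expression inside the clip in the statement. Because $q$ is a strictly convex one-dimensional function, it is decreasing on $(-\infty,\tilde\lambda]$ and increasing on $[\tilde\lambda,\infty)$. Consequently its minimizer over the interval $[0,1]$ is the projection of $\tilde\lambda$ onto that interval, namely $\mathrm{clip}_{[0,1]}(\tilde\lambda)$. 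Concretely: if $\tilde\lambda<0$, $q$ is increasing on $[0,1]$ and the minimum is at $0$; if $\tilde\lambda>1$, $q$ is decreasing on $[0,1]$ and the minimum is at $1$; otherwise the minimum is at $\tilde\lambda$ itself.

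\emph{Case $d=0$.} Here $q\equiv b^\top H b$ is constant, so every $\lambda\in[0,1]$ is a minimizer, and the convention $\lambda^\star=0$ selects one of them.

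There is no real obstacle. The only points that need a remark are the symmetry and positive definiteness of $H$, which justify the expansion and strict convexity and hold for any sensitivity metric of the stated form, and the degenerate case $d=0$, which the convention handles.
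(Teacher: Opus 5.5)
Your proposal is correct and follows the paper's proof essentially step for step: the same substitution $d=\mu^{\mathrm{old},(l)}_{s,r}-\mu^{\mathrm{new},(l)}_{s,r}$, the same quadratic expansion, strict convexity from $H\succ 0$, the first-order condition, and clipping to $[0,1]$, with $d=0$ handled by the convention. You additionally spell out why clipping gives the constrained minimizer (monotonicity on either side of the vertex) and note that $H$ is symmetric; the paper leaves both points implicit.
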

Corollary~\ref{cor:Optimal mixing} derives a local mixing coefficient $\lambda^\star$ that minimizes the $H$-metric energy of the mixed mean update. It should therefore be interpreted as a one-step geometric diagnostic/control rule rather than as a universal globally optimal replay policy. Without an explicit current-task progress constraint, minimizing this local energy need not optimize the overall continual-learning objective. Under task alignment, the energy is relatively insensitive to the mixing ratio, permitting flexible interpolation between replay and current-task gradients. Under conflict, $\lambda^\star$ identifies the mixture that least excites the sensitive directions encoded in $H$; replay-heavy values should therefore be read as warning signals of severe interference rather than as standalone prescriptions.

\section{Experiments}\label{sec:exp}

In this section, we empirically evaluate the predictions made by Theorems~\ref{thm:synergy_drift}, \ref{thm:wasserstein} and~\ref{thm:Hierarchical SGLD} together with their corollaries. Our goal is to determine whether the bounds and diagnostics derived in Sections~\ref{Main Decomposition}--\ref{sec:algorithm} retain predictive power across (i) controlled settings in which ground-truth distributional objects are accessible, and (ii) standard replay benchmarks in which these objects must be estimated from finite samples.

\subsection{Experimental Details}

\paragraph{Datasets, architectures, and replay methods}
Controlled experiments use a synthetic Gaussian-mixture stream and a binary MNIST stream optimized by SGLD, both of which permit closed-form or large-sample estimation of the replay centroid $Q_{A_l,Y|i,W_{1:l}}$ and of the task population $P_{A_l,Y|i,W_{1:l}}$. Standard-benchmark experiments use Split-CIFAR-100 (10 disjoint 10-class tasks) and Split-TinyImageNet (20 disjoint 10-class tasks). Three replay methods are evaluated as instantiations of the framework: experience replay (ER) \cite{rolnick2019experience}, DER++ \cite{buzzega2020dark}, and iCaRL \cite{rebuffi2017icarl}. All benchmark configurations use buffer capacity $2{,}000$, $20$ epochs per task, mini-batch size $64$, replay batch size $64$, and SGD with learning rate $0.03$, momentum $0.9$, and weight decay $5\times 10^{-4}$. For every (benchmark, method) pair, we report $2$ class orderings $\times\,3$ random seeds $=6$ independent runs.

\paragraph{Depth family and the suffix amplification factor}
The funnel study of Section~\ref{sec:wasserstein_relaxation} additionally trains a family of residual networks, ResNet-$\{10,18,26,34\}$, on Split-CIFAR-100 and Split-TinyImageNet with both ER and DER++, spanning $6$ to $18$ representation cuts; the $12$ (method, dataset, depth) cells with ER and DER++ enter the analysis below. Theory and experiment here concern a single quantity, the Lipschitz constant of the suffix map $a_\ell\mapsto\hat y$, which sets how strongly the network above layer $\ell$ amplifies a representation mismatch. Theorem~\ref{thm:wasserstein} bounds it in closed form by $\bar\rho_\ell(W)$, a bound covered by Remark~\ref{rem:residual} once $\|W_h\|_{\mathrm{op}}$ is read as the per-block Lipschitz constant. Because that worst-case product is many orders of magnitude loose in deep networks, we additionally estimate, for a depth-resolved location, a data-dependent surrogate for the same Lipschitz constant, the on-distribution Jacobian norm of $a_\ell\mapsto\hat y$, which is a lower bound on it.

\paragraph{Performance and diagnostic observables}
Let $A_{t,i}$ denote the test accuracy on task $i$ evaluated immediately after training on task $t$ completes. The standard class-incremental metrics are
$$
\mathrm{AA}_t \triangleq \frac{1}{t}\sum_{i\le t} A_{t,i}, \qquad
\mathrm{AF}_t \triangleq \frac{1}{t-1}\sum_{i<t}\left( \max_{s<t} A_{s,i} - A_{t,i} \right), \quad t \ge 2.
$$
Here $\mathrm{AA}_t$ is the average task accuracy, and $\mathrm{AF}_t$ represents the average forgetting over all previously learned tasks.

After each task transition $t\!\to\!t{+}1$ we additionally compute, for every prior task $i\le t$, the diagnostic quantities: the sensitivity-aware gradient alignment $\cos_H(i,t)$ derived from the SGLD analysis of Section~\ref{sec:algorithm}, the layer-$\ell$ drift estimate $D_\ell(i,t)$ from the geometric relaxation of Section~\ref{sec:wasserstein_relaxation}, the suffix sensitivity $\bar\rho_\ell(W_t)$ from Theorem~\ref{thm:wasserstein}, and the gradient-covariance traces $\mathrm{tr}\big(\Sigma_\mathrm{old}(i,t)\big)$ and $\mathrm{tr}\big(\Sigma_\mathrm{new}(t)\big)$. The central response variable is the pairwise forgetting 
$$
\Delta F_{i,t} \triangleq A_{t,i} - A_{t+1,i}.
$$

\paragraph{Statistical methodology}
Two choices make the benchmark analysis robust to the small-sample artifacts that affect
transition-level studies. First, the unit of analysis is the (old-task, new-task) pair
rather than the task boundary: each Split-TinyImageNet run contributes $190$ pairs and each
Split-CIFAR-100 run $45$, for $3{,}690$ pairwise observations in total, which is two orders of magnitude more than a one-point-per-transition analysis. Second, because any quantity that drifts monotonically with task index would appear spuriously predictive, for each diagnostic we
report the task-controlled partial Pearson correlation with pairwise forgetting after
partialling out the new-task index and the old-task age. To respect the dependence structure of
the data, the partial correlation is computed within each run (over its $45$ or $190$
pairs); the reported $95\%$ bootstrap interval and sign consistency are then taken over the
six runs of each cell, so the interval reflects between-run variability rather than treating
non-independent pairs as independent samples. We compare each partial correlation against two
naive monotone baselines: the raw correlation of forgetting with task index and with old-task age.
Unless stated otherwise, the sensitivity metric $H^{(l)}_{s,r}$ is used in its diagonal
approximation, so that all diagnostics are computed in $O(d)$ time and require no $d\times d$ matrix.

\begin{figure}[t]
\centering
\begin{subfigure}{0.48\textwidth}
    \centering
    \includegraphics[width=\linewidth]{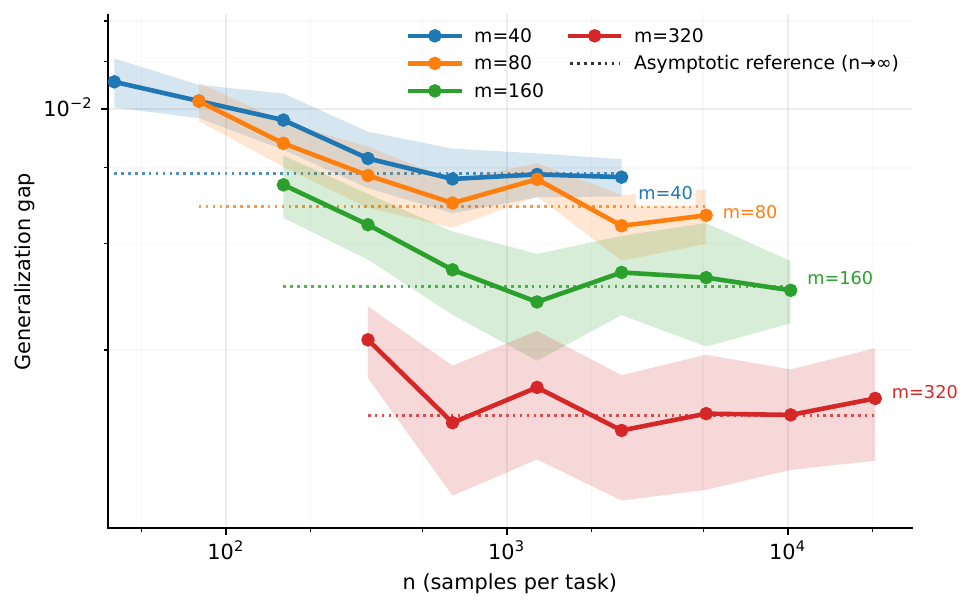}
    \caption{Gap scaling}
    \label{fig:exp_A1_gap_scaling}
\end{subfigure}
\hfill
\begin{subfigure}{0.48\textwidth}
    \centering
    \includegraphics[width=\linewidth]{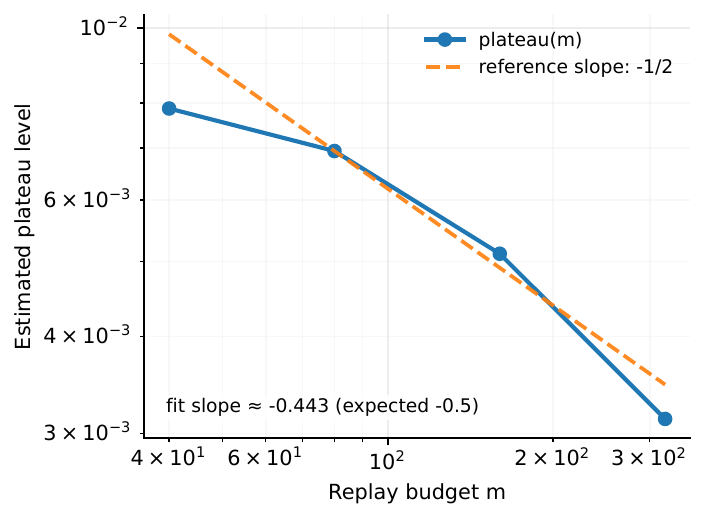}
    \caption{plateau scaling}
    \label{fig:exp_A2_plateau_scaling}
\end{subfigure}
\caption{Scaling laws of the generalization gap for fixed $m$ or $n$.}
\label{fig:2}
\end{figure}

\subsection{Controlled Test of the Effective-Sample-Size Term}
This subsection isolates the optimization-variance branch of Theorem~\ref{thm:synergy_drift}, in which the dependence on memory size $m$, current-task size $n$, and sequence length $T$ enters through the effective sample size $N_\mathrm{eff} = \frac{1}{(T-1)/m + 1/n}.$
Two scaling predictions in $m$ and $n$ follow directly from the prefactor $\sqrt{2\sigma^2/N_\mathrm{eff}}$ in the bound. First, for fixed $(m,T)$, increasing the current-task size $n$ does not drive the variance term to zero but instead yields a finite plateau $\lim_{n\to\infty} N_\mathrm{eff} =\frac{m}{T-1},$
which we refer to as the finite-memory variance floor. Second, in the regime $m \ll n$, $N_\mathrm{eff} \approx m/(T-1)$, so the variance term scales as $m^{-1/2}$ in the buffer size. The empirical gap reported below is the task-averaged discrepancy, equal to the total-risk gap of Theorem~\ref{thm:synergy_drift} divided by the fixed task count $T$.

We construct a $T$-task Gaussian-mixture stream in which each task $\mathcal{D}_i$ defines a binary classification problem between two isotropic Gaussians with task-specific means and a shared covariance. The learner is a linear-Gaussian model, for which the population risk $L(W)$ and the empirical risk $\hat L(W)$ are available in closed form, as are all KL terms appearing in Theorem~\ref{thm:synergy_drift}. The empirical generalization gap $|L(W)-\hat L(W)|$ is averaged over $K=200$ independent replications of the buffer construction and the current-task draw.
Figure~\ref{fig:exp_A1_gap_scaling} plots the empirical gap as a function of $n$ for $T=8$ and three values of $m \in \{40,80,160,320\}$. Each curve flattens as $n$ grows, and the asymptote agrees with the prediction: doubling $n$ in the saturated regime produces a change in the gap that is statistically indistinguishable from zero, while doubling $m$ shifts the asymptote downward by a factor close to $\sqrt{2}$. Figure~\ref{fig:exp_A2_plateau_scaling} reports the same data in the log-log regime, together with an ordinary-least-squares fit of $\log\mathrm{gap}$ versus $\log m$. The fitted slope is $\widehat{\beta} = -0.443\pm0.027$ (95\% bootstrap confidence interval $[-0.498,-0.391]$, $K=200$ replications per point), so the interval lies just above $-1/2$, and the estimate drifts systematically from $-0.454$ to $-0.423$ as the tail threshold used to define the plateau grows. The prediction under test is the $m^{-1/2}$ scaling of the variance \emph{prefactor} $\sqrt{2\sigma^2/N_\mathrm{eff}}$ in isolation. However, what we fit is the full empirical gap, not the prefactor in isolation: it consists of the variance prefactor $\sqrt{2\sigma^2/N_{\mathrm{eff}}}$ multiplied by the square root of the information content $\mathcal S^{(\ell)}+\mathcal P^{(\ell)}-\mathcal R^{(\ell)}+\mathcal C^{(\ell)}$, with a residual drift contribution that persists at finite $n$. Any mild $m$-dependence of that information content shifts the composite exponent of the gap off the prefactor's $-1/2$; the measured value sitting slightly above $-1/2$, and stable across tail thresholds, is consistent with such a small perturbation.

The closed-form setting lets us test a sharper claim: that the bound tracks the gap as its controlling quantities vary, not merely that it attains the correct asymptote. Writing the task-averaged variance proxy as $1/\sqrt{N_\mathrm{eff}}$, we first sweep the memory and sample budgets on a $4\times 4$ grid ($m\in\{40,80,160,320\}$, $n\in\{500,1000,2000,5000\}$, $T=8$). Across the $16$ cells the proxy and the measured gap are not merely rank-correlated (Spearman $\rho=0.94$ at the cell level) but essentially linear: a straight-line fit of the gap against $1/\sqrt{N_\mathrm{eff}}$ gives $R^2=0.97$ (Pearson $r=0.98$), so the proxy reproduces the functional form of the $(m,n)$ dependence, not only its ordering. The variance branch is thus quantitatively faithful in $(m,n)$ up to its loose absolute constant. A complementary $T$-sweep (fixed $m{=}160,\,n{=}1000$, $T\in\{2,4,8,16\}$, $20$ seeds) isolates the $N_\mathrm{eff}$ prefactor from the information content of the bound. We stress at the outset that the variance prefactor $\sqrt{2\sigma^2/N_\mathrm{eff}}=\sqrt{2\sigma^2\big((T-1)/m+1/n\big)}$ grows with $T$, as $\sqrt{T/m}$ in the total-risk gap; the quantity plotted in Fig.~\ref{fig:bound_gap_T} is the task-averaged proxy $\tfrac1T\sqrt{2\sigma^2/N_\mathrm{eff}}\sim 1/\sqrt{Tm}$, which decreases only because the fixed $1/T$ normalization (the reported gap is the total-risk gap of Theorem~\ref{thm:synergy_drift} divided by $T$) rescales every term alike. In either convention the prefactor's $T$-growth is at most $\sqrt T$, whereas the measured task-averaged gap rises from $0.0045$ at $T{=}2$ to $0.0156$ at $T{=}16$---faster than the prefactor and in the opposite direction to its task-averaged proxy. The prefactor alone therefore cannot account for the $T$-dependence. The growth is carried by the bound's information content: the replay-centroid drift $\mathcal K^{(\ell)}$, which enters the drift term with an explicit $(T-1)$ factor, and the optimization information $\mathcal S^{(\ell)}+\mathcal P^{(\ell)}-\mathcal R^{(\ell)}$, whose heritage component is non-decreasing in the number of tasks by Corollary~\ref{cor:Cumulative information budget}. The $T$-sweep thus separates the finite-memory rate prefactor $N_\mathrm{eff}$ (validated by the $(m,n)$ grid above) from the information content that accumulates with $T$.

\begin{figure}[t]
\centering
\begin{subfigure}{0.40\linewidth}\centering\includegraphics[width=\linewidth]{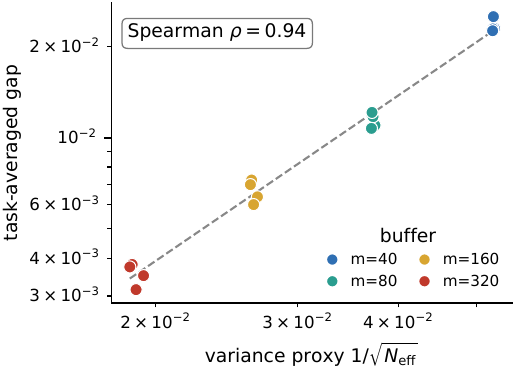}\caption{Variance proxy vs.\ gap over the $(m,n)$ grid}\label{fig:bound_gap_grid}\end{subfigure}\hfill
\begin{subfigure}{0.46\linewidth}\centering\includegraphics[width=\linewidth]{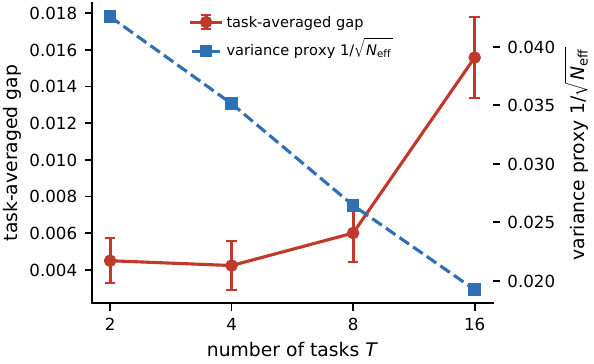}\caption{$T$-sweep: gap rises while the variance proxy falls}\label{fig:bound_gap_T}\end{subfigure}
\caption{The bound is loose in constant but order-correct, and its rate prefactor separates cleanly from its information content.}
\label{fig:bound_gap}
\end{figure}
\subsection{The Drift--Sensitivity Trade-off and the Empirical Funnel}
We now turn to the geometric drift component developed in Section~\ref{sec:wasserstein_relaxation}. This subsection examines whether such a funnel is empirically observable, and whether its location is consistent across controlled and standard-benchmark settings.

Three quantities are tracked at each layer index $\ell$ over the course of training. The drift $D_\ell$ is estimated as a sliced Wasserstein-1 distance between the buffer-induced and the population-induced empirical distributions of $(A_\ell, Y)$, averaged over old tasks. Suffix sensitivity is calculated in two ways. The first option is the worst-case factor $\bar\rho_\ell(W)$ from Theorem~\ref{thm:wasserstein}. We construct this factor by taking the product of layer-wise spectral norms across layers $\ell+1$ through $L$, and every operator norm is computed via power iteration. This is a certified Lipschitz bound that we use for the controlled Gaussian setting in Fig.~\ref{fig:exp_B1_funnel_gen_ci}. The second option relies on the on-distribution Jacobian norm of the suffix map $a_\ell\mapsto\hat y$. This data-driven quantity acts as a lower bound for the Lipschitz constant. We apply it to find the funnel in the deep residual networks of Fig.~\ref{fig:funnel_curves}, since the first method produces results that are too loose to be useful. We next compute the product proxy $\mathrm{Gen}_\ell = D_\ell\cdot(\text{suffix sensitivity})$ for each layer, and the empirical funnel $\hat\ell^\star$ is determined by taking the argmin of this proxy.

\begin{figure*}[t]
\centering
\begin{subfigure}{0.40\textwidth}
    \centering
    \includegraphics[width=\linewidth]{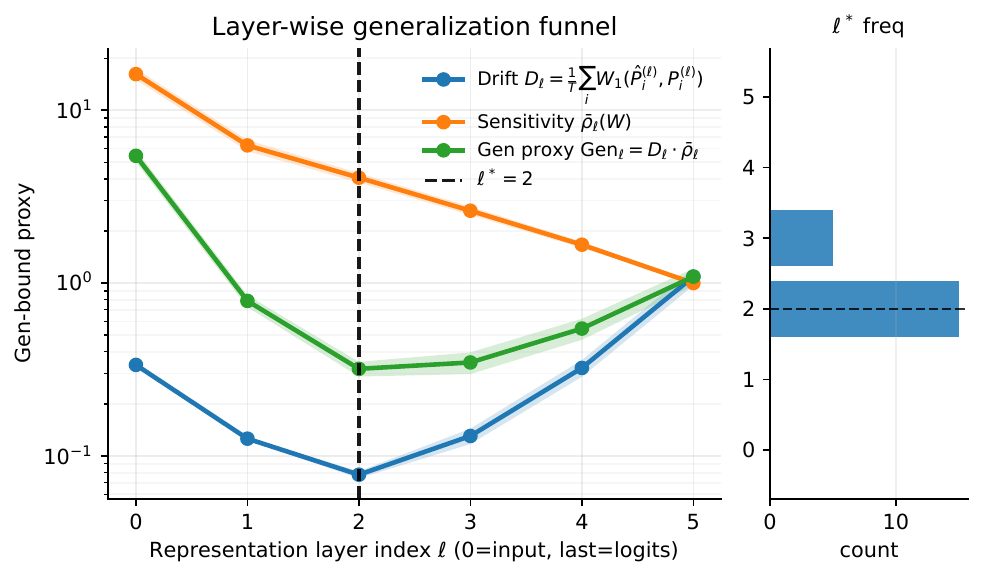}
    \caption{Controlled Gaussian (closed-form populations)}
    \label{fig:exp_B1_funnel_gen_ci}
\end{subfigure}
\hfill
\begin{subfigure}{0.57\textwidth}
    \centering
    \includegraphics[width=\linewidth]{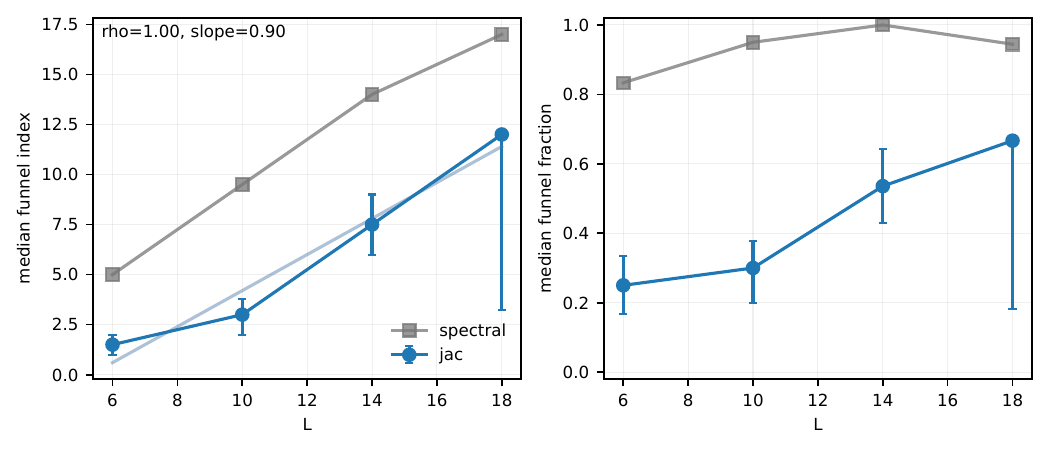}
    \caption{Funnel location moves deeper with network depth}
    \label{fig:funnel_depth}
\end{subfigure}

\vspace{1.5em}

\begin{subfigure}{\textwidth}
    \centering
    \includegraphics[width=\linewidth]{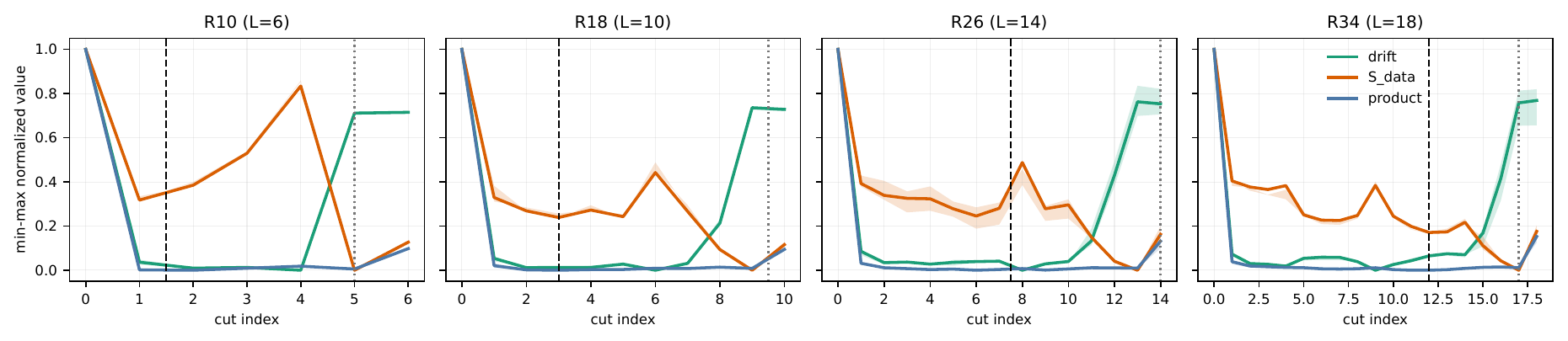}
    \caption{ResNet-$\{10,18,26,34\}$: drift, suffix amplification, and their product across depth}
    \label{fig:funnel_curves}
\end{subfigure}

\caption{The drift--sensitivity trade-off and the empirical funnel. 
    (a) On the controlled Gaussian setting with closed-form populations, the product is minimized at an interior layer ($\ell^\star=2$). 
    (b) The same trade-off holds in deep residual networks, where suffix amplification is measured on-distribution, with the minimum appearing in a broad interior region at every depth. 
    (c) This interior region moves systematically deeper as the network depth increases.}
\label{fig:funnel}
\end{figure*}

Figure~\ref{fig:exp_B1_funnel_gen_ci} reports the three curves on the Controlled Gaussian, where ground-truth populations $P_{A_\ell, Y \mid i, W_{1:\ell}}$ are accessible. The drift $D_\ell$ decreases first and then increases, while $\bar\rho_\ell(W)$ falls; their product is convex with minimizer concentrated at $\ell^\star = 2$ on a four-layer architecture, and the two ends $\ell\in\{0,L\}$ are never selected. In low dimensions, where the worst-case factor $\bar\rho_\ell$ has small dynamic range, the predicted interior minimum is directly observable.

Figure~\ref{fig:funnel_curves} extends the construction to deep residual networks---ResNet-$\{10,18,26,34\}$---across both replay methods and both benchmarks, for $12$ (method, dataset, depth) cells. Two findings hold uniformly. First, the drift forms a pronounced ``bathtub'': $D_\ell$ is large at both the input and the final representation and small across a broad interior, so the drift--amplification product is minimized strictly inside the network in every one of the $12$ cells. Second, the interior minimizer moves systematically deeper as the network deepens: its absolute index increases with $L$, and the Spearman's $\rho$ between funnel index and depth falls between $0.8$ and $1.0$ across all three method--dataset pairs (Fig.~\ref{fig:funnel_depth}). Because the product is flat across a broad basin where values within $20\%$ of the minimum span $30$--$47\%$ of the interior, we report the funnel as an interior region rather than a single sharply-resolved layer.

A feature-distillation probe that stabilizes individual blocks, consistent with Corollary~\ref{cor:funnel}, confirms that stabilizing an interior block improves over the no-distillation baseline and over stabilizing either end, but the two interior blocks nearest the basin are not statistically separated under our seed budget---as expected of a minimizer lying in a flat interior region. On real benchmarks the funnel proxy is the weakest of our diagnostics (Table~\ref{tab:partial-corr}): its partial correlation with forgetting is significant and correctly signed only for ER ($-0.37$ and $-0.30$), and is weak or absent for DER++ and iCaRL. We therefore present the funnel as a structural result about the drift--sensitivity trade-off in controlled and depth-resolved settings, and not on the same footing as the alignment diagnostic of Section~\ref{sec:sgld-diag}, whose benchmark transfer is both stronger and consistent across the gradient-replay methods.

\subsection{SGLD Optimization-Branch Diagnostics}
\label{sec:sgld-diag}
Theorem~\ref{thm:Hierarchical SGLD} reduces the optimization branch of Theorem~\ref{thm:synergy_drift} to a trajectory-level log-determinant budget, and Lemma~\ref{lem:second_moment_split} and Proposition~\ref{prop:logdet_split} decompose the per-step
cost into a covariance-driven instability term and a mean-driven interaction cost governed by the sensitivity-aware alignment $\cos_H$. We evaluate these diagnostics at two levels: first on a controlled MNIST stream where the gradient moments admit clean estimation and the
mechanism can be examined directly, and then on standard replay benchmarks where every quantity
must be estimated under the noise of practical pipelines.

\subsubsection{Controlled MNIST, the mechanism in isolation}
On the controlled MNIST stream the gradient moments are estimable, so the decomposition of Proposition~\ref{prop:logdet_split} can be examined mechanism by mechanism (Fig.~\ref{fig:mnist_dynamics}).
In the positive-transfer regime (Rotate-MNIST), the per-run mean alignment $\cos_H$ sits well above its interference-regime level ($\approx-0.5$ versus $\approx-0.85$ on the box-plots of Fig.~\ref{fig:mnist_dynamics}(d)),
the per-step interaction cost and the gradient covariance remain small, and retention is stable.
In the interference regime (Flip-MNIST), $\cos_H$ collapses to strongly negative values (per-run mean $\approx-0.85$), the log-determinant trajectory
budget grows sharply, and forgetting rises. The alignment signal cleanly separates the two regimes
across $20$ seeds (Fig.~\ref{fig:mnist_dynamics}(d)), consistent with Corollary~\ref{cor:mean_energy_alignment} read as a statement about relative alignment: the synergy regime sustains a markedly higher (less negative) $\cos_H$ than the interference regime. We read the corollary's sign prediction in this relative sense, since under the diagonal-$H$ approximation the per-run mean $\cos_H$ is negative in both regimes; it is their separation, not an absolute sign, that the corollary ties to constructive versus destructive interaction and that predicts forgetting. We note
one diagnostic caveat. Under severe interference the model collapses toward chance, so once train
and test risks reach the same entropy floor the terminal forgetting metric saturates and becomes
uninformative; the trajectory budget and $\cos_H$ remain informative precisely in this regime,
which is why we treat them as the primary diagnostics rather than terminal forgetting alone.

\paragraph{The curvature-aware metric is not the Euclidean inner product}
A natural objection is that $\cos_H$ merely re-expresses the Euclidean gradient cosine $\cos_E$ used by orthogonal-projection methods (GEM/OGD). The controlled streams refute this directly: because both quantities are computed at every step from the same replay and current-task gradients, the comparison is not a post-hoc re-instrumentation. On Rotate/Flip-MNIST, $\cos_H$ separates the synergy and interference regimes more sharply than $\cos_E$ (standardized between-regime mean difference $3.8$ vs.\ $3.0$) and predicts per-run forgetting more strongly ($-0.89$ vs.\ $-0.84$); the same ordering holds on the synthetic Gaussian stream ($\cos_H$ separation $2.9$ vs.\ $2.0$, forgetting correlation $-0.64$ vs.\ $-0.55$). These gaps are statistically reliable rather than artifacts of the seed budget: a paired bootstrap over runs places the $\cos_H-\cos_E$ difference above zero with a $95\%$ CI that excludes zero for both the regime separation ($\Delta d$ CI $[0.7,1.4]$ on MNIST) and the forgetting correlation ($\Delta|r|$ CI $[0.02,0.09]$ on MNIST and $[0.01,0.16]$ on the synthetic stream). The improvement is modest but consistent and significant, just as the theory predicts. The curvature reweighting $H=(I+\alpha V)^{-1}$ discounts alignment along unstable directions (Proposition~\ref{prop:logdet_split}). Consequently, $\cos_H$ is an intrinsic measure derived from our bound, not merely a hidden variant of the Euclidean OGD cosine.

\begin{figure}[t]
\centering
\begin{subfigure}{0.49\linewidth}\centering\includegraphics[width=\linewidth]{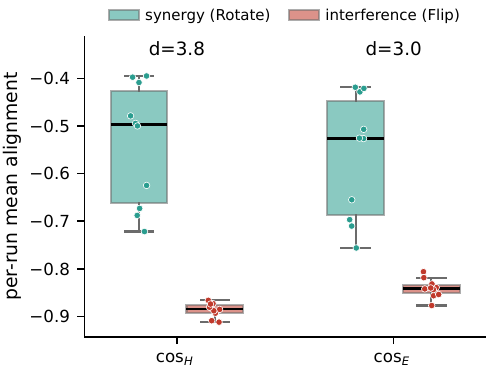}\caption{Regime separation: $\cos_H$ vs.\ $\cos_E$}\label{fig:cosh_sep}\end{subfigure}\hfill
\begin{subfigure}{0.49\linewidth}\centering\includegraphics[width=\linewidth]{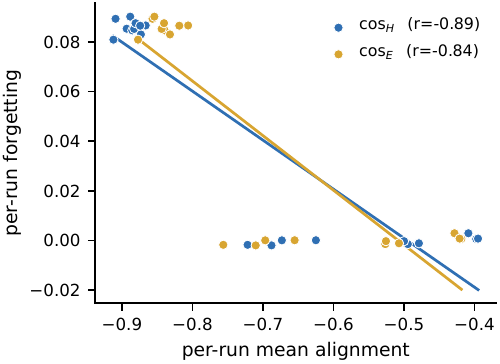}\caption{Forgetting prediction: $\cos_H$ vs.\ $\cos_E$}\label{fig:cosh_pred}\end{subfigure}
\caption{The sensitivity-aware alignment $\cos_H$ improves on the Euclidean gradient cosine $\cos_E$ (the OGD/GEM quantity), both logged from identical gradients. (a) $\cos_H$ separates synergy from interference more sharply (larger between-regime effect size). (b) $\cos_H$ is a stronger predictor of per-run forgetting. Rotate/Flip-MNIST; the synthetic Gaussian stream shows the same ordering.}
\label{fig:cosh_ablation}
\end{figure}

\subsubsection{Standard benchmarks: do the diagnostics track forgetting?}
We now test the central empirical claim of the optimization branch: that the abstract interaction
term $\mathcal{R}^{(l)}$, made operational through the alignment $\cos_H$ of
Corollary~\ref{cor:mean_energy_alignment}, tracks forgetting once estimated on standard replay benchmarks.
Table~\ref{tab:partial-corr} and Fig.~\ref{fig:forest} report the task-controlled partial
correlations of the diagnostics with pairwise forgetting across all six (benchmark, method) cells.

\begin{table*}[t]
\centering
\caption{Task-controlled partial Pearson correlation between theory-derived diagnostics and
pairwise forgetting $\Delta F_{i,t}$, after partialling out the new-task index and the old-task age.
Each cell aggregates $6$ runs ($2$ class orders $\times\,3$ seeds); $^{\dagger}$ marks a $95\%$
bootstrap CI that excludes zero. The last two columns are naive monotone baselines (raw correlation
of forgetting with task index / old-task age). Negative values are theory-consistent: alignment of
old/new gradients reduces forgetting. ``--'' denotes a quantity not recorded for iCaRL under our
diagnostic instrumentation, whose distillation-based objective does not expose the same
replay/current gradient split as ER and DER++.}
\label{tab:partial-corr}
\begin{tabular}{l rrrr rr}
\toprule
Benchmark / Method & $\cos_H$ & $\langle\mu_{\mathrm{old}},\mu_{\mathrm{new}}\rangle$
 & $\mathrm{tr}(\Sigma_{\mathrm{old}})$ & funnel $D_\ell\bar\rho_\ell$
 & task-idx (null) & old-age (null)\\
\midrule
S-CIFAR100 / ER        & $-0.948^{\dagger}$ & $-0.915^{\dagger}$ & $-0.840^{\dagger}$ & $-0.372^{\dagger}$ & $-0.29$ & $-0.62$\\
S-CIFAR100 / DER++     & $-0.794^{\dagger}$ & $-0.561^{\dagger}$ & $-0.185$           & $-0.043$           & $-0.04$ & $-0.48$\\
S-CIFAR100 / iCaRL     & $+0.076$           & $+0.062$           & --                 & $-0.088$           & $-0.43$ & $-0.11$\\
S-TinyImageNet / ER    & $-0.796^{\dagger}$ & $-0.281^{\dagger}$ & $-0.261^{\dagger}$ & $-0.296^{\dagger}$ & $-0.23$ & $-0.44$\\
S-TinyImageNet / DER++ & $-0.645^{\dagger}$ & $-0.230^{\dagger}$ & $-0.098$           & $-0.127^{\dagger}$ & $-0.13$ & $-0.41$\\
S-TinyImageNet / iCaRL & $+0.132^{\dagger}$ & $-0.156^{\dagger}$ & --                 & $+0.074^{\dagger}$ & $-0.28$ & $-0.10$\\
\bottomrule
\end{tabular}
\end{table*}

\begin{figure}[t]
\centering
\includegraphics[width=0.7\textwidth]{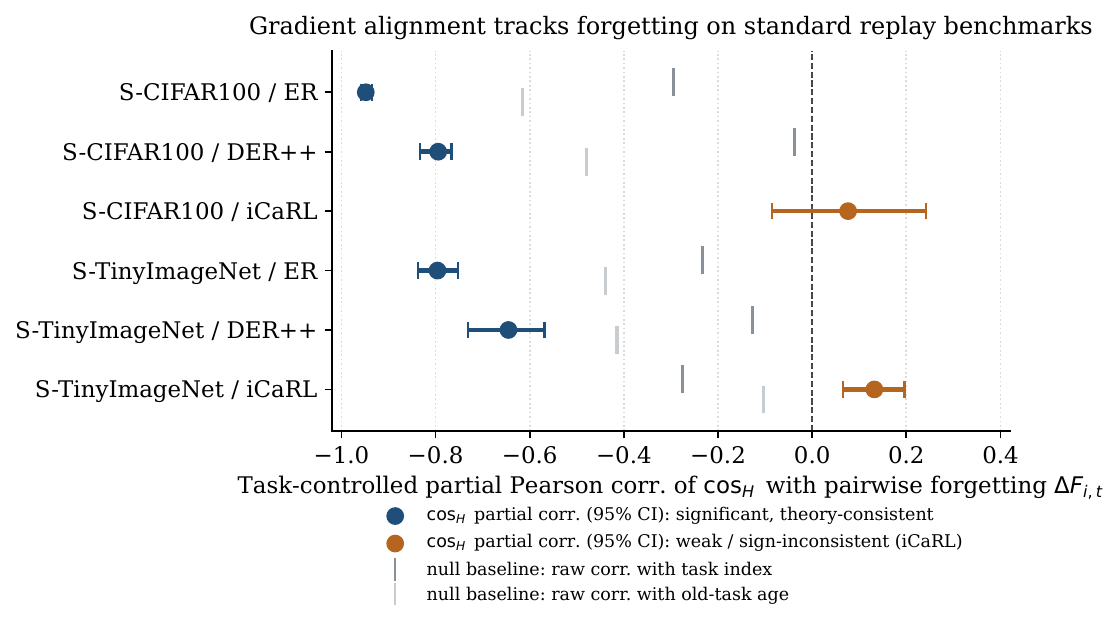}
\caption{Task-controlled partial Pearson correlation of $\cos_H$ with pairwise forgetting on the
six (benchmark, method) cells, with $95\%$ bootstrap intervals. Blue: significant and
theory-consistent (CI excludes zero, negative sign); orange: weak or sign-inconsistent (iCaRL).
Grey ticks are the two naive monotone baselines (raw correlation of forgetting with task index and
with old-task age). For ER and DER++ the alignment signal is several times stronger than either
monotone baseline, on both benchmarks.}
\label{fig:forest}
\end{figure}

\paragraph{Alignment predicts forgetting for gradient-replay methods}
For the two methods whose updates are literally a mixture of replay and current-task gradients---ER
and DER++---the alignment term is a strong and stable predictor of forgetting on both benchmarks. The
partial correlation of $\cos_H$ with pairwise forgetting is $-0.948$ (ER) and $-0.794$ (DER++) on
Split-CIFAR-100, and $-0.796$ (ER) and $-0.645$ (DER++) on Split-TinyImageNet, with confidence
intervals that exclude zero and the same sign in all six runs of every cell. The sign is exactly
the one the theory predicts: when old- and new-task gradients align in the stability-aware metric
$H$, forgetting is small; when they anti-align, forgetting is large. Crucially, this is not an
artifact of generic task-order drift. The task-controlled $\cos_H$ signal is several times stronger
than the two naive monotone baselines (the raw correlation of forgetting with task index never
exceeds $0.29$ in magnitude, and with old-task age never exceeds $0.62$), confirming that the
predictive content survives after the dominant monotone trends are removed. The raw alignment inner
product $\langle\mu_{\mathrm{old}},\mu_{\mathrm{new}}\rangle$ and, for ER, the replay-gradient
inconsistency $\mathrm{tr}(\Sigma_{\mathrm{old}})$ ($-0.840$ on Split-CIFAR-100) are theory-consistent
as well, while the geometric funnel proxy transfers more weakly but with the correct sign for ER on
both benchmarks ($-0.372$ and $-0.296$).

\paragraph{A scope boundary, stated honestly}
The iCaRL columns are included deliberately as a boundary of the framework rather than omitted.
iCaRL classifies by nearest-mean-of-exemplars and is trained with knowledge distillation, so its
forgetting mechanism departs from the replay-plus-current gradient mixture that the SGLD model of
Section~\ref{sec:algorithm} assumes. Consistently, $\cos_H$ is no longer a reliable predictor for iCaRL:
its CI includes zero on Split-CIFAR-100 and the sign even reverses on Split-TinyImageNet. The
raw alignment $\langle\mu_{\mathrm{old}},\mu_{\mathrm{new}}\rangle$ still retains the theory-consistent
negative sign after partialling ($-0.156$ on Split-TinyImageNet), which is what one would expect if
only the part of iCaRL's update that resembles a gradient step carries the alignment signal. This is
the honest reading of the framework: the gradient-alignment diagnostic is most faithful precisely
for methods whose optimization matches the modeled dynamics, and weaker for exemplar-classifier
methods---a delimitation the theory itself anticipates, rather than a failure to be hidden.

\paragraph{Correlation with actual old-task test error}
Finally, the geometric branch also transfers to actual test error rather than only to the forgetting
increment. Merging the diagnostic logs with the accuracy logs on Split-CIFAR-100 with iCaRL, the
layer-wise drift and the funnel proxy correlate with current old-task test error at $+0.262$
($95\%$ CI $[0.132,0.392]$) and $+0.257$ ($[0.128,0.385]$) respectively. Taken together, the benchmark study addresses its core motivating question of whether the quantities appearing in the bound retain meaning when estimated under realistic conditions. It delivers affirmative conclusions for the alignment term and gradient-replay methods, while clearly defining the limitations of the diagnostics.

\section{Limitations}
We do not claim that the bounds are numerically tight for deep networks: as with all
mutual-information generalization bounds, the absolute multiplicative constants are loose. What our
experiments validate is sharper and, we argue, more useful---three falsifiable predictions about the
structure of the gap. First, the predicted functional form: the variance floor scales as $m^{-1/2}$, the measured exponent is $-0.443$ (CI $[-0.498,-0.391]$), and the bound is order-correct beyond this asymptote: across a memory--sample grid, its variance proxy ranks the measured gap with Spearman $0.94$. Second, the predicted geometry: the drift--sensitivity trade-off, with suffix sensitivity measured on-distribution, is minimized strictly inside the network in all $12$ (method, dataset, depth) cells we examine, and the interior region moves systematically deeper as depth grows; we claim an interior region and its depth trend, located empirically rather than certified by the worst-case Lipschitz factor, which is itself too loose to localize a funnel in deep networks. Third, the predicted monotone relationship: the alignment term that the bound makes responsible for the interaction
component negatively tracks forgetting on real benchmarks, with partial correlations from $-0.65$ to
$-0.95$ for gradient-replay methods. A bound can be loose in its constants yet correct in the
dependencies it predicts; our experiments target the latter.

\section{Conclusion}
We developed a layer-wise information-theoretic theory of replay generalization in continual learning. The central message is that replay generalization is governed by two coupled mechanisms: a replay-induced representation drift created by finite memory, and an optimization-complexity term created by shared downstream parameters. The main Synergy--Drift theorem isolates these two mechanisms in a single decomposition. The Wasserstein result revisits the first branch when KL-based drift is ill-posed under support mismatch, while the SGLD result instantiates the second branch through trajectory-level gradient moments. Together, these results provide a common language for studying replay bias, task interaction, and layer-wise sensitivity in continual learning.

\bibliographystyle{IEEEtran}
\bibliography{IEEEexample}

\newpage
\appendices
\onecolumn
\section{Prerequisite Definitions and Lemmas}
\begin{definition}[$\sigma$-sub-Gaussian]
    A random variable $X$ is $\sigma$-sub-Gaussian if for any $\lambda$, $\ln \mathbb{E}[e^{\lambda (X-\mathbb{E}X) }]\leq \lambda^2 \sigma^2/2$.
\end{definition}
\begin{definition}[Kullback-Leibler Divergence]
    Let $P$ and $Q$ be probability measures on a measurable space $(\mathcal{X}, \Sigma)$. If $P$ is absolutely continuous with respect to $Q$ (denoted $P \ll Q$), the Kullback-Leibler (KL) divergence from $P$ to $Q$ is defined as: $D_{\mathrm{KL}}(P \| Q) \triangleq \int_{\mathcal{X}} \log \left( \frac{\mathrm{d}P}{\mathrm{d}Q} \right) \mathrm{d}P$, where $\frac{\mathrm{d}P}{\mathrm{d}Q}$ denotes the Radon-Nikodym derivative.
\end{definition}
\begin{definition}[Mutual Information]
     Let $X$ and $Y$ be random variables taking values in measurable spaces $\mathcal{X}$ and $\mathcal{Y}$, respectively. Let $P_{X,Y}$ denote their joint distribution on the product space $\mathcal{X} \times \mathcal{Y}$, and let $P_X$ and $P_Y$ denote the corresponding marginal distributions. The mutual information between $X$ and $Y$ is defined as the KL divergence between the joint distribution and the product of the marginals: $I(X; Y) \triangleq D_{\mathrm{KL}}(P_{X,Y} \| P_X \otimes P_Y).$
\end{definition}
\begin{lemma}[Interaction information identity]
\label{lem:interaction}
    For any random variables $A,B,C,D$,
    \[
    I(A,B;C\mid D)=I(A;C\mid D)+I(B;C\mid D)-I(A;B;C\mid D).
    \]
\end{lemma}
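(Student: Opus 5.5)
The identity is essentially a rewriting of the chain rule for conditional mutual information, once the (signed) conditional interaction information is pinned down. I would first fix the definition, since the paper has not stated it explicitly. The convention is
\[
I(A;B;C\mid D)\;\triangleq\;I(A;C\mid D)-I(A;C\mid B,D),
\]
with conditional mutual information defined as the $D$-averaged KL divergence $I(A;C\mid D)=\mathbb E_D\big[D_{\mathrm{KL}}(P_{A,C\mid D}\,\|\,P_{A\mid D}\otimes P_{C\mid D})\big]$. I would also record that this quantity is symmetric in its three arguments. That symmetry follows from the same chain-rule computation below applied with the roles of $A$ and $B$ exchanged, and it is what justifies the symmetric notation used for $\mathcal R^{(l)}$ in Theorem~\ref{thm:synergy_drift}.

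The main step is the chain rule
\[
I(A,B;C\mid D)=I(B;C\mid D)+I(A;C\mid B,D).
\]
I would prove it at the level of densities. Fix a version of the regular conditional distributions given $D$. The Radon--Nikodym derivative $\frac{dP_{A,B,C\mid D}}{d(P_{A,B\mid D}\otimes P_{C\mid D})}$ factorizes as
\[
\frac{dP_{B,C\mid D}}{d(P_{B\mid D}\otimes P_{C\mid D})}\cdot\frac{dP_{A,C\mid B,D}}{d(P_{A\mid B,D}\otimes P_{C\mid B,D})}.
\]
Taking $\log$ and integrating against $P_{A,B,C,D}$ gives the two terms on the right. Substituting $I(A;C\mid B,D)=I(A;C\mid D)-I(A;B;C\mid D)$ from the definition then yields exactly
\[
I(A,B;C\mid D)=I(A;C\mid D)+I(B;C\mid D)-I(A;B;C\mid D).
\]

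The only real obstacle is measure-theoretic rather than algebraic. The subtraction in the definition and the splitting of the integral of a sum of logs are legitimate only when the relevant mutual informations are finite; otherwise one can meet $\infty-\infty$. I would handle this in two steps. First, assume $I(A,B;C\mid D)<\infty$. Then both summands of the chain rule are finite, because each is nonnegative and bounded by the left side. Second, treat the infinite case by the convention that both sides equal $+\infty$, which holds because $I(B;C\mid D)\le I(A,B;C\mid D)$ and the interaction term is finite whenever $I(A;C\mid D)$ is.

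For the general (non-density) case I would invoke the Gelfand--Yaglom--Perez characterization of mutual information as a supremum over finite partitions. This reduces the chain rule to the discrete case, where it is the elementary entropy computation. In the paper's application, $A=\mathbf U^{(l)}_{\mathrm{old}}$, $B=\mathbf U^{(l)}_{\mathrm{new}}$, $C=W_{l+1:L}$ and $D=W_{1:l}$; there finiteness is implicitly assumed wherever the bound is non-vacuous.
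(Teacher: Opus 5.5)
The paper states this lemma without proof, as the standard co-information identity; under the inclusion--exclusion definition of $I(A;B;C\mid D)$ it is a tautology, so there is no paper argument to compare against. Your route is the standard derivation: fix $I(A;B;C\mid D)\triangleq I(A;C\mid D)-I(A;C\mid B,D)$, prove $I(A,B;C\mid D)=I(B;C\mid D)+I(A;C\mid B,D)$ by the Radon--Nikodym factorization, and substitute. It matches the sign convention the paper needs, since $\mathcal R^{(l)}$ enters the bound with a minus sign, and it is correct whenever $I(A,B;C\mid D)<\infty$.

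Two refinements to the finite case. First, the factorization needs no Lebesgue densities, only the absolute continuity that finiteness of the left side already gives. Splitting the integral of the two logarithms is also legitimate, because each log-likelihood ratio has $P$-integrable negative part (at most $1/e$). So the Gelfand--Yaglom--Perez reduction is superfluous, and your step ``each summand is bounded by the left side'' is not circular. Second, you also need $I(A;C\mid D)<\infty$ for the interaction term to be a finite number. This comes from the chain rule in the other order, $I(A,B;C\mid D)=I(A;C\mid D)+I(B;C\mid A,D)$, not from the order you wrote.

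Your infinite-case paragraph, however, is wrong as stated. The interaction term need not be finite when $I(A;C\mid D)$ is. Take $D$ trivial, $A,C$ i.i.d.\ $\mathcal N(0,1)$ and $B=A+C$. Then $I(A;C)=0$ but $I(A;C\mid B)=\infty$, so $I(A;B;C)=-\infty$. Worse, take $A\sim\mathcal N(0,1)$ and $B=C=A$. Then $I(A;C)=I(B;C)=\infty$ while $I(A;C\mid B)=0$, so the right-hand side is $\infty+\infty-\infty$. No argument can yield ``both sides equal $+\infty$'' there.

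The honest statement is that the identity holds under $I(A,B;C\mid D)<\infty$, or else serves as the definition of the interaction term. That is all Theorem~\ref{thm:synergy_drift} uses, since its bound is vacuous once $I(\mathbf U^{(l)};W_{l+1:L}\mid W_{1:l})=\infty$.
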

\begin{definition}[Wasserstein Distance]
    Let $P$ and $Q$ be probability measures on $\mathcal{X} \subseteq \mathbb{R}^k$. The Wasserstein-$p$ distance ($p \geq 1$) is defined as:  $\mathcal{W}_p(P, Q) \triangleq \left( \inf_{\gamma \in \Gamma(P, Q)} \mathbb{E}_{(x, x') \sim \gamma} \left[ \|x - x'\|^p \right] \right)^{1/p},$ where $\Gamma(P, Q)$ is the set of all joint distributions on $\mathcal{X} \times \mathcal{X}$ with marginals $P$ and $Q$, and $\|\cdot\|$ denotes the Euclidean norm.
\end{definition}
\begin{lemma}[Donsker-Varadhan formula(Theorem 5.2.1 in \cite{gray2011entropy})]
\label{lem:DV}
     Let $P$ and $Q$ be probability measures on $\mathcal{X}$ such that $P \ll Q$. Then:
     \begin{equation*}
         D_{\mathrm{KL}}(P \| Q) = \sup_{f \in L_\infty(\mathcal{X})} \left\{ \mathbb{E}_{P}[f] - \log \mathbb{E}_{Q}\left[e^{f}\right] \right\},
     \end{equation*}
     where the supremum is taken over the set of all bounded measurable functions $f: \mathcal{X} \to \mathbb{R}$. If $X$ is $\sigma$-sub-Gaussian under $Q$, then
        \begin{equation*}
            \left| \mathbb{E}_P[X] - \mathbb{E}_Q[X] \right| \le \sigma \sqrt{2 D_{\mathrm{KL}}(P \| Q)}.
        \end{equation*}
\end{lemma}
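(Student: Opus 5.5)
The plan is to prove the two parts of Lemma~\ref{lem:DV} in order: first the variational formula through a change of measure to a Gibbs (tilted) distribution, then the sub-Gaussian transportation inequality by plugging a linear test function into the formula and optimizing a quadratic in $\lambda$.

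\textbf{Lower bound on $D_{\mathrm{KL}}$ (the ``$\ge$'' direction).} Fix a bounded measurable $f$ and set $Z_f=\mathbb E_Q[e^f]\in(0,\infty)$. Define the tilted measure $Q_f$ by $\mathrm dQ_f/\mathrm dQ=e^f/Z_f$. Since $Q_f$ and $Q$ are mutually absolutely continuous, $P\ll Q$ gives $P\ll Q_f$. The chain rule for Radon--Nikodym derivatives gives $\log\frac{\mathrm dP}{\mathrm dQ}=\log\frac{\mathrm dP}{\mathrm dQ_f}+f-\log Z_f$. Integrating against $P$ yields
\[
D_{\mathrm{KL}}(P\|Q)=D_{\mathrm{KL}}(P\|Q_f)+\mathbb E_P[f]-\log\mathbb E_Q[e^f].
\]
This identity makes sense even when $D_{\mathrm{KL}}(P\|Q)=\infty$, because $f$ is bounded. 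Nonnegativity of $D_{\mathrm{KL}}(P\|Q_f)$ (Gibbs' inequality) then gives $D_{\mathrm{KL}}(P\|Q)\ge \mathbb E_P[f]-\log\mathbb E_Q[e^f]$ for every bounded $f$.

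\textbf{Attaining the supremum (the ``$\le$'' direction).} Take the truncations $f_k=\max(-k,\min(k,\log\frac{\mathrm dP}{\mathrm dQ}))$. For the partition function, $\mathbb E_Q[e^{f_k}]\le \mathbb E_Q[\frac{\mathrm dP}{\mathrm dQ}]+e^{-k}\le 1+e^{-k}$, so $\log\mathbb E_Q[e^{f_k}]\to$ something $\le 0$ in the limit. For the first term, apply monotone convergence separately to the positive and negative parts of $\log\frac{\mathrm dP}{\mathrm dQ}$ under $P$. The negative part is $P$-integrable, since $\mathbb E_P[(\log\frac{\mathrm dP}{\mathrm dQ})^-]\le 1/e$. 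Hence $\mathbb E_P[f_k]\to D_{\mathrm{KL}}(P\|Q)$, including the case where it is $+\infty$. Therefore the supremum is at least $D_{\mathrm{KL}}(P\|Q)$, which proves equality.

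\textbf{Sub-Gaussian consequence.} Assume $D_{\mathrm{KL}}(P\|Q)<\infty$; otherwise there is nothing to prove. I want to apply the inequality from the first step with $f=\lambda(X-\mathbb E_QX)$, but this $f$ is unbounded.

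\emph{Main obstacle.} The technical step is justifying $\mathbb E_P[f]\le D_{\mathrm{KL}}(P\|Q)+\log\mathbb E_Q[e^f]$ for such unbounded $f$ with $\mathbb E_Q[e^f]<\infty$. The plan is to apply the bounded inequality to $\min(f,k)$ and let $k\to\infty$.
\begin{itemize}
\item On the right, the terms $\log\mathbb E_Q[e^{\min(f,k)}]\le \log\mathbb E_Q[e^f]$ are dominated uniformly in $k$.
\item On the left, the lower truncation at $-k$ is handled by monotone convergence, applied separately on $\{f\ge0\}$ and $\{f<0\}$.
\item This argument simultaneously shows that $\mathbb E_P[X^\pm]<\infty$, so $\mathbb E_P[X]$ is well defined. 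Indeed, taking $\lambda=\pm1$ bounds $\mathbb E_P[(X-\mathbb E_QX)^\pm]$ by $D_{\mathrm{KL}}(P\|Q)+\sigma^2/2$.
\end{itemize}

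\textbf{Optimizing over $\lambda$.} Write $\Delta=\mathbb E_P[X]-\mathbb E_Q[X]$ and $D=D_{\mathrm{KL}}(P\|Q)$. The sub-Gaussian hypothesis gives $\log\mathbb E_Q[e^{\lambda(X-\mathbb E_QX)}]\le \lambda^2\sigma^2/2$, so
\[
\lambda\Delta\le D+\tfrac{\lambda^2\sigma^2}{2}\qquad\text{for all }\lambda\in\mathbb R.
\]
The quadratic $\frac{\sigma^2}{2}\lambda^2-\Delta\lambda+D$ is therefore nonnegative for every $\lambda$. Its discriminant must then satisfy $\Delta^2\le 2\sigma^2 D$, which is exactly $|\Delta|\le\sigma\sqrt{2D_{\mathrm{KL}}(P\|Q)}$. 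Equivalently, one can choose $\lambda=\pm\sqrt{2D}/\sigma$ directly.
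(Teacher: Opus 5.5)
The paper does not prove this lemma. It imports the variational formula from Gray's Theorem~5.2.1 and states the sub-Gaussian consequence without derivation. Your argument is the standard self-contained route and is correct in structure:
\begin{itemize}
\item Gibbs tilting $\mathrm dQ_f/\mathrm dQ=e^f/\mathbb E_Q[e^f]$ gives the ``$\ge$'' direction.
\item Truncating $\log\frac{\mathrm dP}{\mathrm dQ}$ gives attainment, using the observation $\mathbb E_P[(\log\frac{\mathrm dP}{\mathrm dQ})^-]\le 1/e$.
\item A linear test function and the discriminant then give the transportation inequality.
\end{itemize}
What this buys over the citation is the explicit extension from bounded $f$ to the unbounded test function $\lambda(X-\mathbb E_QX)$. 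The paper actually needs this extension, because it applies the lemma to unbounded sub-Gaussian losses $F_W$.

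Three small repairs are needed.

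\textbf{The integrability bound is false as written.} Taking $\lambda=\pm1$ does not give $\mathbb E_P[(X-\mathbb E_QX)^{\pm}]\le D_{\mathrm{KL}}(P\|Q)+\sigma^2/2$. For $P=Q$ and $X\sim\mathcal N(0,\sigma^2)$ this would read $\sigma/\sqrt{2\pi}\le\sigma^2/2$, which fails for every $\sigma<2/\sqrt{2\pi}$. The reason is that $e^{(X-c)^+}\ge e^{X-c}$, so the sub-Gaussian mgf bound does not control it directly. To fix it, set $c=\mathbb E_QX$ and apply the bounded inequality to $\min((X-c)^+,k)\in[0,k]$. Using $e^{(X-c)^+}\le 1+e^{X-c}$ and monotone convergence gives $\mathbb E_P[(X-c)^+]\le D_{\mathrm{KL}}(P\|Q)+\log(1+e^{\sigma^2/2})$, and symmetrically for the negative part. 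This is all you need for $\mathbb E_P[X]$ to be well defined.

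\textbf{The truncation $\min(f,k)$ is not bounded.} It is unbounded below, so the bounded case does not apply directly. You can either truncate on both sides or note that your tilting argument also works for $f$ bounded above. If you truncate on both sides, the partition bound becomes $\log(\mathbb E_Q[e^f]+e^{-k})$ rather than $\log\mathbb E_Q[e^f]$, which is harmless as $k\to\infty$.

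\textbf{The shortcut choice of $\lambda$ fails at $D=0$.} Choosing $\lambda=\pm\sqrt{2D}/\sigma$ is vacuous when $D=0$. Your discriminant argument, or dividing by $\lambda$ and letting $\lambda\to0$, covers that case.
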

\begin{lemma}[Lemma 1 in \cite{harutyunyan2021information}]
\label{lem:harutyunyan2021information}
    Let $(X, Y) \sim P_{XY}$ and let $\bar{Y}$ be an independent copy of $Y$ such that $(X, \bar{Y}) \sim P_X \otimes P_Y$. If the function $f(X, \bar{Y})$ is $\sigma$-sub-gaussian with respect to the distribution $P_X \otimes P_Y$, then: 
    \begin{equation*}
        \left| \mathbb{E}[f(X, Y)] - \mathbb{E}[f(X, \bar{Y})] \right| \leq \sqrt{2\sigma^2 I(X; Y)}. 
    \end{equation*}
\end{lemma}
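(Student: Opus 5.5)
The plan is to obtain the inequality as a direct specialization of the Donsker--Varadhan consequence already recorded in Lemma~\ref{lem:DV}. Take $P\triangleq P_{XY}$ and $Q\triangleq P_X\otimes P_Y$ on the product space, and use the single random variable $f(X,Y)$ viewed under these two laws. By definition of mutual information, $D_{\mathrm{KL}}(P\|Q)=I(X;Y)$. The two expectations in the claim are exactly $\mathbb E_P[f]=\mathbb E[f(X,Y)]$ and $\mathbb E_Q[f]=\mathbb E[f(X,\bar Y)]$, since $(X,\bar Y)\sim Q$. The hypothesis that $f$ is $\sigma$-sub-Gaussian under $Q$ is precisely what the second part of Lemma~\ref{lem:DV} requires, so it yields
\[
|\mathbb E_P[f]-\mathbb E_Q[f]|\le\sigma\sqrt{2I(X;Y)}=\sqrt{2\sigma^2 I(X;Y)}.
\]

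I would first dispose of the degenerate case. If $P_{XY}\not\ll P_X\otimes P_Y$, then $I(X;Y)=+\infty$ and the bound is trivial. Hence we may assume absolute continuity, so that Lemma~\ref{lem:DV} applies.

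For completeness I would then sketch why the second part of Lemma~\ref{lem:DV} holds, since $f$ need not be bounded. Apply the variational formula to $\lambda(f-\mathbb E_Q f)$ for each $\lambda\in\mathbb R$, which gives
\[
\lambda(\mathbb E_Pf-\mathbb E_Qf)\le D_{\mathrm{KL}}(P\|Q)+\log\mathbb E_Q\!\left[e^{\lambda(f-\mathbb E_Qf)}\right]\le I(X;Y)+\frac{\lambda^2\sigma^2}{2}.
\]
The resulting quadratic $\frac{\sigma^2}{2}\lambda^2-\lambda\Delta+I\ge0$, with $\Delta\triangleq\mathbb E_Pf-\mathbb E_Qf$, holds for all real $\lambda$. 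Its discriminant must therefore be nonpositive, $\Delta^2\le 2\sigma^2 I$, which covers both signs of $\Delta$ at once.

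The main obstacle is only technical. Lemma~\ref{lem:DV} states the supremum over bounded $f$, whereas $\lambda f$ may be unbounded. I would first apply the formula to the truncations $f_k=\max(\min(f,k),-k)$ and then pass to the limit $k\to\infty$. On the right-hand side, the exponential moment under $Q$ is controlled by sub-Gaussianity, and convergence follows from monotone/dominated convergence applied separately on $\{f\ge0\}$ and $\{f<0\}$. On the left-hand side, $\mathbb E_P f_k\to\mathbb E_P f$. This needs $\mathbb E_P|f|<\infty$, which itself follows from the Donsker--Varadhan inequality applied to $|f|$ with a small $\lambda$, together with the sub-Gaussian tails under $Q$.
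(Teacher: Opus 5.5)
Your proposal is correct and follows essentially the same route as the source: the paper does not reprove this lemma but imports it from Harutyunyan et al., whose proof is the same specialization of the Donsker--Varadhan bound (Lemma~\ref{lem:DV}) to $P=P_{XY}$, $Q=P_X\otimes P_Y$, using $D_{\mathrm{KL}}(P\|Q)=I(X;Y)$ and then the nonpositive-discriminant argument for both signs. Your truncation step and your derivation of $\mathbb E_P|f|<\infty$, which the cited source simply assumes, are sound and only make the argument more self-contained.
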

\begin{lemma}[Kantorovich-Rubinstein Duality]
\label{lem:KR}
    Let $P$ and $Q$ be probability measures defined on a metric space $(\mathcal{X}, c)$. Then, the $1$-Wasserstein distance is given by
    \begin{equation*}
        \mathcal{W}_1(P, Q) = \sup_{f \in \mathrm{Lip}_1} \left\{ \int_{\mathcal{X}} f \, \mathrm{d}P - \int_{\mathcal{X}} f \, \mathrm{d}Q \right\},
    \end{equation*}
    where $\mathrm{Lip}_1$ denotes the set of 1-Lipschitz functions with respect to the metric $c$, i.e., $|f(x) - f(x')| \le c(x, x')$ for any $f \in \mathrm{Lip}_1$ and $x, x' \in \mathcal{X}$.
\end{lemma}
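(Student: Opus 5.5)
We prove the two inequalities separately. The direction $\sup_{f\in\mathrm{Lip}_1}\{\int f\,\mathrm dP-\int f\,\mathrm dQ\}\le\mathcal W_1(P,Q)$ is elementary. Take any coupling $\gamma\in\Gamma(P,Q)$ and any $f\in\mathrm{Lip}_1$. Both integrals are finite whenever $P,Q$ have finite first moment; otherwise we work on the set where they are. The marginal constraints then give
\[
\int f\,\mathrm dP-\int f\,\mathrm dQ=\mathbb E_{(x,x')\sim\gamma}\big[f(x)-f(x')\big]\le \mathbb E_{(x,x')\sim\gamma}\big[c(x,x')\big].
\]
Taking the infimum over $\gamma$ and then the supremum over $f$ yields this direction.

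The reverse inequality $\mathcal W_1(P,Q)\le\sup_f\{\cdots\}$ is the substantive part. We first recall the general Kantorovich duality for a lower semicontinuous, nonnegative cost:
\[
\inf_{\gamma\in\Gamma(P,Q)}\int c\,\mathrm d\gamma=\sup\Big\{\int\varphi\,\mathrm dP+\int\psi\,\mathrm dQ:\ \varphi(x)+\psi(x')\le c(x,x')\Big\}.
\]
We obtain it in three stages. First, for finitely supported $P,Q$ this is exactly linear-programming strong duality for the transportation LP, and feasibility holds trivially via the product coupling. Second, we pass to general $P,Q$ on a Polish space by approximating them weakly with finitely supported measures, for instance empirical measures. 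Along the approximation we use tightness of $\Gamma(P,Q)$, from Prokhorov, and lower semicontinuity of $\gamma\mapsto\int c\,\mathrm d\gamma$, so that optimal couplings exist and the primal value is the limit of the discrete values. Third, we show the dual value is preserved under this approximation.

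Next we specialize to $c$ a metric. Given a feasible pair $(\varphi,\psi)$, we replace $\varphi$ by its $c$-transform $\varphi^{c}(x')=\inf_x\{c(x,x')-\varphi(x)\}$, and then $\varphi$ by $\varphi^{cc}$. Neither replacement decreases the dual objective, and both keep the pair feasible. Because $c$ is a metric, the triangle inequality shows $\varphi^{c}$ is $1$-Lipschitz. For $1$-Lipschitz $g$ we have $g^{c}=-g$: taking $x=x'$ gives $g^{c}\le -g$, and the Lipschitz bound gives the reverse inequality. Hence the dual pair may be taken as $(f,-f)$ with $f\in\mathrm{Lip}_1$, and the dual value becomes $\sup_{f\in\mathrm{Lip}_1}\{\int f\,\mathrm dP-\int f\,\mathrm dQ\}$.

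The main obstacle is the passage from the discrete to the general case in the Kantorovich duality, specifically showing that the dual value does not drop in the limit. I would handle this by proving the metric version directly for discrete measures, where the optimal potentials are $1$-Lipschitz on finite sets. McShane's extension $f(x)=\min_k\{f(x_k)+c(x,x_k)\}$ then extends these potentials to $1$-Lipschitz functions on all of $\mathcal X$. Next I would note that both sides are continuous under $\mathcal W_1$-convergence of the measures. For the primal side this follows from the triangle inequality for $\mathcal W_1$, which is itself proved by gluing couplings. For the dual side it follows from the easy direction already established, since $|\int f\,\mathrm d(P-P_k)|\le\mathcal W_1(P,P_k)$ uniformly over $f\in\mathrm{Lip}_1$. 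Approximating $P,Q$ in $\mathcal W_1$ by finitely supported measures, which is possible under finite first moments, then closes the argument without invoking compactness of the potentials.
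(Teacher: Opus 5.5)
Your proof is correct, given the hypotheses noted at the end. It does not follow the paper, though, because the paper gives no proof: it lists Lemma~\ref{lem:KR} as a standard prerequisite. Its only use, in the proof of Theorem~\ref{thm:wasserstein}, is the inequality $|\mathbb E_\mu f-\mathbb E_\nu f|\le \mathrm{Lip}(f)\,\mathcal W_1(\mu,\nu)$. That is exactly your first paragraph applied to $\pm f/\mathrm{Lip}(f)$, so the paper never needs the equality.

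For the hard direction, the argument you finally settle on is sound and self-contained:
\begin{itemize}
\item You use finite transportation-LP duality.
\item You apply one $c$-transform. The function $\min_i\{c(x_i,\cdot)-\varphi_i\}$ is already defined and $1$-Lipschitz on all of $\mathcal X$, so it doubles as the McShane extension. Replacing $\psi$ by it and $\varphi$ by its negative keeps feasibility and does not lower the objective.
\item You approximate $P,Q$ in $\mathcal W_1$ by finitely supported $P_k,Q_k$. Writing $D$ for the supremum over $\mathrm{Lip}_1$, the triangle inequality on the primal side and the easy direction on the dual side give $\mathcal W_1(P,Q)\le D(P,Q)+2\mathcal W_1(P,P_k)+2\mathcal W_1(Q,Q_k)$.
\end{itemize}
Gluing along a finitely supported middle marginal is elementary. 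So this route avoids disintegration, Prokhorov, existence of optimal plans, and compactness of potentials. Your earlier weak-approximation sketch, by contrast, leaves the key step open: under weak convergence $\mathcal W_1$ is only lower semicontinuous, and the Lipschitz test functions are unbounded.

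The price of your route is explicit hypotheses. You need finite first moments and separable support, so that finitely supported $\mathcal W_1$-approximants exist and $\int f\,\mathrm dP-\int f\,\mathrm dQ$ is never $\infty-\infty$ (take $P=Q$ heavy-tailed). State these as assumptions instead of ``working on the set where the integrals are finite.'' The paper's statement, for arbitrary measures on an arbitrary metric space, tacitly needs the same restrictions. In its Euclidean application, with an empirical measure on one side, they hold whenever $\mathcal W_1<\infty$.
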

\begin{lemma}[Lemma A.11 in \cite{dong2024towards}]
\label{lem:Gassian}
    Let $X \sim \mathcal{N}(0,\Sigma)$ and let $Y$ be any zero-mean random vector with $\mathrm{Cov}[Y]=\Sigma$. Then $h(Y)\leq h(X)$.
\end{lemma}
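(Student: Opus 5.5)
The plan is the standard maximum-entropy argument: compare the law of $Y$ with the Gaussian density through the nonnegativity of the KL divergence. I first assume $\Sigma\succ 0$ and that $Y$ has a density $p_Y$ on $\mathbb{R}^k$. If $Y$ has no density, I use the convention $h(Y)=-\infty$, and the claim is trivial. Let $\varphi_\Sigma(y)=(2\pi)^{-k/2}\det(\Sigma)^{-1/2}\exp(-\tfrac12 y^\top\Sigma^{-1}y)$ be the density of $X$. Since $\varphi_\Sigma>0$ everywhere, $P_Y\ll\mathcal N(0,\Sigma)$, and the definition of KL divergence gives
\[
0\le D_{\mathrm{KL}}\big(P_Y\,\|\,\mathcal N(0,\Sigma)\big)=\int p_Y\log p_Y\,\mathrm{d}y-\int p_Y\log\varphi_\Sigma\,\mathrm{d}y=-h(Y)-\mathbb E\big[\log\varphi_\Sigma(Y)\big].
\]

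The key observation is that $\log\varphi_\Sigma$ is a quadratic polynomial in $y$. Hence $\mathbb E[\log\varphi_\Sigma(Y)]$ depends on $Y$ only through its first two moments:
\[
\mathbb E[\log\varphi_\Sigma(Y)]=-\tfrac12\log\big((2\pi)^k\det\Sigma\big)-\tfrac12\mathrm{tr}\big(\Sigma^{-1}\mathbb E[YY^\top]\big).
\]
Because $Y$ is zero-mean with $\mathrm{Cov}[Y]=\Sigma$, we have $\mathbb E[YY^\top]=\Sigma$, so the trace term equals $k$. This is the same value one obtains with $X$ in place of $Y$. Therefore $-\mathbb E[\log\varphi_\Sigma(Y)]=-\mathbb E[\log\varphi_\Sigma(X)]=h(X)=\tfrac12\log\big((2\pi e)^k\det\Sigma\big)$. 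Substituting into the KL inequality gives $0\le h(X)-h(Y)$, which is the claim, with equality iff $P_Y=\mathcal N(0,\Sigma)$.

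The only step needing care, and the main (mild) obstacle, is integrability and the degenerate cases. The splitting of $\int p_Y\log(p_Y/\varphi_\Sigma)$ into two integrals is legitimate because $\mathbb E|\log\varphi_\Sigma(Y)|<\infty$ by finiteness of second moments. If $h(Y)=+\infty$ were possible, the displayed identity would give $D_{\mathrm{KL}}=-\infty$, which is a contradiction; so $h(Y)<\infty$ automatically, and the case $h(Y)=-\infty$ is trivial. If $\Sigma$ is singular, then $Y$ lies almost surely in the proper subspace $\mathrm{range}(\Sigma)$. Thus $Y$ has no Lebesgue density on $\mathbb{R}^k$, so $h(Y)=-\infty$, and $h(X)=-\infty$ as well. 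In this case one either interprets the statement on $\mathrm{range}(\Sigma)$, where the restricted covariance is nonsingular and the argument above applies verbatim, or accepts the inequality under the convention $-\infty\le-\infty$.
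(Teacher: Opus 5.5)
Your proof is correct. The paper gives no proof of its own here; it imports the statement as Lemma A.11 of the cited reference. The argument you give is the standard maximum-entropy proof: nonnegativity of $D_{\mathrm{KL}}(P_Y\,\|\,\mathcal N(0,\Sigma))$, combined with the fact that $\log\varphi_\Sigma$ is quadratic, so that $\mathbb E[\log\varphi_\Sigma(Y)]=\mathbb E[\log\varphi_\Sigma(X)]$.

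Your treatment of the edge cases is sound.
\begin{itemize}
\item Well-definedness. The negative part of $p_Y\log(p_Y/\varphi_\Sigma)$ is integrable, because $p\log(p/q)\ge p-q$. Also $\log\varphi_\Sigma(Y)$ is absolutely integrable by finiteness of second moments. Together these make the split legitimate and show that $h(Y)$ is well defined in $[-\infty,\infty)$.
\item Singular $\Sigma$. This case reduces to $-\infty\le-\infty$, since $v^\top Y=0$ almost surely for every $v\in\ker\Sigma$.
\end{itemize}

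One observation: your argument uses the hypotheses only through $\mathbb E[YY^\top]=\Sigma$, never through the zero mean separately. So take any random vector $V$ with $\mathbb E[VV^\top]=M\succ 0$ and run the same computation with reference measure $\mathcal N(0,M)$. The cross term still equals $\tfrac12\mathrm{tr}(M^{-1}M)=d/2$, giving $h(V)\le\tfrac12\log\big((2\pi e)^d\det M\big)$ directly.

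This is exactly the paper's Lemma~\ref{lem:Gassian_second_moment_entropy}, which the SGLD bound actually uses. The paper instead reaches it by centering, applying the present lemma with $\det\Sigma$, and then relaxing via Loewner monotonicity of the determinant ($\det M\ge\det\Sigma$). Your route obtains the needed bound in one step. The paper's route passes through the intermediate, sharper covariance-based bound.
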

\begin{lemma}[Entropy upper bound under second-moment constraint]
\label{lem:Gassian_second_moment_entropy}
Let $X\in\mathbb R^d$ be any random vector with finite second moment matrix $M:=\mathbb E[XX^\top]\in\mathbb R^{d\times d}$.
Then the differential entropy satisfies
\[
h(X)\le \frac12\log\Big((2\pi e)^d\det(M)\Big).
\]
\end{lemma}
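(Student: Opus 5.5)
We prove the bound by comparing the law of $X$ with a zero-mean Gaussian whose covariance is the second-moment matrix $M$ itself, not the covariance of $X$. The point of this choice is that no centering step is needed. Assume first that $M\succ 0$ and that $X$ has a density $p$, since otherwise $h(X)$ is not a finite differential entropy; the degenerate cases are handled at the end. Let $\varphi_M$ be the density of $\mathcal N(0,M)$. Gibbs' inequality gives
\[
0\le D_{\mathrm{KL}}(P_X\|\mathcal N(0,M)) = -h(X)-\mathbb E\big[\log\varphi_M(X)\big].
\]
We then compute the cross-entropy term explicitly.

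Expanding the Gaussian log-density,
\[
-\mathbb E[\log\varphi_M(X)]=\tfrac12\log\big((2\pi)^d\det M\big)+\tfrac12\,\mathbb E\big[X^\top M^{-1}X\big].
\]
By the trace trick, $\mathbb E[X^\top M^{-1}X]=\mathrm{tr}\big(M^{-1}\mathbb E[XX^\top]\big)=\mathrm{tr}(I_d)=d$. Finiteness of the second moment justifies exchanging expectation and trace. Substituting into Gibbs' inequality yields
\[
h(X)\le \tfrac12\log\big((2\pi)^d\det M\big)+\tfrac d2=\tfrac12\log\big((2\pi e)^d\det M\big),
\]
which is the claim.

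As a cross-check, the same bound follows from Lemma~\ref{lem:Gassian}. Translation invariance of differential entropy gives $h(X)=h(X-\mu)$ with $\mu=\mathbb E X$. Lemma~\ref{lem:Gassian} then gives $h(X)\le\tfrac12\log((2\pi e)^d\det\Sigma)$ with $\Sigma=M-\mu\mu^\top$. Since $\Sigma\preceq M$, monotonicity of $\det$ on the positive semidefinite cone gives $\det\Sigma\le\det M$. This route is useful because it shows the bound is loose by exactly the mean term.

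The only real obstacle is the degenerate case, where $M$ is singular or $X$ has no density. If $M$ is singular, then some $v\neq 0$ has $\mathbb E[(v^\top X)^2]=0$. Hence $X$ is supported on a proper subspace and is not absolutely continuous with respect to Lebesgue measure. In that case we adopt the standard convention $h(X)=-\infty$, so the inequality $-\infty\le-\infty$ holds trivially. Alternatively, one can apply the argument to $M+\epsilon I$ and let $\epsilon\downarrow 0$, provided $h(X)$ is defined. Only the positive-definite case carries content, and that case is exactly the one used in the SGLD log-determinant bounds.
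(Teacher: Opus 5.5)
Your proof is correct. Your ``cross-check'' is exactly the paper's proof: centre $X$, apply the covariance maximum-entropy bound of Lemma~\ref{lem:Gassian} to get $h(X)\le\frac12\log((2\pi e)^d\det\Sigma)$, then use $M=\Sigma+\mu\mu^\top\succeq\Sigma$ and Loewner monotonicity of $\det$.

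Your main argument is a genuinely different and more direct route. Gibbs' inequality against $\mathcal N(0,M)$ works because the cross-entropy against a \emph{zero-mean} Gaussian depends on $X$ only through $\mathbb E[XX^\top]$. So you need no centring, no external lemma and no determinant monotonicity; it is essentially the proof of Lemma~\ref{lem:Gassian} run with $M$ in place of $\Sigma$. Your explicit handling of singular $M$ or a missing density also covers cases the paper passes over silently.

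What the paper's two-step route buys in exchange is the sharper intermediate bound with $\Sigma$. For $\Sigma\succ 0$ the matrix determinant lemma gives $\det M=\det\Sigma\,(1+\mu^\top\Sigma^{-1}\mu)$, so the two bounds differ by exactly $\frac12\log(1+\mu^\top\Sigma^{-1}\mu)$. This is the same identity that underlies Proposition~\ref{prop:logdet_split}, and it matters downstream. In the SGLD step, the conditional mean of $\eta_{s,r}G_{s,r}+N_{s,r}$ is fixed once you condition on $\Theta^{(s)}_{r-1}$. The covariance form would therefore already bound the per-step information by $\frac12\log\det(I+\alpha_{s,r}V^{(l)}_{s,r})$, i.e.\ without the interaction-cost term. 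Your direct route lands on the $M$-form and hides this slack.
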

\begin{proof}
Let $\mu:=\mathbb E[X]$ and $\widetilde X:=X-\mu$. Since entropy is translation-invariant, $h(X)=h(\widetilde X)$.
Let $\Sigma:=\mathrm{Cov}(X)=\mathbb E[\widetilde X\widetilde X^\top]$.
By Lemma~\ref{lem:Gassian}, we have
\[
h(\widetilde X)\le \frac12\log\Big((2\pi e)^d\det(\Sigma)\Big).
\]
Next note that
\[
M-\Sigma=\mathbb E[XX^\top]-\mathbb E[\widetilde X\widetilde X^\top]=\mu\mu^\top\succeq 0,
\]
hence $M\succeq \Sigma\succeq 0$. Since determinant is monotone under the Loewner order on PSD matrices,
$\det(M)\ge \det(\Sigma)$, so
\[
h(X)=h(\widetilde X)\le \frac12\log\Big((2\pi e)^d\det(\Sigma)\Big)
\le \frac12\log\Big((2\pi e)^d\det(M)\Big).
\]
\end{proof}
\section{Omitted Proofs}
\subsection{Proof of Theorem~\ref{thm:synergy_drift}}
\begin{restatetheorem}{\ref{thm:synergy_drift}}[Restate]
Let $W$ be the output of a learner minimizing the empirical risk $\hat{\mathcal L}(W)$.
Assume the loss function $\ell(W,Z)$ is $\sigma$-subgaussian for all $Z\in\mathcal{Z}$ and $W\in\mathcal{W}$. For any split layer $l \in \{0, \dots, L\}$, we have
\[
|\mathrm{gen}_W|
\le
\underbrace{(T-1)\sqrt{2\sigma^2\,\mathcal K^{(l)}}}_{\text{\bf Replay-centroid Drift}}
+
\underbrace{
\sqrt{\frac{2\sigma^2}{N_{\mathrm{eff}}}
\Big(\mathcal S^{(l)}+\mathcal P^{(l)}-\mathcal R^{(l)}+\mathcal C^{(l)}\Big)}
}_{\text{\bf Optimization Variance}},
\]
where $N_{\mathrm{eff}} \triangleq \frac{1}{(T-1)/m + 1/n}$ is the effective sample size and $\mathcal K^{(l)}\triangleq \frac{1}{T-1}\sum_{i=1}^{T-1}\mathbb E_{W_{1:l}}\left[D_{\mathrm{KL}}\Big( Q_{A_l,Y|i,W_{1:l}}\big\|P_{A_l,Y|i,W_{1:l}}\Big)\right]$, and $\mathcal C^{(l)} \triangleq \mathbb E_{W_{1:l}}\left[ D_{\mathrm{KL}}\Big( P_{\mathbf U^{(l)}\mid W_{1:l}} \big\| \widetilde{Q}_{\mathbf{U}^{(l)}|W_{1:l}} \Big) \right]$.
\end{restatetheorem}
\begin{proof}
Fix a layer $l\in\{0,\dots,L\}$.
Recall that $a_0(x)=x$ and, for $h\in\{1,\dots,L\}$, $a_h(x)=\phi_h(W_h a_{h-1}(x))$. We also adopt the boundary convention that $g_{W_{L+1:L}}$ is the identity map when $l=L$.
To make the layer-$l$ representation parametrization unambiguous, we explicitly decompose the network at layer $l$.
Define the top sub-network mapping from the layer-$l$ representation to logits as
\[
g_{W_{l+1:L}}:\mathbb{R}^{d_l}\to\mathbb{R}^K,\qquad
f_W(x)=g_{W_{l+1:L}}(a_l(x)).
\]
Accordingly, define the loss as a function of the layer-$l$ representation--label pair:
\[
F_W(t,y)\triangleq \ell\big(g_{W_{l+1:L}}(t),y\big),
\qquad\text{so that}\qquad
\ell(W,(X,Y))=F_W(a_l(X),Y).
\]
By definition,
\[
\mathrm{gen}_W=\mathbb{E}_{W,\,D^{1:T}}[\mathcal L(W)-\hat{\mathcal L}(W)].
\]
Expand $\mathcal L(W)$ and $\hat{\mathcal L}(W)$:
\[
\mathcal L(W)=\sum_{t=1}^T\mathbb E_{Z\sim\mathcal D_t}[\ell(W,Z)],
\qquad
\hat{\mathcal L}(W)=\sum_{i=1}^{T-1}\frac{1}{m}\sum_{j\in\mathcal I_i}\ell(W,Z_j^i)+\frac{1}{n}\sum_{j=1}^n\ell(W,Z_j^T).
\]
Therefore,
\begin{align}
\mathcal L(W)-\hat{\mathcal L}(W)
&=
\sum_{i=1}^{T-1}\left(\mathbb E_{Z\sim\mathcal D_i}[\ell(W,Z)]-\frac{1}{m}\sum_{j\in\mathcal I_i}\ell(W,Z_j^i)\right)
\,+\,
\left(\mathbb E_{Z\sim\mathcal D_T}[\ell(W,Z)]-\frac{1}{n}\sum_{j=1}^n\ell(W,Z_j^T)\right).
\label{eq:basic_old_new_split}
\end{align}
For each old task $i\in[T-1]$, define the layer-$l$ population risk and replay empirical risk
\[
L_i(W)\triangleq \mathbb E_{(A_l,Y)\sim P_{A_l,Y|i,W_{1:l}}}\!\big[F_W(A_l,Y)\big],
\qquad
\widehat L_i(W)\triangleq \frac{1}{m}\sum_{j\in\mathcal I_i}F_W(A_{j,l}^i,Y_j^i).
\]
Then the old-task block in \eqref{eq:basic_old_new_split} can be written as
\begin{align}
\Delta_{\mathrm{old}}
&\triangleq
\sum_{i=1}^{T-1}\left(\mathbb E_{Z\sim\mathcal D_i}[\ell(W,Z)]-\frac{1}{m}\sum_{j\in\mathcal I_i}\ell(W,Z_j^i)\right)\nonumber\\
&=
\sum_{i=1}^{T-1}\Big(L_i(W)-\widehat L_i(W)\Big).
\label{eq:old_block_taskwise_block}
\end{align}
For each old task $i\in[T-1]$, recall the task-wise replay proxy
$\hat{P}_{A_l,Y|S^i,W_{1:l}}$ and its conditional centroid law
\begin{equation}
\label{eq:conditional centroid}
Q_{A_l,Y|i,W_{1:l}}
\;\triangleq\;
\mathbb E\!\left[
\hat{P}_{A_l,Y|S^i,W_{1:l}}
\ \middle|\ W_{1:l}
\right].
\end{equation}
Add and subtract $\mathbb E_{(A_l,Y)\sim Q_{A_l,Y|i,W_{1:l}}}[F_W(A_l,Y)]$ inside each
$L_i(W)-\widehat L_i(W)$ in \eqref{eq:old_block_taskwise_block} to obtain
\begin{equation}
\Delta_{\mathrm{old}}
=
\underbrace{\sum_{i=1}^{T-1}\Big(
\mathbb E_{P_{A_l,Y|i,W_{1:l}}}[F_W]
-
\mathbb E_{Q_{A_l,Y|i,W_{1:l}}}[F_W]
\Big)}_{\triangleq\ \Delta_{\mathrm{drift}}^{(l)}}
+
\underbrace{\sum_{i=1}^{T-1}\Big(
\mathbb E_{Q_{A_l,Y|i,W_{1:l}}}[F_W]
-
\widehat L_i(W)
\Big)}_{\triangleq\ \Delta_{\mathrm{old,var}}^{(l)}}.
\label{eq:drift_def_block}
\end{equation}
For the new task block in \eqref{eq:basic_old_new_split}, define
\[
\Delta_{\mathrm{new,var}}^{(l)}
\triangleq
\left(\mathbb E_{Z\sim\mathcal D_T}[\ell(W,Z)]-\frac{1}{n}\sum_{j=1}^n\ell(W,Z_j^T)\right)
=
\left(
\mathbb E_{P_{A_l,Y|T,W_{1:l}}}[F_W(A_l,Y)]
-\frac{1}{n}\sum_{j=1}^n F_W(A_{j,l}^T,Y_j^T)
\right),
\]
where $P_{A_l,Y|T,W_{1:l}}$ is the law of $(a_l(X),Y)$ under $Z=(X,Y)\sim\mathcal D_T$. Combining \eqref{eq:basic_old_new_split} and \eqref{eq:drift_def_block} yields the exact decomposition
\begin{equation}
\mathcal L(W)-\hat{\mathcal L}(W)
=
\Delta_{\mathrm{drift}}^{(l)}+\Delta_{\mathrm{old,var}}^{(l)}+\Delta_{\mathrm{new,var}}^{(l)}.
\label{eq:exact_three_term_decomp}
\end{equation}
Let $\Delta_{\mathrm{var}}^{(l)}\triangleq \Delta_{\mathrm{old,var}}^{(l)}+\Delta_{\mathrm{new,var}}^{(l)}$.
Taking expectation and using $|a+b|\le |a|+|b|$ gives
\begin{equation}
|\mathrm{gen}_W|
=
\left|\mathbb E[\mathcal L(W)-\hat{\mathcal L}(W)]\right|
\le
\mathbb E\big[|\Delta_{\mathrm{drift}}^{(l)}|\big]+|\mathbb E\big[\Delta_{\mathrm{var}}^{(l)}\big]|.
\label{eq:gen_drift_var_bound}
\end{equation}
Fix $i\in[T-1]$ and condition on $W_{1:l}$. Let
$Q_i \triangleq P_{A_l,Y|i,W_{1:l}}$ and $\tilde Q_i \triangleq Q_{A_l,Y|i,W_{1:l}}$.
If the loss function $\ell(W,(X, Y))$ is $\sigma$-sub-Gaussian for all $W \in \mathcal{W}$ and $Z\in\mathcal{Z}$, it follows that for any fixed $W$,
the random variable $F_W(A_l,Y)$ is $\sigma$-subgaussian under $Q_i$ (and also under $\tilde Q_i$). By Lemma~\ref{lem:DV}, we can obtain
\[
\left|
\mathbb E_{Q_i}[F_W(A_l,Y)]-\mathbb E_{\tilde Q_i}[F_W(A_l,Y)]
\right|
\le
\sqrt{2\sigma^2\,D_{\mathrm{KL}}(\tilde Q_i\|Q_i)}.
\]
Substitute into $\Delta_{\mathrm{drift}}^{(l)}$
\[
|\Delta_{\mathrm{drift}}^{(l)}|
\le
\sum_{i=1}^{T-1}\sqrt{2\sigma^2\,D_{\mathrm{KL}}\!\big(Q_{A_l,Y|i,W_{1:l}}\ \big\|\ P_{A_l,Y|i,W_{1:l}}\big)}.
\]
Take the expectation and use Jensen and  Cauchy--Schwarz's inequality,
\begin{align}
\mathbb E\!\big[|\Delta_{\mathrm{drift}}^{(l)}|\big]
&\le
\sqrt{(T-1)\sum_{i=1}^{T-1}
\mathbb E\!\left[2\sigma^2\,D_{\mathrm{KL}}\!\big(Q_{A_l,Y|i,W_{1:l}}\ \big\|\ P_{A_l,Y|i,W_{1:l}}\big)\right]}
\nonumber\\
&=
(T-1)\sqrt{2\sigma^2\cdot
\frac{1}{T-1}\sum_{i=1}^{T-1}
\mathbb E\!\left[D_{\mathrm{KL}}\!\big(Q_{A_l,Y|i,W_{1:l}}\ \big\|\ P_{A_l,Y|i,W_{1:l}}\big)\right]}.
\label{eq:drift_bound_before_convexity_taskwise_block}
\end{align}
For each fixed $W_{1:l}$ and each $i$,
take expectation over $W_{1:l}$ and define
\[
\mathcal K^{(l)}
\triangleq
\frac{1}{T-1}\sum_{i=1}^{T-1}
\mathbb E\!\left[
D_{\mathrm{KL}}\!\big(Q_{A_l,Y|i,W_{1:l}}\ \big\|\ P_{A_l,Y|i,W_{1:l}}\big)
\right].
\]
Combining with \eqref{eq:drift_bound_before_convexity_taskwise_block} that
\begin{equation}
\mathbb E\!\big[|\Delta_{\mathrm{drift}}^{(l)}|\big]
\le
(T-1)\,\sqrt{2\sigma^2\,\mathcal K^{(l)}}.
\label{eq:drift_bound_final}
\end{equation}
Next, we bound $|\mathbb E[\Delta_{\mathrm{var}}^{(l)}]|$.
Recall the weights
\[
a\triangleq \frac{1}{m}\quad\text{(each old replay sample)},\qquad
b\triangleq \frac{1}{n}\quad\text{(each new-task sample)}.
\]
Let $N = (T-1)m+n$. Enumerate all representation--label pairs used in the
variance term as a single sequence
\[
U_1^{(l)},\ldots,U_N^{(l)},
\quad\text{where}\quad
\{U_k^{(l)}\}_{k=1}^{(T-1)m}=\mathbf U_{\mathrm{old}}^{(l)},\ \ 
\{U_k^{(l)}\}_{k=(T-1)m+1}^{N}=\mathbf U_{\mathrm{new}}^{(l)}.
\]
Define the corresponding weights
\[
w_k \triangleq
\begin{cases}
a, & 1\le k\le (T-1)m,\\
b, & (T-1)m<k\le N.
\end{cases}
\]
For each index $k$, define its centering marginal (given $W_{1:l}$) as
\[
Q_k(\cdot)\triangleq P_{U^{(l)}\mid \mathsf{src}(k),\,W_{1:l}}
=
\begin{cases}
Q_{A_l,Y|i,W_{1:l}}, & \mathsf{src}(k)=i\in[T-1],\\
P_{A_l,Y|T,W_{1:l}}, & \mathsf{src}(k)=\mathrm{new}.
\end{cases}
\]
where $\mathsf{src}(k)\in\{1,\dots,T-1,\mathrm{new}\}$ indicates the task source of $U_k^{(l)}$:
for $k\le (T-1)m$, $\mathsf{src}(k)=i$ if $U_k^{(l)}$ is a replayed pair from old task $i$, and for
$k>(T-1)m$, $\mathsf{src}(k)=\mathrm{new}$.

Define the random centering functional $\bar{F}_k(W)\triangleq\mathbb{E}_{U\sim Q_k}[F_W(U)],$ which is a random variable because it still depends on the top parameters $W_{l+1:L}$ through $F_W$.
With this notation, the variance term can be written as 
 \begin{equation}
 \Delta_{\mathrm{var}}^{(l)}
 = 
 \sum_{k=1}^{N} w_k\Big( 
 \bar{F}_k(W)
 - 
 F_W(U_k^{(l)}) \Big).
 \label{eq:var_compact_GHOST} 
 \end{equation} 

Introduce the filtration $\mathcal F_0\subset\mathcal F_1\subset\cdots\subset\mathcal F_N$ with
\[
\mathcal F_{k}\triangleq \sigma\!\big(W_{1:l},U_1^{(l)},\ldots,U_{k}^{(l)}\big),\qquad k=0,1,\ldots,N,
\]
(where $\mathcal F_0=\sigma(W_{1:l})$).

Fix any $k\in[N]$ and condition on $\mathcal F_{k-1}$.
Define the conditional joint law
\[
\mu_k \triangleq P_{W_{l+1:L},\,U_k^{(l)}\mid \mathcal F_{k-1}}.
\]
Define the corresponding reference conditional law
\begin{equation}
\nu_k^\star
\triangleq
P_{W_{l+1:L}\mid \mathcal F_{k-1}}\ \otimes\ Q_k .
\label{eq:nu_star_def}
\end{equation}
Taking expectation of \eqref{eq:var_compact_GHOST} and using the tower property and the definition of $\Delta_{\mathrm{var}}^{(l)}$
, we have
\begin{align}
\mathbb E\big[\Delta_{\mathrm{var}}^{(l)}\big]
&=
\sum_{k=1}^N w_k\,
\mathbb E\!\left[
\mathbb E\!\big[\bar{F}_k(W)\mid \mathcal F_{k-1}\big]
-
\mathbb E\!\big[F_W(U_k^{(l)})\mid \mathcal F_{k-1}\big]
\right],
\label{eq:var_bridge_exact_nustar}
\end{align}
Now we identify the two conditional expectations in \eqref{eq:var_bridge_exact_nustar} with expectations under $\nu_k^\star$ and $\mu_k$.
By definition $\bar{F}_k(W)=\mathbb{E}_{U\sim Q_k}[F_W(U)]$, $\mathcal{F}_{k-1}$ averages over the remaining randomness in $W_{l+1:L}$. Using $\nu_k^\star$ in \eqref{eq:nu_star_def},
\begin{equation}
\mathbb{E}
\left[
\bar{F}_k(W)\mid\mathcal{F}_{k-1}
\right]
=
\mathbb{E}_{\nu_k^\star}
\left[
F_W(U)\mid\mathcal{F}_{k-1}
\right]
\label{eq:thm1:bridge1}
,\end{equation}
and explicitly
\[
\mathbb{E}_{\nu_k^\star}[F_W(U)\mid\mathcal{F}_{k-1}]
=
\int\left(\int F_{(W_{1:l},w)}(u)Q_k(du)\right)P_{W_{l+1:L}|\mathcal{F}_{k-1}}(dw),\]
which is indeed $\mathcal{F}_{k-1}$-measurable.
By definition of $\mu_k$ in \eqref{eq:nu_star_def},
\begin{equation}
\mathbb{E}{\left[F_W(U_k^{(l)})\mid\mathcal{F}_{k-1}\right]}
=
\mathbb{E}_{\mu_k}{\left[F_W(U_k^{(l)})\mid\mathcal{F}_{k-1}\right]}.
\label{eq:thm1:bridge2}
\end{equation}
Substituting \eqref{eq:thm1:bridge1} and \eqref{eq:thm1:bridge2} into \eqref{eq:var_bridge_exact_nustar} gives the exact bridge identity,
\begin{equation}
\mathbb{E}[\Delta_{\mathrm{var}}^{(l)}]
=
\sum_{k=1}^N
w_k\mathbb{E}
\left[
\mathbb{E}_{\nu_k^\star}
\left[
F_W(U)\mid\mathcal{F}_{k-1}
\right]
-
\mathbb{E}_{\mu_k}
\left[
F_W(U_k^{(l)})\mid\mathcal{F}_{k-1}
\right]
\right].
\end{equation}
Applying $|\,\mathbb E[X]\,|\le \mathbb E[|X|]$ and the triangle inequality gives 
\begin{equation}
\Big|\mathbb E\big[\Delta_{\mathrm{var}}^{(l)}\big]\Big|
\le
\sum_{k=1}^N w_k\,
\mathbb E\!\left[
\Big|
\mathbb E_{\nu_k^\star}\!\big[F_W(U)\mid \mathcal F_{k-1}\big]
-
\mathbb E_{\mu_k}\!\big[F_W(U_k^{(l)})\mid \mathcal F_{k-1}\big]
\Big|
\right].
\label{eq:var_sum_step_nustar}
\end{equation}
Under the assumption that the loss is $\sigma$-subgaussian,
for any fixed $W$ the random variable $F_W(U)$ is $\sigma$-subgaussian under $Q_k$.
Hence, conditioning on $\mathcal F_{k-1}$, the same subgaussian proxy holds under $\nu^\star_k$, because it is a mixture over $W_{l+1:L}$ with the same $\sigma$. Therefore Lemma~\ref{lem:DV} applied conditionally on $\mathcal{F}_{k-1}$ yields
\begin{equation}
\Big|
\mathbb E_{\nu_k^\star}\!\big[F_W(U)\mid \mathcal F_{k-1}\big]
-
\mathbb E_{\mu_k}\!\big[F_W(U_k^{(l)})\mid \mathcal F_{k-1}\big]
\Big|
\le
\sqrt{2\sigma^2\,D_{\mathrm{KL}}(\mu_k\|\nu_k^\star)}.
\label{eq:single_step_dv_nustar}
\end{equation}
Let $\nu_k \triangleq P_{W_{l+1:L}\mid \mathcal F_{k-1}}\otimes P_{U_k^{(l)}\mid \mathcal F_{k-1}}$
be the product of the true marginals under $\mathcal{F}_{k-1}$. Then by the KL chain rule,
\begin{align}
D_{\mathrm{KL}}(\mu_k\|\nu_k^\star)
&=
D_{\mathrm{KL}}(\mu_k\|\nu_k)
+
\mathbb E_{\mu_k}\!\left[
\log\frac{d\nu_k}{d\nu_k^\star}(W_{l+1:L},U_k^{(l)})
\right]
\nonumber\\
&=
I\!\big(U_k^{(l)};\,W_{l+1:L}\mid \mathcal F_{k-1}\big)
+
D_{\mathrm{KL}}\!\Big(
P_{U_k^{(l)}\mid \mathcal F_{k-1}}
\ \big\|\
Q_k
\Big).
\label{eq:KL_MI_plus_mismatch}
\end{align}

Define
\[
I_k \triangleq I\!\big(U_k^{(l)};\,W_{l+1:L}\mid \mathcal F_{k-1}\big),
\qquad
M_k \triangleq D_{\mathrm{KL}}\!\Big(
P_{U_k^{(l)}\mid \mathcal F_{k-1}}
\ \big\|\
Q_k
\Big).
\]
Then by \eqref{eq:var_sum_step_nustar}--\eqref{eq:KL_MI_plus_mismatch},
\begin{equation}
\Big|
\mathbb E_{\nu_k^\star}\!\big[F_W(U)\mid \mathcal F_{k-1}\big]
-
\mathbb E_{\mu_k}\!\big[F_W(U_k^{(l)})\mid \mathcal F_{k-1}\big]
\Big|
\le
\sqrt{2\sigma^2\,(I_k+M_k)}.
\label{eq:single_step_split_MI_mismatch}
\end{equation}

Substituting \eqref{eq:single_step_split_MI_mismatch} into \eqref{eq:var_sum_step_nustar} yields
\begin{equation}
\Big|\mathbb E\big[\Delta_{\mathrm{var}}^{(l)}\big]\Big|
\le
\sqrt{2\sigma^2}\,
\mathbb E\!\left[
\sum_{k=1}^N w_k\sqrt{I_k+M_k}
\right].
\label{eq:var_two_term_MI_mismatch}
\end{equation}

Applying Cauchy--Schwarz to the inner weighted sum gives
\begin{equation}
\sum_{k=1}^N w_k\sqrt{I_k+M_k}
\le
\sqrt{\sum_{k=1}^N w_k^2}\cdot \sqrt{\sum_{k=1}^N (I_k+M_k)}.
\label{eq:cs_aggregate_no_split}
\end{equation}
Using Jensen's inequality $\mathbb E[\sqrt{X}]\le \sqrt{\mathbb E[X]}$ and noting,
\[
\sum_{k=1}^Nw_k^2=(T-1)m{\left(\frac{1}{m}\right)}^2+n{\left(\frac{1}{n}\right)}^2={\left(\frac{T-1}{m}+\frac{1}{n}\right)}=\frac{1}{N_{\mathrm{eff}}},
\]
we obtain
\begin{equation}
\Big|\mathbb E\big[\Delta_{\mathrm{var}}^{(l)}\big]\Big|
\le
\sqrt{\frac{2\sigma^2}{N_{\mathrm{eff}}}}\,
\sqrt{
\mathbb E\!\left[\sum_{k=1}^N I_k\right]
+
\mathbb E\!\left[\sum_{k=1}^N M_k\right]
}.
\label{eq:var_synergy_part}
\end{equation}

By the chain rule for mutual information and Lemma~\ref{lem:interaction},
\begin{equation*}
\mathbb E\!\left[\sum_{k=1}^N I_k\right]
=
I(\mathbf U^{(l)};\,W_{l+1:L}\mid W_{1:l})
=
\mathcal S^{(l)}+\mathcal P^{(l)}-\mathcal R^{(l)}.
\end{equation*}
Moreover, by the chain rule for KL under the task-wise block product centering
$\widetilde{Q}_{\mathbf{U}^{(l)}|W_{1:l}}=\bigotimes_{k=1}^NQ_{k}$, we have
\begin{equation*}
\mathbb E\!\left[\sum_{k=1}^N M_k\right]
=
\mathbb E\!\left[
D_{\mathrm{KL}}\!\Big(
P_{\mathbf U^{(l)}\mid W_{1:l}}
\ \big\|\ 
\widetilde{Q}_{\mathbf{U}^{(l)}|W_{1:l}}
\Big)
\right]
=
\mathcal C^{(l)}.
\end{equation*}
Combining \eqref{eq:var_synergy_part}, we can get
\begin{equation}
\Big|\mathbb E\big[\Delta_{\mathrm{var}}^{(l)}\big]\Big|
\le
\sqrt{\frac{2\sigma^2}{N_{\mathrm{eff}}}
\Big(\mathcal S^{(l)}+\mathcal P^{(l)}-\mathcal R^{(l)}+\mathcal C^{(l)}\Big)}.
\label{eq:var_bound_with_mismatch}
\end{equation}
Finally, substituting \eqref{eq:drift_bound_final} and \eqref{eq:var_bound_with_mismatch} into
\eqref{eq:gen_drift_var_bound} completes the proof.
\end{proof}

\subsection{Proof of Corollary~\ref{cor:input_layer}}
\begin{restatecorollary}{\ref{cor:input_layer}}[Restate]
At the input layer $l=0$, the hierarchical generalization bound simplifies to:
    \begin{align*}
    |\mathrm{gen}_W|
    \;\le\;
    \sqrt{\frac{2\sigma^2}{N_{\mathrm{eff}}}\,I\!\big(S;W\big)},
    \end{align*}
where $S$ is the original $(X,Y)$ pair in the replay buffer and the current task dataset.
\end{restatecorollary}
\begin{proof}
The proof adopts the same framework as Theorem~\ref{thm:synergy_drift}. At $l=0$, the representation $A_0=X$ and $W_{1:0}$ is empty. We use the same notation $F_W(Z)=\ell(W,Z)$ for $Z=(X,Y)\in\mathcal Z$.
By definition,
\[
\mathrm{gen}_W=\mathbb E_{W,D^{1:T}}\big[\mathcal L(W)-\hat{\mathcal L}(W)\big],
\]
where $\mathcal L$ and $\hat{\mathcal L}$ are population and empirical risks.
Expanding $\mathcal L(W)$ and $\hat{\mathcal L}(W)$ as in~\eqref{eq:basic_old_new_split} yields the exact old and new split
\begin{align}
\mathcal L(W)-\hat{\mathcal L}(W)
&=
\sum_{i=1}^{T-1}\left(\mathbb E_{Z\sim\mathcal D_i}[F_W(Z)]-\frac{1}{m}\sum_{j\in\mathcal I_i}F_W(Z_j^i)\right)
+\left(\mathbb E_{Z\sim\mathcal D_T}[F_W(Z)]-\frac{1}{n}\sum_{j=1}^nF_W(Z_j^T)\right).
\label{eq:input_old_new_split}
\end{align}
Recall $P_{A_l,Y|i,W_{1:l}}$ denote the conditional law of the
representation--label pair $(A_l,Y)$ induced by $(X,Y)\sim\mathcal D_i$ under the current
representation parameters $W_{1:l}$. Therefore, when $l=0$, for each old task $i\in[T-1]$, the input-layer population law is $P_{X,Y|i} \equiv \mathcal D_i$. Similarly, the replay proxy $\hat{P}_{A_0,Y|S^i}=P_{S^i}=\frac{1}{m}\sum_{j\in\mathcal I_i}\delta_{Z_j^i}(\cdot)$, which is uniform on the buffer subset. Define its unconditional centroid law at $l=0$ by
\begin{equation}
\label{eq:input_centroid_def}
Q_{A_0,Y|i}\triangleq \mathbb E\big[\hat{P}_{S^i}\big].
\end{equation}
Using the same add-and-subtract step as~\eqref{eq:drift_def_block} (with $W_{1:0}$ void),
we obtain the exact decomposition
\begin{equation}
\label{eq:input_exact_decomp}
\mathcal L(W)-\hat{\mathcal L}(W)
=
\Delta_{\mathrm{drift}}^{(0)}+\Delta_{\mathrm{var}}^{(0)},
\end{equation}
where
\begin{align}
\Delta_{\mathrm{drift}}^{(0)}
&\triangleq
\sum_{i=1}^{T-1}\Big(
\mathbb E_{Z\sim \mathcal D_i}[F_W(Z)]-\mathbb E_{Z\sim Q_{A_0,Y|i}}[F_W(Z)]
\Big),
\label{eq:input_drift_def}\\
\Delta_{\mathrm{var}}^{(0)}
&\triangleq
\sum_{i=1}^{T-1}\Big(
\mathbb E_{Z\sim Q_{A_0,Y|i}}[F_W(Z)]-\frac{1}{m}\sum_{j\in\mathcal I_i}F_W(Z_j^i)
\Big)
+
\Big(
\mathbb E_{Z\sim \mathcal D_T}[F_W(Z)]-\frac{1}{n}\sum_{j=1}^nF_W(Z_j^T)
\Big).
\label{eq:input_var_def}
\end{align}
At the input layer, the replay centroid coincides with the task distribution:
indeed, by symmetry of uniform subsampling from i.i.d.\ samples,
a uniformly chosen element from the random subset $M^i$ has marginal law $\mathcal D_i$,
hence $Q_{A_0,Y|i}=\mathcal D_i=P_{X,Y|i}$.
Therefore, $\Delta_{\mathrm{drift}}^{(0)}=0$ in~\eqref{eq:input_drift_def}, and
\begin{equation}
\label{eq:input_gen_reduction}
|\mathrm{gen}_W|
=
\big|\mathbb E[\mathcal L(W)-\hat{\mathcal L}(W)]\big|
=
\big|\mathbb E[\Delta_{\mathrm{var}}^{(0)}]\big|.
\end{equation}
Let $N=(T-1)m+n$. Enumerate all examples used in~\eqref{eq:input_var_def} as a single sequence
$U_1^{(0)},\dots,U_N^{(0)}\in\mathcal Z$, where the first $(T-1)m$ terms come from the replay buffers
and the last $n$ terms come from the current task dataset.
Define the weights
\[
a\triangleq \frac{1}{m},\qquad b\triangleq \frac{1}{n},\qquad
w_k\triangleq
\begin{cases}
a, & 1\le k\le (T-1)m,\\
b, & (T-1)m<k\le N,
\end{cases}
\]
and let $\mathsf{src}(k)\in\{1,\dots,T-1,\mathrm{new}\}$ denote the source task of $U_k^{(0)}$. For each $k$, let
\[
Q_k(\cdot)\triangleq P_{U^{(0)}\mid \mathsf{src}(k)}
=
\begin{cases}
\mathcal D_i, & \mathsf{src}(k)=i\in[T-1],\\
\mathcal D_T, & \mathsf{src}(k)=\mathrm{new},
\end{cases}
\]
Define the random centering functional $\bar{F}_k(W)\triangleq\mathbb{E}_{U\sim Q_k}[F_W(U)]$, with this notation, the variance term can be written as 
 \begin{equation}
 \Delta_{\mathrm{var}}^{(0)}
 = 
 \sum_{k=1}^{N} w_k\Big( 
 \bar{F}_k(W)
 - 
 F_W(U_k^{(0)}) \Big).
 \label{eq:cor1:var_compact} 
 \end{equation} 
Define the filtration
\[
\mathcal F_k\triangleq \sigma\!\big(U_1^{(0)},\dots,U_k^{(0)}\big),\qquad k=0,1,\dots,N.
\]
For each $k$, let the joint law be $\mu_k \triangleq P_{W,\,U_k^{(0)}\mid \mathcal F_{k-1}}$ and the corresponding reference conditional law $\nu_k^\star \triangleq P_{W\mid \mathcal F_{k-1}}\ \otimes\ Q_k$.
Taking expectation of \eqref{eq:cor1:var_compact} and using the tower property and the definition of $\Delta_{\mathrm{var}}^{(0)}$
, we have
\begin{align}
\mathbb E\big[\Delta_{\mathrm{var}}^{(0)}\big]
&=
\sum_{k=1}^N w_k\,
\mathbb E\!\left[
\mathbb E\!\big[\bar{F}_k(W)\mid \mathcal F_{k-1}\big]
-
\mathbb E\!\big[F_W(U_k^{(0)})\mid \mathcal F_{k-1}\big]
\right],
\label{eq:cor1:var_bridge_exact_nustar}
\end{align}
Now we identify the two conditional expectations in \eqref{eq:cor1:var_bridge_exact_nustar} with expectations under $\nu_k^\star$ and $\mu_k$.
By definition $\bar{F}_k(W)=\mathbb{E}_{U\sim Q_k}[F_W(U)]$, $\mathcal{F}_{k-1}$ averages over the remaining randomness in $W$. Using $\nu_k^\star$,
\begin{equation}
\mathbb{E}
\left[
\bar{F}_k(W)\mid\mathcal{F}_{k-1}
\right]
=
\mathbb{E}_{\nu_k^\star}
\left[
F_W(U)\mid\mathcal{F}_{k-1}
\right]
\label{eq:cor1:bridge1}
,\end{equation}
and explicitly
\[
\mathbb{E}_{\nu_k^\star}[F_W(U)\mid\mathcal{F}_{k-1}]
=
\int\left(\int F_w(u)\,Q_k(du)\right)P_{W|\mathcal{F}_{k-1}}(dw),\]
which is indeed $\mathcal{F}_{k-1}$-measurable.
By definition of $\mu_k$,
\begin{equation}
\mathbb{E}{\left[F_W(U_k^{(0)})\mid\mathcal{F}_{k-1}\right]}
=
\mathbb{E}_{\mu_k}{\left[F_W(U_k^{(0)})\mid\mathcal{F}_{k-1}\right]}.
\label{eq:cor1:bridge2}
\end{equation}
Substituting \eqref{eq:cor1:bridge1} and \eqref{eq:cor1:bridge2} into \eqref{eq:cor1:var_bridge_exact_nustar} gives the exact bridge identity,
\begin{equation}
\mathbb{E}[\Delta_{\mathrm{var}}^{(0)}]
=
\sum_{k=1}^N
w_k\mathbb{E}
\left[
\mathbb{E}_{\nu_k^\star}
\left[
F_W(U)\mid\mathcal{F}_{k-1}
\right]
-
\mathbb{E}_{\mu_k}
\left[
F_W(U_k^{(0)})\mid\mathcal{F}_{k-1}
\right]
\right].
\end{equation}
Applying $|\,\mathbb E[X]\,|\le \mathbb E[|X|]$ and the triangle inequality gives 
\begin{equation}
\Big|\mathbb E\big[\Delta_{\mathrm{var}}^{(0)}\big]\Big|
\le
\sum_{k=1}^N w_k\,
\mathbb E\!\left[
\Big|
\mathbb E_{\nu_k^\star}\!\big[F_W(U)\mid \mathcal F_{k-1}\big]
-
\mathbb E_{\mu_k}\!\big[F_W(U_k^{(0)})\mid \mathcal F_{k-1}\big]
\Big|
\right].
\label{eq:cor1:var_sum_step_nustar}
\end{equation}
Under the assumption that the loss is $\sigma$-subgaussian,
for any fixed $W$ the random variable $F_W(U)$ is $\sigma$-subgaussian under $Q_k$.
Hence, conditioning on $\mathcal F_{k-1}$, the same subgaussian proxy holds under $\nu^\star_k$, because it is a mixture over $W$ with the same $\sigma$. Therefore Lemma~\ref{lem:DV} applied conditionally on $\mathcal{F}_{k-1}$ yields
\begin{equation}
\Big|
\mathbb E_{\nu_k^\star}\!\big[F_W(U)\mid \mathcal F_{k-1}\big]
-
\mathbb E_{\mu_k}\!\big[F_W(U_k^{(0)})\mid \mathcal F_{k-1}\big]
\Big|
\le
\sqrt{2\sigma^2\,D_{\mathrm{KL}}(\mu_k\|\nu_k^\star)}.
\label{eq:cor1:single_step_dv_nustar}
\end{equation}
Let $\nu_k \triangleq P_{W\mid \mathcal F_{k-1}}\otimes P_{U_k^{(0)}\mid \mathcal F_{k-1}}$
be the product of the true marginals under $\mathcal{F}_{k-1}$. Then by the KL chain rule,
\begin{equation}
D_{\mathrm{KL}}(\mu_k\|\nu_k^\star)
=
I\!\big(U_k^{(0)};\,W\mid \mathcal F_{k-1}\big)
+
D_{\mathrm{KL}}\!\Big(
P_{U_k^{(0)}\mid \mathcal F_{k-1}}
\ \big\|\
Q_k
\Big),
\label{eq:cor1:KL_MI_plus_mismatch}
\end{equation}

Define
\[
I_k \triangleq I\!\big(U_k^{(0)};\,W\mid \mathcal F_{k-1}\big),
\qquad
M_k \triangleq D_{\mathrm{KL}}\!\Big(
P_{U_k^{(0)}\mid \mathcal F_{k-1}}
\ \big\|\
Q_k
\Big).
\]
Substituting into \eqref{eq:cor1:var_sum_step_nustar}  and following exactly the
Cauchy--Schwarz + Jensen aggregation in~\eqref{eq:var_two_term_MI_mismatch}--\eqref{eq:var_synergy_part}
yields
\begin{equation}
\Big|\mathbb E\big[\Delta_{\mathrm{var}}^{(0)}\big]\Big|
\le
\sqrt{\frac{2\sigma^2}{N_{\mathrm{eff}}}}\,
\sqrt{
\mathbb E\!\left[\sum_{k=1}^N I_k\right]
+
\mathbb E\!\left[\sum_{k=1}^N M_k\right]
}.
\label{eq:cor1:var_synergy_part}
\end{equation}

By the mutual information chain rule,
\begin{equation*}
\mathbb E\!\left[\sum_{k=1}^N I_k\right]
=
I(S;W).
\end{equation*}
Moreover, by the chain rule for KL under the task-wise block product centering
$\widetilde{Q}_{\mathbf{U}^{(0)}}\triangleq\left(\bigotimes_{i=1}^{T-1}(\mathcal{D}_i)^{\otimes m}\right)\otimes(\mathcal{D}_T)^{\otimes n}$, we have
\[
\mathbb E\!\left[\sum_{k=1}^N M_k\right]
=
D_{\mathrm{KL}}\!\big(P_{S}\ \big\|\ \widetilde{Q}_{\mathbf{U}^{(0)}}\big).
\]
At $l=0$, the sequence $S$ is block-wise i.i.d.\ by construction (each task dataset is i.i.d.,
tasks are mutually independent, and each buffer contributes $m$ distinct coordinates from an i.i.d.\ task dataset);
hence $P_{S}=\widetilde{Q}_{\mathbf{U}^{(0)}}$ and the above KL term is zero.
Combining with~\eqref{eq:input_gen_reduction} and~\eqref{eq:cor1:var_synergy_part} yields
\[
|\mathrm{gen}_W|
\le
\sqrt{\frac{2\sigma^2}{N_{\mathrm{eff}}}\,I(S;W)}.
\]
\end{proof}

\subsection{Proof of Corollary~\ref{cor:output_layer}}
\begin{restatecorollary}{\ref{cor:output_layer}}[Restate]
At the output layer $l=L$, the hierarchical generalization bound simplifies to:
\begin{align*}
|\mathrm{gen}_W|
\;\le\;
(T-1)\sqrt{2\sigma^2\,\mathcal K^{(L)}}
\;+\;
\sqrt{\frac{2\sigma^2}{N_{\mathrm{eff}}}\,\mathcal C^{(L)}},
\end{align*}
where $\mathcal K^{(L)}=\frac{1}{T-1}\sum_{i=1}^{T-1}\mathbb E_W[D_{\mathrm{KL}}( Q_{A_L,Y|i,W}\|P_{A_L,Y|i,W})]$, $\mathcal C^{(L)} = \mathbb E_W[D_{\mathrm{KL}}( P_{\mathbf U^{(L)}\mid W} \| \widetilde{Q}_{\mathbf{U}^{(L)}|W})]$.
\end{restatecorollary}
\begin{proof}
We follow the proof structure of Theorem~\ref{thm:synergy_drift} and highlight the simplifications at $l=L$. At $l=L$, the representation is the network output $A_L=f_W(X)$. The top subnetwork above layer $L$ is empty, so we may regard $g_{W_{L+1:L}}$ as the identity map and write the loss as a function of the output--label pair:
\[
F_W(t,y)\triangleq \ell(t,y),
\qquad\text{so that}\qquad
\ell(W,(X,Y)) = F_W(A_L,Y).
\]
All layer-$L$ distributions below are understood as conditional laws given the full parameter $W=W_{1:L}$. As in~\eqref{eq:basic_old_new_split}, the population--empirical
gap admits the exact split into old and new tasks. Introducing the layer-$L$
population law $P_{A_L,Y|i,W_{1:L}}$ of $(A_L,Y)$ under $(X,Y)\sim\mathcal D_i$
and the replay proxy $\hat{P}_{A_L,Y|S^i,W_{1:L}}$ which is uniform over
$\{(A_{j,L}^i,Y_j^i):j\in\mathcal I_i\}$, recall the replay centroid
\begin{equation}
    \label{eq:output_centroid_def}
    Q_{A_L,Y|i,W_{1:L}}= \mathbb E\!\left[\hat{P}_{A_L,Y|S^i,W_{1:L}}\ \middle|\ W\right].
\end{equation}
Adding and subtracting $\mathbb E_{Q_{A_L,Y|i,W_{1:L}}}[F_W(A_L,Y)]$
inside each old-task gap yields an exact decomposition
\begin{equation}
\label{eq:output_exact_decomp}
\mathcal L(W)-\hat{\mathcal L}(W)
=
\Delta_{\mathrm{drift}}^{(L)}+\Delta_{\mathrm{var}}^{(L)},
\end{equation}
where
\begin{align}
\Delta_{\mathrm{drift}}^{(L)}
&\triangleq
\sum_{i=1}^{T-1}\Big(
\mathbb E_{Z\sim \mathcal D_i}[F_W(Z)]-\mathbb E_{Z\sim Q_{A_L,Y|i,W_{1:L}}}[F_W(Z)]
\Big),
\label{eq:output_drift_def}\\
\Delta_{\mathrm{var}}^{(L)}
&\triangleq
\sum_{i=1}^{T-1}\Big(
\mathbb E_{Z\sim Q_{A_L,Y|i,W_{1:L}}}[F_W(Z)]-\frac{1}{m}\sum_{j\in\mathcal I_i}F_W(Z_j^i)
\Big)
+
\Big(
\mathbb E_{Z\sim \mathcal D_T}[F_W(Z)]-\frac{1}{n}\sum_{j=1}^nF_W(Z_j^T)
\Big).
\label{eq:output_var_def}
\end{align}
By definition,
\begin{equation}
\label{eq:output_gen_reduction}
|\mathrm{gen}_W|
=
\left|\mathbb E[\mathcal L(W)-\hat{\mathcal L}(W)]\right|
\le
\mathbb E\big[|\Delta_{\mathrm{drift}}^{(L)}|\big]+|\mathbb E\big[\Delta_{\mathrm{var}}^{(L)}\big]|.
\end{equation}
Fix an old task $i\in[T-1]$ and condition on $W$. Since $F_W(A_L,Y)$ is $\sigma$-subgaussian under both
$P_{A_L,Y|i,W_{1:L}}$ and $Q_{A_L,Y|i,W_{1:L}}$, by Lemma~\ref{lem:DV}, we can obtain
\[
\Big|
\mathbb E_{P_{A_L,Y|i,W_{1:L}}}[F_W]
-
\mathbb E_{Q_{A_L,Y|i,W_{1:L}}}[F_W]
\Big|
\le
\sqrt{2\sigma^2\,D_{\mathrm{KL}}\!\Big(Q_{A_L,Y|i,W_{1:L}}\ \big\|\ P_{A_L,Y|i,W_{1:L}}\Big)}.
\]
Averaging over $i$ and applying Cauchy--Schwarz exactly as in the main proof gives
\begin{equation}
\label{eq:cor2:drift_bound_final}
\mathbb E\!\big[|\Delta_{\mathrm{drift}}^{(L)}|\big]
\le
(T-1)\sqrt{2\sigma^2\,\mathcal K^{(L)}},
\end{equation}
with $\mathcal K^{(L)}$ as stated.
Let $N=(T-1)m+n$ and enumerate the layer-$L$ training pairs as a single sequence
$U_1^{(L)},\dots,U_N^{(L)}$, with weights
\[
a\triangleq \frac{1}{m},\qquad b\triangleq \frac{1}{n},\qquad
w_k\triangleq
\begin{cases}
a, & 1\le k\le (T-1)m,\\
b, & (T-1)m<k\le N,
\end{cases}
\]
and let $\mathsf{src}(k)\in\{1,\dots,T-1,\mathrm{new}\}$ denote the source task of $U_k^{(L)}$. For each $k$, let
\[
Q_k(\cdot)\triangleq P_{U^{(L)}\mid \mathsf{src}(k),\,W}
=
\begin{cases}
Q_{A_L,Y|i,W_{1:L}}, & \mathsf{src}(k)=i\in[T-1],\\
P_{A_L,Y|T,W_{1:L}}, & \mathsf{src}(k)=\mathrm{new}.
\end{cases}
\]
With this notation, the variance term can be written as 
 \begin{equation}
 \Delta_{\mathrm{var}}^{(L)}
 = 
 \sum_{k=1}^{N} w_k\Big( 
 \mathbb E_{U\sim Q_k}[F_W(U)]
 - 
 F_W(U_k^{(L)}) \Big).
 \label{eq:cor2:var_compact} 
 \end{equation} 
Define the filtration ($\mathcal{F}_0=\sigma(W)$)
\[
\mathcal F_k\triangleq \sigma\!\big(W,U_1^{(L)},\dots,U_k^{(L)}\big),\qquad k=0,1,\dots,N.
\]
For each $k$, define the true conditional law of the $k$-th coordinate
\[
\mu_k \triangleq P_{U_k^{(L)}\mid \mathcal F_{k-1}}.
\]
Since at $l=L$ the ``top parameter'' $W_{L+1:L}$ is empty (a deterministic
constant), the reference law $\nu_k^\star$ reduces to the centering marginal:
\[
\nu_k^\star \triangleq Q_k.
\]
Taking expectation of \eqref{eq:cor2:var_compact} and using the tower property, we have
\begin{align}
\mathbb E\big[\Delta_{\mathrm{var}}^{(L)}\big]
&=
\sum_{k=1}^N w_k\,
\mathbb E\!\left[
\mathbb E\!\big[{F}_W(U)\mid \mathcal F_{k-1}\big]
-
\mathbb E\!\big[F_W(U_k^{(L)})\mid \mathcal F_{k-1}\big]
\right],
\label{eq:cor2:var_bridge_exact_nustar}
\end{align}
Since $W$ is $\mathcal F_{k-1}$-measurable, $\mathbb E\big[{F}_W(U)\mid \mathcal F_{k-1}\big]=\mathbb{E}_{\nu_k^\star} \left[ F_W(U) \right]$. By definition of $\mu_k$, $\mathbb{E}[F_{W}(U_{k}^{(L)})\mid\mathcal{F}_{k-1}]=\mathbb{E}_{\mu_k}[F_W(U_k^{(L)})\mid\mathcal{F}_{k-1}]$. Substituting into \eqref{eq:cor2:var_bridge_exact_nustar} gives the bridge identity,
\begin{equation}
\label{eq:cor2:bridge1}
\mathbb{E}[\Delta_{\mathrm{var}}^{(L)}]
=
\sum_{k=1}^N
w_k\mathbb{E}
\left[
\mathbb{E}_{\nu_k^\star}
\left[
F_W(U)
\right]
-
\mathbb{E}_{\mu_k}
\left[
F_W(U_k^{(L)})\mid\mathcal{F}_{k-1}
\right]
\right].
\end{equation}
Applying $|\,\mathbb E[X]\,|\le \mathbb E[|X|]$ and the triangle inequality gives 
\begin{equation}
\Big|\mathbb E\big[\Delta_{\mathrm{var}}^{(L)}\big]\Big|
\le
\sum_{k=1}^N w_k\,
\mathbb E\!\left[
\Big|
\mathbb E_{\nu_k^\star}\!\big[F_W(U)\big]
-
\mathbb E_{\mu_k}\!\big[F_W(U_k^{(L)})\mid \mathcal F_{k-1}\big]
\Big|
\right].
\label{eq:cor2:var_sum_step_nustar}
\end{equation}
By Lemma~\ref{lem:DV}, we can obtain
\[
\Big|
\mathbb E_{\nu_k^{\star}}[F_W(U)]
-
\mathbb E_{\mu_k}[F_W(U_k^{(L)})\mid \mathcal F_{k-1}]
\Big|
\le
\sqrt{2\sigma^2\,D_{\mathrm{KL}}(\mu_k\|Q_k)}.
\]
Aggregating over $k$ with Cauchy--Schwarz and Jensen gives
\[
\Big|\mathbb E[\Delta_{\mathrm{var}}^{(L)}]\Big|
\le
\sqrt{\frac{2\sigma^2}{N_{\mathrm{eff}}}}
\sqrt{
\mathbb E\!\left[\sum_{k=1}^N D_{\mathrm{KL}}\!\big(\mu_k\|Q_k\big)\right]
}.
\]
Finally, by the chain rule for KL under the product reference
$\widetilde{Q}_{\mathbf{U}^{(L)}|W}=\bigotimes_{k=1}^N Q_k$, we have
\[
\mathbb E\!\left[\sum_{k=1}^N D_{\mathrm{KL}}\!\big(\mu_k\|Q_k\big)\right]
=
\mathbb E\!\left[
D_{\mathrm{KL}}\!\Big(P_{\mathbf U^{(L)}\mid W}\ \big\|\ \widetilde{Q}_{\mathbf{U}^{(L)}|W}\Big)
\right]
=
\mathcal C^{(L)}.
\]
Hence
\[
\Big|\mathbb E[\Delta_{\mathrm{var}}^{(L)}]\Big|
\le
\sqrt{\frac{2\sigma^2}{N_{\mathrm{eff}}}\,\mathcal C^{(L)}}.
\]
Then $\mathcal S^{(L)},\mathcal P^{(L)},\mathcal R^{(L)}$
are mutual/interaction information involving $W_{L+1:L}$ conditioned on $W_{1:L}=W$.
Since $W_{L+1:L}$ is deterministic (empty), all these quantities are zero:
\[
\mathcal S^{(L)}=\mathcal P^{(L)}=\mathcal R^{(L)}=0.
\]

Combining the drift and variance bounds completes the proof.
\end{proof}

\subsection{Proof of Theorem~\ref{thm:wasserstein}}
\begin{restatetheorem}{\ref{thm:wasserstein}}[Restate]
Assume the loss function $\ell(\cdot, y)$ is $\rho_{0}$-Lipschitz and the activation functions $\phi_l$ are $\rho_{l}$-Lipschitz. Then, at time $T$, we have:
\begin{align*}
\big|\mathrm{gen}_W\big|
\le
\min_{l\in\{0,\dots,L\}}
\mathbb E\Bigg[
\bar\rho_l(W)\cdot\sum_{i=1}^{T} \mathcal W_1\!\Big(\hat{P}_{A_l,Y|S^i,W_{1:l}},\,P_{A_l,Y|i,W_{1:l}}\Big)
\Bigg],
\end{align*}
where $\bar\rho_l(W):=\rho_0\left(1\vee\prod_{h=l+1}^{L}\rho_h\|W_h\|_{\mathrm{op}}\right).$
\end{restatetheorem}
\begin{proof}
Fix an arbitrary $l\in\{0,1,\dots,L\}$. Define the upper sub-network mapping from layer $l$ to the output: $G_{W_{l+1:L}}(a):=g_{W_L}\circ g_{W_{L-1}}\circ\cdots\circ g_{W_{l+1}}(a)$, and define the re-parameterized loss on $(a,y)\in\mathcal A_l\times\mathcal Y$: $f_W^{(l)}(a,y):=\ell\big(G_{W_{l+1:L}}(a),\,y\big)$. We first bound the Lipschitz constant of $f_W^{(l)}$. First, for any layer $h\in\{l+1,\dots,L\}$ and any $u,u'$,
\begin{align*}
\|g_{W_h}(u)-g_{W_h}(u')\|_2
&=\|\phi_h(W_hu)-\phi_h(W_hu')\|_2\\
&\le \rho_h\|W_hu-W_hu'\|_2 \\
&\le \rho_h\|W_h\|_{\mathrm{op}}\,\|u-u'\|_2.
\end{align*}
Hence $g_{W_h}$ is $\rho_h\|W_h\|_{\mathrm{op}}$-Lipschitz. Next, define for integers $p\le q$ the partial composition $ G_{p:q}:=g_{W_q}\circ g_{W_{q-1}}\circ\cdots\circ g_{W_p}$. Then for any $u,u'$,
\begin{align*}
\|G_{p:q}(u)-G_{p:q}(u')\|_2
&=\|g_{W_q}(G_{p:q-1}(u))-g_{W_q}(G_{p:q-1}(u'))\|_2\\
&\le \mathrm{Lip}(g_{W_q})\cdot \|G_{p:q-1}(u)-G_{p:q-1}(u')\|_2\\
&\le \big(\rho_q\|W_q\|_{\mathrm{op}}\big)\cdot \mathrm{Lip}(G_{p:q-1})\cdot \|u-u'\|_2\\
&\le \left(\rho_q\|W_q\|_{\mathrm{op}}\right)\left(\prod_{h=p}^{q-1}\rho_h\|W_h\|_{\mathrm{op}}\right)\|u-u'\|_2,
\end{align*}
Taking $p=l+1$ and $q=L$ gives
\[
\|G_{W_{l+1:L}}(a)-G_{W_{l+1:L}}(a')\|_2
\le
\left(\prod_{h=l+1}^{L}\rho_h\|W_h\|_{\mathrm{op}}\right)\|a-a'\|_2.
\]
Denote $\alpha_l(W):=\prod_{h=l+1}^{L}\rho_h\|W_h\|_{\mathrm{op}}$ for short.

Using that $\ell(\hat y,y)$ is $\rho_0$-Lipschitz in $(\hat y,y)$ under the Euclidean norm, for any $(a,y),(a',y')$ we have
\begin{align*}
|f_{W,l}(a,y)-f_{W,l}(a',y')|
&=\big|\ell(G_{W_{l+1:L}}(a),y)-\ell(G_{W_{l+1:L}}(a'),y')\big|\\
&\le \rho_0\sqrt{\|G_{W_{l+1:L}}(a)-G_{W_{l+1:L}}(a')\|_2^2+\|y-y'\|_2^2}\\
&\le \rho_0\sqrt{\alpha_l(W)^2\|a-a'\|_2^2+\|y-y'\|_2^2}.
\end{align*}
If $\alpha_l(W)\le 1$, then $\alpha_l(W)^2\|a-a'\|_2^2+\|y-y'\|_2^2\le \|a-a'\|_2^2+\|y-y'\|_2^2$.
If $\alpha_l(W)\ge 1$, then $\alpha_l(W)^2\|a-a'\|_2^2+\|y-y'\|_2^2\le \alpha_l(W)^2(\|a-a'\|_2^2+\|y-y'\|_2^2)$.
In both cases,
\[
\sqrt{\alpha_l(W)^2\|a-a'\|_2^2+\|y-y'\|_2^2}
\le (1\vee \alpha_l(W))\sqrt{\|a-a'\|_2^2+\|y-y'\|_2^2}
=(1\vee \alpha_l(W))\,d\big((a,y),(a',y')\big).
\]
Therefore,
\[
|f_{W,l}(a,y)-f_{W,l}(a',y')|
\le \rho_0(1\vee \alpha_l(W))\,d\big((a,y),(a',y')\big)
= \bar\rho_l(W)\,d\big((a,y),(a',y')\big),
\]
which shows $\mathrm{Lip}(f_{W,l})\le \bar\rho_l(W)$ with $\bar\rho_l(W)$ defined 
\begin{equation}
\label{eq:thm2-rho-bar}
\bar\rho_l(W):=\rho_0\left(1\vee\prod_{h=l+1}^{L}\rho_h\|W_h\|_{\mathrm{op}}\right).
\end{equation}
For each task $i\in[T]$, by definition of $P_{A_l,Y|i,W_{1:l}}$ we have
\[
\mathbb E_{Z\sim\mathcal D_i}\ell(W,Z)
=\mathbb E_{(A_l,Y)\sim P_{A_l,Y|i,W_{1:l}}} f_{W,l}(A_l,Y).
\]
Moreover, by definition of $\hat{P}_{A_l,Y|S^i,W_{1:l}}$,
\[
\mathbb E_{U\sim \hat{P}_{A_l,Y|S^i,W_{1:l}}} f_{W,l}(U)
=
\begin{cases}
\frac1m\sum_{j\in\mathcal I_i} f_{W,l}(A_{j,l}^i,Y_j^i)=\frac1m\sum_{j\in\mathcal I_i}\ell(W,Z_j^i), & i\in[T-1],\\[4pt]
\frac1n\sum_{j=1}^{n} f_{W,l}(A_{j,l}^T,Y_j^T)=\frac1n\sum_{j=1}^{n}\ell(W,Z_j^T), & i=T.
\end{cases}
\]
Substituting these identities into population $\mathcal{L}(W)$ and empirical risks $\hat{\mathcal{L}}(W)$ yields, for the fixed realization,
\[
\mathcal L(W)-\hat{\mathcal L}(W)
=
\sum_{i=1}^{T}\left(
\mathbb E_{P_{A_l,Y|i,W_{1:l}}} \big[f_{W,l} \big]
-
\mathbb E_{\hat{P}_{A_l,Y|S^i,W_{1:l}}} \big[f_{W,l} \big]
\right).
\]
By Kantorovich--Rubinstein Lemma~\ref{lem:KR} for the $1$-Wasserstein distance, for any Lipschitz function $f$ on $(\mathcal A_l\times\mathcal Y,d)$,
\[
\big|\mathbb E_{\mu}f-\mathbb E_{\nu}f\big|\le \mathrm{Lip}(f)\,\mathcal W_1(\mu,\nu).
\]
Applying this with $f=f_{W,l}$, $\mu=P_{A_l,Y|i,W_{1:l}}$, and $\nu=\hat{P}_{A_l,Y|S^i,W_{1:l}}$, and using $\mathrm{Lip}(f_{W,l})\le \bar\rho_l(W)$ from the previous steps, we obtain for each $i\in[T]$:
\[
\left|
\mathbb E_{P_{A_l,Y|i,W_{1:l}}} f_{W,l}
-
\mathbb E_{\hat{P}_{A_l,Y|S^i,W_{1:l}}} f_{W,l}
\right|
\le
\bar\rho_l(W)\,
\mathcal W_1\!\Big(\hat{P}_{A_l,Y|S^i,W_{1:l}},\,P_{A_l,Y|i,W_{1:l}}\Big).
\]
Hence, by the triangle inequality,
\[
\big|\mathcal L(W)-\hat{\mathcal L}(W)\big|
\le
\bar\rho_l(W)\cdot \sum_{i=1}^{T}
\mathcal W_1\!\Big(\hat{P}_{A_l,Y|S^i,W_{1:l}},\,P_{A_l,Y|i,W_{1:l}}\Big).
\]
Now take expectation over all randomness, including the generally dependent training sequence law $P_{\mathbf U^{(l)}\mid W_{1:l}}$:
\begin{equation}
\label{eq:thm2-main}
\big|\mathrm{gen}_W\big|
=
\left|\mathbb E\big[\mathcal L(W)-\hat{\mathcal L}(W)\big]\right|
\le
\mathbb E\big|\mathcal L(W)-\hat{\mathcal L}(W)\big|
\le
\mathbb E\left[
\bar\rho_l(W)\cdot \sum_{i=1}^{T}
\mathcal W_1\!\Big(\hat{P}_{A_l,Y|S^i,W_{1:l}},\,P_{A_l,Y|i,W_{1:l}}\Big)
\right].
\end{equation}
Since the bound \eqref{eq:thm2-main} holds for every $l\in\{0,\dots,L\}$, taking the minimum over $l$ gives,
\begin{equation}
\label{eq:thm2-minl}
\big|\mathrm{gen}_W\big|
\;\le\;
\min_{l\in\{0,\dots,L\}}
\mathbb E\left[
\bar\rho_l(W)\cdot\sum_{i=1}^{T}
\mathcal W_1\!\Big(\hat{P}_{A_l,Y|S^i,W_{1:l}},\,P_{A_l,Y|i,W_{1:l}}\Big)
\right].
\end{equation}
\end{proof}
\subsection{Proof of Proposition~\ref{prop:cross_task_recursion}}
\begin{restateproposition}{\ref{prop:cross_task_recursion}}[Restate]
Fix a layer $l$ and condition on $W_{1:l}$. At task $T$, we have
\begin{align*}
I(\mathbf U_T^{(l)};\Theta_R^{(T)}| W_{1:l})
&\le I(\mathbf U_T^{(l)};\Theta_0^{(T)}| W_{1:l})
   + I(\mathbf U_T^{(l)};\Theta_R^{(T)}| \Theta_0^{(T)},W_{1:l}) \nonumber\\
&=: H_T^{(l)}+\Delta_T^{(l)}.
\end{align*}
\end{restateproposition}
\begin{proof}
Condition on $W_{1:l}$ and suppress it in the notation.
By data processing, $I(\mathbf U_T^{(l)};\Theta^{(T)}_R)\le I(\mathbf U_T^{(l)};(\Theta^{(T)}_0,\Theta^{(T)}_R))$.
Applying the chain rule for mutual information yields
\[
I(\mathbf U_T^{(l)};(\Theta^{(T)}_0,\Theta^{(T)}_R))
~=~
I(\mathbf U_T^{(l)};\Theta^{(T)}_0)
+I(\mathbf U_T^{(l)};\Theta^{(T)}_R\mid \Theta^{(T)}_0),
\]
which proves the Proposition.
\end{proof}
\subsection{Proof of Corollary~\ref{cor:Cumulative information budget}}
\begin{restatecorollary}{\ref{cor:Cumulative information budget}}[Restate]
Fix a layer $l$ and condition on $W_{1:l}$. For any $T\ge 2$, the heritage term admits the following cumulative-budget upper bound:
\[
H_T^{(l)} = I(\mathbf U_T^{(l)};\Theta_0^{(T)}\mid W_{1:l})
\le \sum_{t=1}^{T-1} \Delta_t^{(l)},
\]
where $\Delta_t^{(l)} := I(\mathbf U_t^{(l)};\Theta_R^{(t)}\mid \Theta_0^{(t)},W_{1:l})$ is the within-task information increment
in Proposition~\ref{prop:cross_task_recursion} applied to task $t$.
\end{restatecorollary}
\begin{proof}
Throughout the proof we implicitly condition on $W_{1:l}$ to simplify notation. By the recursion $\Theta_0^{(T)}=\Theta_R^{(T-1)}$, we have $ H_T^{(l)} = I(\mathbf U_T^{(l)};\Theta_R^{(T-1)}).$ The task-$t$ training sequence $\mathbf U_t^{(l)}$ is generated by sampling from
(i) the current-task dataset and (ii) a replay memory $\mathcal M^{1:t-1}$ built from the past,
and this sampling procedure does not depend on $\Theta_R^{(t-1)}$ once $\mathcal M^{1:t-1}$ is fixed. Then,
\[
\mathbf U_t^{(l)} \perp \Theta_R^{(t-1)} \mid \mathcal M^{1:t-1}.
\]
Therefore, at task $T$, we have $\mathbf U_T^{(l)} \perp \Theta_R^{(T-1)} \mid \mathcal M^{1:T-1}$, which implies the Markov chain
\[
\mathbf U_T^{(l)} \to \mathcal M^{1:T-1}\to\Theta_R^{(T-1)}.
\]
By the data processing inequality,
\begin{equation}
\label{cor:eq:HT_to_memory}
H_T^{(l)}=I(\mathbf U_T^{(l)};\Theta_R^{(T-1)}) \;\le\; I(\mathcal M^{1:T-1};\Theta_R^{(T-1)}).
\end{equation}
Moreover, since $\mathcal M^{1:T-1}$ is measurable with respect to $\mathbf U_{1:T-1}^{(l)}$, we have Markov chains again,
\[\Theta_R^{(T-1)}\to\mathbf{U}_{1:T-1}^{(l)}\to \mathcal M^{1:T-1}.\]
By the data processing inequality,
\begin{equation}
\label{cor:eq:memory_to_pastU}
I(\mathcal M^{1:T-1};\Theta_R^{(T-1)}) \;\le\; I(\mathbf U_{1:T-1}^{(l)};\Theta_R^{(T-1)}).
\end{equation}

For each $t\ge 1$, define $ J_t := I(\mathbf U_{1:t}^{(l)};\Theta_R^{(t)}).$ We claim that for every $t\ge 1$,
\begin{equation}
\label{cor:eq:J_recursion}
J_t \;\le\; J_{t-1} + \Delta_t^{(l)}.
\end{equation}
Next, our goal is to prove \eqref{cor:eq:J_recursion}. Based on the monotonicity of mutual information
since $\Theta_R^{(t)}$ is a component of $(\Theta_0^{(t)},\Theta_R^{(t)})$, we have
\begin{equation}
\label{cor:eq:augment_second}
I(\mathbf U_{1:t}^{(l)};\Theta_R^{(t)}) \le I(\mathbf U_{1:t}^{(l)}; \Theta_0^{(t)},\Theta_R^{(t)}).
\end{equation}
Apply the chain rule to the RHS:
\begin{align}
I(\mathbf U_{1:t}^{(l)}; \Theta_0^{(t)},\Theta_R^{(t)})
&= I(\mathbf U_{1:t}^{(l)};\Theta_0^{(t)}) + I(\mathbf U_{1:t}^{(l)};\Theta_R^{(t)}\mid \Theta_0^{(t)}).
\label{cor:eq:chain1}
\end{align}

We handle the two terms in \eqref{cor:eq:chain1} separately.

Since $\Theta_0^{(t)}=\Theta_R^{(t-1)}$ is fully determined by the past training procedure,
conditioning on $\mathbf U_{1:t-1}^{(l)}$ leaves no additional dependence on $\mathbf U_t^{(l)}$, thus:
\[
I(\mathbf U_t^{(l)};\Theta_0^{(t)}\mid \mathbf U_{1:t-1}^{(l)}) = 0.
\]
Then, by the chain rule,
\begin{align}
I(\mathbf U_{1:t}^{(l)};\Theta_0^{(t)})
&= I(\mathbf U_{1:t-1}^{(l)};\Theta_0^{(t)}) + I(\mathbf U_t^{(l)};\Theta_0^{(t)}\mid \mathbf U_{1:t-1}^{(l)}) \nonumber\\
&= I(\mathbf U_{1:t-1}^{(l)};\Theta_0^{(t)})
= I(\mathbf U_{1:t-1}^{(l)};\Theta_R^{(t-1)})
= J_{t-1}.
\label{cor:eq:init_term}
\end{align}
Decompose $\mathbf U_{1:t}^{(l)}=(\mathbf U_{1:t-1}^{(l)},\mathbf U_t^{(l)})$ and apply the chain rule:
\begin{align}
I(\mathbf U_{1:t}^{(l)};\Theta_R^{(t)}\mid \Theta_0^{(t)})
&= I(\mathbf U_t^{(l)};\Theta_R^{(t)}\mid \Theta_0^{(t)}) + I(\mathbf U_{1:t-1}^{(l)};\Theta_R^{(t)}\mid \Theta_0^{(t)},\mathbf U_t^{(l)}).
\label{cor:eq:chain2}
\end{align}
For each $t$, the final parameter $\Theta_R^{(t)}$ is conditionally independent of the
past training sequences $\mathbf U_{1:t-1}^{(l)}$ given the initialization $\Theta_0^{(t)}$ and the current training sequence $\mathbf U_t^{(l)}$:
\[
\Theta_R^{(t)} \perp \mathbf U_{1:t-1}^{(l)} \mid (\Theta_0^{(t)},\mathbf U_t^{(l)}, W_{1:l}).
\] 
Therefore the second term $I(\mathbf U_{1:t-1}^{(l)};\Theta_R^{(t)}\mid \Theta_0^{(t)},\mathbf U_t^{(l)})$ on the RHS of \eqref{cor:eq:chain2} is zero, hence
\begin{equation}
\label{cor:eq:increment_term}
I(\mathbf U_{1:t}^{(l)};\Theta_R^{(t)}\mid \Theta_0^{(t)}) = I(\mathbf U_t^{(l)};\Theta_R^{(t)}\mid \Theta_0^{(t)}) = \Delta_t^{(l)}.
\end{equation}

Plugging \eqref{cor:eq:init_term} and \eqref{cor:eq:increment_term} into \eqref{cor:eq:chain1}, and then using \eqref{cor:eq:augment_second}, yields
\[
J_t = I(\mathbf U_{1:t}^{(l)};\Theta_R^{(t)})
\le I(\mathbf U_{1:t}^{(l)}; \Theta_0^{(t)},\Theta_R^{(t)})
= J_{t-1} + \Delta_t^{(l)},
\]
which establishes \eqref{cor:eq:J_recursion}.
Iterating \eqref{cor:eq:J_recursion} from $t=1$ to $t=T-1$ gives
\[
J_{T-1} \le J_0 + \sum_{t=1}^{T-1}\Delta_t^{(l)}.
\]
Because parameters are randomly initialized before network training, therefore $\Theta_0^{(1)}$ is independent of $\mathbf U_1^{(l)}$, which implies $J_0=I(\mathbf U_{1:0}^{(l)};\Theta_R^{(0)})=0$, we obtain
\begin{equation}
\label{cor:eq:J_sum}
J_{T-1} \le \sum_{t=1}^{T-1}\Delta_t^{(l)}.
\end{equation}

From \eqref{cor:eq:HT_to_memory}--\eqref{cor:eq:memory_to_pastU} we have
\[
H_T^{(l)} \le I(\mathbf U_{1:T-1}^{(l)};\Theta_R^{(T-1)}) = J_{T-1}.
\]
Combining with \eqref{cor:eq:J_sum} yields the desired result:
\[
H_T^{(l)} \le \sum_{t=1}^{T-1}\Delta_t^{(l)}.
\]
\end{proof}
\subsection{Proof of Theorem~\ref{thm:Hierarchical SGLD}}
\begin{restatetheorem}{\ref{thm:Hierarchical SGLD}}[Restate]
Fix a split layer $l\in\{0,\ldots,L-1\}$ and condition on the base parameters $W_{1:l}$, so that the suffix network remains nonempty. Assume the loss is $\sigma$-subgaussian as in Theorem~\ref{thm:synergy_drift}. For task $t$ trained by SGLD for $R_t$ steps, we have
\begin{align*}
\big|\mathrm{gen}_{W^{(t)}}\big|
\le
(t-1)\sqrt{2\sigma^2 \mathcal{K}_t^{(l)}}
+
\sqrt{\frac{2\sigma^2}{N_{\mathrm{eff},t}}
\Big(\mathcal{C}_t^{(l)}+\sum_{s=1}^t \sum_{r=1}^{R_s}\mathbb E\!\left[\frac12\log\det\!\Big(I+\frac{\eta_{s,r}^2}{\tau_{s,r}^2}M^{(l)}_{s,r}\Big)\right]\Big)},
\end{align*}
where $M^{(l)}_{s,r}=\mathbb E\!\big[G_{s,r}G_{s,r}^\top\,\big|\,\Theta^{(s)}_{r-1},W_{1:l}\big]$, and $\mathcal{K}_t^{(l)}$, $\mathcal{C}_t^{(l)}$, and $N_{\mathrm{eff},t}$ denote the corresponding quantities of Theorem~\ref{thm:synergy_drift} with the task horizon $T$ replaced by $t$.
\end{restatetheorem}
\begin{proof}
Throughout the proof we implicitly fix the split layer $l$ and condition on $W_{1:l}$ to simplify notation.
Apply Theorem~\ref{thm:synergy_drift} to the $t$-task scenario by substituting $t$ for $T$ in the statement of Theorem~\ref{thm:synergy_drift}. Then for any fixed split layer $l$, \begin{equation}
\label{eq:thm32_at_t} 
\big|\mathrm{gen}_{W^{(t)}}\big|
\le
(t-1)\sqrt{2\sigma^2 \mathcal{K}_t^{(l)}}
+ 
\sqrt{\frac{2\sigma^2}{N_{\mathrm{eff},t}} \Big(\mathcal{S}_t^{(l)}+\mathcal{P}_t^{(l)}-\mathcal{R}_t^{(l)}+\mathcal{C}_t^{(l)}\Big)}.
\end{equation}
By the interaction-information identity used in Section~\ref{subsec:Algorithm-Based Hierarchical Bounds}, the SPS combination equals a conditional mutual information between the task-$t$ training sequence and the final upper parameters:
\begin{equation}
\label{eq:sps_is_mi}
\mathcal{S}_t^{(l)}+\mathcal{P}_t^{(l)}-\mathcal{R}_t^{(l)}
=
I\Big(\mathbf{U}_t^{(l)};\Theta^{(t)}_{R_t}\mid W_{1:l}\Big).
\end{equation}
Thus, it suffices to upper bound $I\Big(\mathbf{U}_t^{(l)};\Theta^{(t)}_{R_t} \mid W_{1:l}\Big)$ by a cumulative, algorithmic quantity.
For each task index $s\ge 1$, define the within-task increment
\[
\Delta_s^{(l)}
:=
I\left(\mathbf U_s^{(l)};\Theta_{R_s}^{(s)}\mid\Theta_0^{(s)},W_{1:l}\right),
\]
and the heritage term
\[
H_s^{(l)}
:=
I\left(\mathbf U_s^{(l)};\Theta_0^{(s)}\mid W_{1:l}\right),
\qquad
\text{where }\Theta^{(s)}_0=\Theta^{(s-1)}_{R_{s-1}}\ \text{for}\ s\ge 2.
\]
Applying Proposition~\ref{prop:cross_task_recursion} at task $s$ we have decomposition,
\begin{equation}
\label{eq:prop41_task_s}
I \Big(\mathbf U_s^{(l)};\Theta^{(s)}_{R_s} \mid W_{1:l}\Big)
\le
H_s^{(l)}+\Delta_s^{(l)}.
\end{equation}
Applying Corollary~\ref{cor:Cumulative information budget}, we can obtain a sum-to-$t$ recursion,
\begin{equation}
\label{eq:mi_sum_to_t_Delta}
I\Big(\mathbf{U}_t^{(l)};\Theta^{(t)}_{R_t}\mid W_{1:l}\Big)
\le
\sum_{s=1}^{t-1}\Delta_s^{(l)}+\Delta_t^{(l)}
=
\sum_{s=1}^t \Delta_s^{(l)}.
\end{equation}
Fix a task $s\in\{1,\dots,t\}$. Let $\Theta^{(s)}_{0:R_s}:=(\Theta^{(s)}_0,\Theta^{(s)}_1,\dots,\Theta^{(s)}_{R_s})$ denote the full within-task trajectory.
Since $\Theta^{(s)}_{R_s}$ is a coordinate of $\Theta^{(s)}_{0:R_s}$, by monotonicity of mutual information,
\[
I\!\Big(\mathbf U_s^{(l)};\Theta^{(s)}_{R_s}\,\big|\,\Theta^{(s)}_0,W_{1:l}\Big)
\le
I\!\Big(\mathbf U_s^{(l)};\Theta^{(s)}_{0:R_s}\,\big|\,\Theta^{(s)}_0,W_{1:l}\Big).
\]
Apply the chain rule:
\[
I\!\Big(\mathbf U_s^{(l)};\Theta^{(s)}_{0:R_s}\,\big|\,\Theta^{(s)}_0,W_{1:l}\Big)
=
\sum_{r=1}^{R_s}
I\!\Big(\mathbf U_s^{(l)};\Theta^{(s)}_r\,\big|\,\Theta^{(s)}_{0:r-1},\Theta^{(s)}_0,W_{1:l}\Big),
\]
where there is no $r=0$ term because $\Theta^{(s)}_0$ is already conditioned on.
For each $r\ge 1$, write the conditional mutual information as a differential entropy difference:
\begin{align*}
I\!\Big(\mathbf U_s^{(l)};\Theta^{(s)}_r\,\big|\,\Theta^{(s)}_{0:r-1},\Theta^{(s)}_0,W_{1:l}\Big)
&=
h\!\Big(\Theta^{(s)}_r\,\big|\,\Theta^{(s)}_{0:r-1},\Theta^{(s)}_0,W_{1:l}\Big)
-
h\!\Big(\Theta^{(s)}_r\,\big|\,\mathbf U_s^{(l)},\Theta^{(s)}_{0:r-1},\Theta^{(s)}_0,W_{1:l}\Big).
\end{align*}
First, conditioning reduces entropy, hence
\[
h\!\Big(\Theta^{(s)}_r\,\big|\,\Theta^{(s)}_{0:r-1},\Theta^{(s)}_0,W_{1:l}\Big)
\le
h\!\Big(\Theta^{(s)}_r\,\big|\,\Theta^{(s)}_{r-1},W_{1:l}\Big).
\]
Second, by the Markov structure of the update rule, given $(\mathbf U_s^{(l)},\Theta^{(s)}_{r-1},W_{1:l})$, the next iterate $\Theta^{(s)}_r$ is generated using only the freshly sampled minibatch $B_{s,r}$ (sampled from $\mathbf U_s^{(l)}$) and the fresh Gaussian noise $N_{s,r}$, so it is conditionally independent of $\Theta^{(s)}_{0:r-2}$. Therefore
\[
h\!\Big(\Theta^{(s)}_r\,\big|\,\mathbf U_s^{(l)},\Theta^{(s)}_{0:r-1},\Theta^{(s)}_0,W_{1:l}\Big)
=
h\!\Big(\Theta^{(s)}_r\,\big|\,\mathbf U_s^{(l)},\Theta^{(s)}_{r-1},W_{1:l}\Big).
\]
Combining these two displays gives, for each $r$,
\[
I\!\Big(\mathbf U_s^{(l)};\Theta^{(s)}_r\,\big|\,\Theta^{(s)}_{0:r-1},\Theta^{(s)}_0,W_{1:l}\Big)
\le
I\!\Big(\mathbf U_s^{(l)};\Theta^{(s)}_r\,\big|\,\Theta^{(s)}_{r-1},\Theta^{(s)}_0,W_{1:l}\Big).
\]
Summing over $r$ yields
\[
\Delta_s^{(l)}
=
I\!\Big(\mathbf U_s^{(l)};\Theta^{(s)}_{R_s}\,\big|\,\Theta^{(s)}_0,W_{1:l}\Big)
\le
\sum_{r=1}^{R_s}
I\!\Big(\mathbf U_s^{(l)};\Theta^{(s)}_r\,\big|\,\Theta^{(s)}_{r-1},\Theta^{(s)}_0,W_{1:l}\Big).
\]
Now fix $r$ and apply data processing from the full data $\mathbf U_s^{(l)}$ to the sampled minibatch $B_{s,r}$:
given $(\Theta^{(s)}_{r-1},W_{1:l})$, the algorithm draws $B_{s,r}$ by randomized sampling from $\mathbf U_s^{(l)}$, and the update mapping from $B_{s,r}$ to $\Theta^{(s)}_r$ depends on $\mathbf U_s^{(l)}$ only through $B_{s,r}$.
Thus we have the Markov chain
\[
\mathbf U_s^{(l)} \longrightarrow B_{s,r} \longrightarrow \Theta^{(s)}_r
\quad\text{conditioned on }(\Theta^{(s)}_{r-1},\Theta^{(s)}_0,W_{1:l}),
\]
and hence
\[
I\!\Big(\mathbf U_s^{(l)};\Theta^{(s)}_r\,\big|\,\Theta^{(s)}_{r-1},\Theta^{(s)}_0,W_{1:l}\Big)
\le
I\!\Big(B_{s,r};\Theta^{(s)}_r\,\big|\,\Theta^{(s)}_{r-1},\Theta^{(s)}_0,W_{1:l}\Big).
\]
Therefore,
\begin{equation}
\label{eq:Delta_le_sum_step_MI}
\Delta_s^{(l)} \le \sum_{r=1}^{R_s}
I\!\Big(B_{s,r};\Theta^{(s)}_r\,\big|\,\Theta^{(s)}_{r-1},\Theta^{(s)}_0,W_{1:l}\Big).
\end{equation}
It remains to upper bound each per-step term.
Using the SGLD update $\Theta^{(s)}_r=\Theta^{(s)}_{r-1}+\eta_{s,r}G_{s,r}+N_{s,r}$ with $N_{s,r}\sim\mathcal N(0,\tau_{s,r}^2 I)$, translation invariance of mutual information gives
\[
I\!\Big(B_{s,r};\Theta^{(s)}_r\,\big|\,\Theta^{(s)}_{r-1},\Theta^{(s)}_0,W_{1:l}\Big)
=
I\!\Big(B_{s,r};\eta_{s,r}G_{s,r}+\tau_{s,r}\,\big|\,\Theta^{(s)}_{r-1},\Theta^{(s)}_0,W_{1:l}\Big).
\]
Write this as a differential entropy difference:
\begin{align*}
I\!\Big(B_{s,r};\eta_{s,r}G_{s,r}+\tau_{s,r}\,\big|\,\Theta^{(s)}_{r-1},\Theta^{(s)}_0,W_{1:l}\Big)
&=
h\!\Big(\eta_{s,r}G_{s,r}+\tau_{s,r}\,\big|\,\Theta^{(s)}_{r-1},\Theta^{(s)}_0,W_{1:l}\Big)
-
h(N_{s,r}),
\end{align*}
because $G_{s,r}$ is a deterministic function of $(B_{s,r},\Theta^{(s)}_{r-1},W_{1:l})$ and $N_{s,r}$ is independent of $B_{s,r}$ and Gaussian.
Let $Y_{s,r}:=\eta_{s,r}G_{s,r}+N_{s,r}$ and denote $d:=d_{l+1:L}$.
By the Lemma~\ref{lem:Gassian_second_moment_entropy} bound applied conditionally,
\[
h\!\Big(Y_{s,r}\,\big|\,\Theta^{(s)}_{r-1},\Theta^{(s)}_0,W_{1:l}\Big)
\le
\frac12\log\Big((2\pi e)^d\det\Big(\mathbb E\!\big[Y_{s,r}Y_{s,r}^\top\,\big|\,\Theta^{(s)}_{r-1},\Theta^{(s)}_0,W_{1:l}\big]\Big)\Big).
\]
Compute the conditional second moment using independence of $N_{s,r}$ and $G_{s,r}$ given $(\Theta^{(s)}_{r-1},W_{1:l})$ and $\mathbb E[N_{s,r}]=0$:
\begin{align*}
\mathbb E\!\big[Y_{s,r}Y_{s,r}^\top\,\big|\,\Theta^{(s)}_{r-1},\Theta^{(s)}_0,W_{1:l}\big]
&=
\eta_{s,r}^2 \mathbb E\!\big[G_{s,r}G_{s,r}^\top\,\big|\,\Theta^{(s)}_{r-1},W_{1:l}\big]
+\mathbb E[N_{s,r}N_{s,r}^\top]\\
&=
    \eta_{s,r}^2 M^{(l)}_{s,r} + \tau_{s,r}^2 I.
\end{align*}
Also $h(N_{s,r})=\frac12\log\big((2\pi e)^d\det(\tau_{s,r}^2 I)\big)$ since $N_{s,r}\sim\mathcal N(0,\tau_{s,r}^2 I)$.
Subtracting these two entropies yields
\[
I\!\Big(B_{s,r};\Theta^{(s)}_r\,\big|\,\Theta^{(s)}_{r-1},\Theta^{(s)}_0,W_{1:l}\Big)
\le
\frac12\log\det\!\Big(I+\frac{\eta_{s,r}^2}{\tau_{s,r}^2}M^{(l)}_{s,r}\Big)
\]
Taking expectation and summing over $r$ in \eqref{eq:Delta_le_sum_step_MI}, we obtain
\[
\Delta_s^{(l)} 
\le 
\sum_{r=1}^{R_s}\mathbb E\!\left[\frac12\log\det\!\Big(I+\frac{\eta_{s,r}^2}{\tau_{s,r}^2}M^{(l)}_{s,r}\Big)\right].
\]
This holds for each $s$, and hence
\begin{equation}
\label{eq:mi_sum_to_t_Itraj}
I\!\Big(\mathbf U_t^{(l)};\Theta^{(t)}_{R_t}\,\big|\,W_{1:l}\Big)
\le
\sum_{s=1}^t \Delta_s^{(l)}
\le
\sum_{s=1}^t \sum_{r=1}^{R_s}\mathbb E\!\left[\frac12\log\det\!\Big(I+\frac{\eta_{s,r}^2}{\tau_{s,r}^2}M^{(l)}_{s,r}\Big)\right].
\end{equation}
Substitute \eqref{eq:sps_is_mi} and \eqref{eq:mi_sum_to_t_Itraj} into \eqref{eq:thm32_at_t} to obtain
\[
\big|\mathrm{gen}_{W^{(t)}}\big|
\le
(t-1)\sqrt{2\sigma^2 \mathcal{K}_t^{(l)}}
+
\sqrt{\frac{2\sigma^2}{N_{\mathrm{eff},t}}
\Big(\mathcal{C}_t^{(l)}+\sum_{s=1}^t \sum_{r=1}^{R_s}\mathbb E\!\left[\frac12\log\det\!\Big(I+\frac{\eta_{s,r}^2}{\tau_{s,r}^2}M^{(l)}_{s,r}\Big)\right]\Big)}.
\]
This completes the proof.
\end{proof}
\subsection{Proof of Proposition~\ref{prop:logdet_split}}
\begin{restateproposition}{\ref{prop:logdet_split}}[Restate]
Let $ A_{s,r}^{(l)}:=I+\alpha_{s,r}V_{s,r}^{(l)}\succ 0$ with $\alpha_{s,r}=\eta_{s,r}^2/\tau_{s,r}^2$. Then
\begin{align*}
\frac12\log\det\!\big(I+\alpha_{s,r}M_{s,r}^{(l)}\big)
=
\underbrace{\frac12\log\det\!\big(A_{s,r}^{(l)}\big)}_{\textbf{instability}}
+
\underbrace{\frac12\log\!\Big(1+\alpha_{s,r}\,\mu_{s,r}^{(l)\top}\big(A_{s,r}^{(l)}\big)^{-1}\mu_{s,r}^{(l)}\Big)}_{\textbf{interaction cost}}.
\end{align*}
\end{restateproposition}
\begin{proof}
We condition on $(\Theta^{(s)}_{r-1},W_{1:l})$ throughout.
By definition,
\[
M^{(l)}_{s,r}=\mathbb E[G_{s,r}G_{s,r}^\top]
\quad\text{with}\quad
G_{s,r}=\lambda_{s,\mathrm{old}}G^{\mathrm{old}}_{s,r}+\lambda_{s,\mathrm{new}}G^{\mathrm{new}}_{s,r}.
\]
Since the old and new minibatches are sampled independently given the state,
$G^{\mathrm{old}}_{s,r}$ and $G^{\mathrm{new}}_{s,r}$ are conditionally independent, hence
\[
\mathrm{Cov}(G_{s,r})
=
\lambda_{s,\mathrm{old}}^2\mathrm{Cov}(G^{\mathrm{old}}_{s,r})
+\lambda_{s,\mathrm{new}}^2\mathrm{Cov}(G^{\mathrm{new}}_{s,r})
=
V^{(l)}_{s,r}.
\]
Also $\mathbb E[G_{s,r}]=\lambda_{s,\mathrm{old}}\mu^{\mathrm{old},(l)}_{s,r}+\lambda_{s,\mathrm{new}}\mu^{\mathrm{new},(l)}_{s,r}=\mu^{(l)}_{s,r}$.
Using the identity $\mathbb E[XX^\top]=\mathrm{Cov}(X)+\mathbb E[X]\mathbb E[X]^\top$, we can obtain
\[
I+\alpha_{s,r}M^{(l)}_{s,r}
=
I+\alpha_{s,r}V^{(l)}_{s,r}+\alpha_{s,r}\mu^{(l)}_{s,r}\mu^{(l)\top}_{s,r}
=
A^{(l)}_{s,r} + uu^\top,
\quad u:=\sqrt{\alpha_{s,r}}\,\mu^{(l)}_{s,r}.
\]
Since $A^{(l)}_{s,r}\succ 0$, the matrix determinant lemma $ \det(I + A + uu^\top) = \det(I + A) \left(1 + u^\top (I + A)^{-1} u \right)$ gives
\[
\det\!\big(A^{(l)}_{s,r}+uu^\top\big)
=
\det\!\big(A^{(l)}_{s,r}\big)\Big(1+u^\top\big(A^{(l)}_{s,r}\big)^{-1}u\Big)
=
\det\!\big(A^{(l)}_{s,r}\big)\Big(1+\alpha_{s,r}\mu^{(l)\top}_{s,r}\big(A^{(l)}_{s,r}\big)^{-1}\mu^{(l)}_{s,r}\Big).
\]
Taking $\frac12\log$ of both sides completes the proof.
\end{proof}
\subsection{Proof of Corollary~\ref{cor:mean_energy_alignment}}
\begin{restatecorollary}{\ref{cor:mean_energy_alignment}}[Restate]
Let $\mu^{\mathrm{old},(l)}_{s,r}:=\mathbb E[G^{\mathrm{old}}_{s,r}\mid \Theta^{(s)}_{r-1},W_{1:l}],
\mu^{\mathrm{new},(l)}_{s,r}:=\mathbb E[G^{\mathrm{new}}_{s,r}\mid \Theta^{(s)}_{r-1},W_{1:l}]$, so that $\mu^{(l)}_{s,r}=\lambda_{s,\mathrm{old}}\mu^{\mathrm{old},(l)}_{s,r}+\lambda_{s,\mathrm{new}}\mu^{\mathrm{new},(l)}_{s,r}.$
Then under the sensitivity metric $H^{(l)}_{s,r}$ in Definition~\ref{def:metric_alignment},
\begin{align*}
\mu^{(l)\top}_{s,r}H^{(l)}_{s,r}\mu^{(l)}_{s,r}
=
\lambda_{s,\mathrm{old}}^2\big\|\mu^{\mathrm{old},(l)}_{s,r}\big\|_{H}^2
+\lambda_{s,\mathrm{new}}^2\big\|\mu^{\mathrm{new},(l)}_{s,r}\big\|_{H}^2+
2\lambda_{s,\mathrm{old}}\lambda_{s,\mathrm{new}}
\big\|\mu^{\mathrm{old},(l)}_{s,r}\big\|_{H}
\big\|\mu^{\mathrm{new},(l)}_{s,r}\big\|_{H}
\cos_{H}\!\Big(\mu^{\mathrm{old},(l)}_{s,r},\mu^{\mathrm{new},(l)}_{s,r}\Big),
\end{align*}
where the information-geometric alignment
\[
\mathrm{Align}=\cos_{H}\!\Big(\mu^{\mathrm{old},(l)}_{s,r},\mu^{\mathrm{new},(l)}_{s,r}\Big).
\]

\end{restatecorollary}
\begin{proof}
Since $\mu_{s,r}^{(l)}=\lambda_{s,\mathrm{old}}\mu_{s,r}^{\mathrm{old},(l)}+\lambda_{s,\mathrm{new}}\mu_{s,r}^{\mathrm{new},(l)}$, then
\begin{align*}
\mu_{s,r}^{(l)\top}H_{s,r}^{(l)}\mu_{s,r}^{(l)}
&=
\big\|\lambda_{s,\mathrm{old}}\mu_{s,r}^{\mathrm{old},(l)}+\lambda_{s,\mathrm{new}}\mu_{s,r}^{\mathrm{new},(l)}\big\|^2_H \\
&=
\lambda_{s,\mathrm{old}}^2\|\mu_{s,r}^{\mathrm{old},(l)}\|^2_H+\lambda_{s,\mathrm{new}}^2\|\mu_{s,r}^{\mathrm{new},(l)}\|^2_H
+2\lambda_{s,\mathrm{old}}\lambda_{s,\mathrm{new}}\langle \mu_{s,r}^{\mathrm{old},(l)},\mu_{s,r}^{\mathrm{new},(l)}\rangle_H \\
&=
\lambda_{s,\mathrm{old}}^2\|\mu_{s,r}^{\mathrm{old},(l)}\|^2_H+\lambda_{s,\mathrm{new}}^2\|\mu_{s,r}^{\mathrm{new},(l)}\|^2_H
+2\lambda_{s,\mathrm{old}}\lambda_{s,\mathrm{new}}\|\mu_{s,r}^{\mathrm{old},(l)}\|_H\,\|\mu_{s,r}^{\mathrm{new},(l)}\|_H\cos_H\!\big(\mu_{s,r}^{\mathrm{old},(l)},\mu_{s,r}^{\mathrm{new},(l)}\big).
\end{align*}
This completes the proof.
\end{proof}
\subsection{Proof of Corollary~\ref{Scalar diagnostic decomposition}}
\begin{restatecorollary}{\ref{Scalar diagnostic decomposition}}[Restate]
Conditioning on $(\Theta^{(s)}_{r-1},W_{1:l})$, the per-step trajectory budget admits the scalar upper bound
\begin{align*}
\sum_{s=1}^t\sum_{r=1}^{R_s}\mathbb  E\!\left[\frac12\log\det\!\Big(I+\alpha_{s,r}M^{(l)}_{s,r}\Big)\right]
\le
\frac12\sum_{s=1}^t\sum_{r=1}^{R_s}\alpha_{s,r}\,\mathbb E\!\Big[\mathrm{tr}\!\big(V^{(l)}_{s,r}\big)+\|\mu^{(l)}_{s,r}\|^2\Big].
\end{align*}
Moreover, the two scalar components admit the explicit decompositions $\mathrm{tr}\!\big(V^{(l)}_{s,r}\big)=
\lambda_{s,\mathrm{old}}^2\mathrm{tr}\!\big(\Sigma^{\mathrm{old},(l)}_{s,r}\big)
+\lambda_{s,\mathrm{new}}^2\mathrm{tr}\!\big(\Sigma^{\mathrm{new},(l)}_{s,r}\big),$ and $\|\mu^{(l)}_{s,r}\|^2 = \lambda_{s,\mathrm{old}}^2\|\mu^{\mathrm{old},(l)}_{s,r}\|^2 +\lambda_{s,\mathrm{new}}^2\|\mu^{\mathrm{new},(l)}_{s,r}\|^2 +2\lambda_{s,\mathrm{old}}\lambda_{s,\mathrm{new}}
\langle\mu^{\mathrm{old},(l)}_{s,r},\mu^{\mathrm{new},(l)}_{s,r}\rangle$.
\end{restatecorollary}
\begin{proof}
Let $B\succeq 0$ have eigenvalues $\{\lambda_i\}_{i=1}^d\subset\mathbb R_+$.
Then $\log\det(I+B)=\sum_{i=1}^d\log(1+\lambda_i)$ and $\mathrm{tr}(B)=\sum_{i=1}^d\lambda_i$.

Since $\log(1+x)\le x$ for all $x\ge 0$, summing over $i$ yields $\log\det(I+B)\le \mathrm{tr}(B)$.

Apply to $B=\alpha_{s,r}M^{(l)}_{s,r}\succeq 0$:
\[
\frac12\log\det(I+\alpha_{s,r}M^{(l)}_{s,r})
\le
\frac12\,\alpha_{s,r}\,\mathrm{tr}(M^{(l)}_{s,r}).
\]
Taking expectation and summing over $(s,r)$ gives $\frac12\sum_{s=1}^t\sum_{r=1}^{R_s}\alpha_{s,r}\,\mathbb E\!\big[\mathrm{tr}(M^{(l)}_{s,r})\big]$.

By the identity $\mathbb E[XX^\top]=\mathrm{Cov}(X)+\mathbb E[X]\mathbb E[X]^\top$,
we have $M^{(l)}_{s,r}=V^{(l)}_{s,r}+\mu^{(l)}_{s,r}\mu^{(l)\top}_{s,r}$, hence
$\mathrm{tr}(M^{(l)}_{s,r})=\mathrm{tr}(V^{(l)}_{s,r})+\mathrm{tr}(\mu^{(l)}_{s,r}\mu^{(l)\top}_{s,r})
=\mathrm{tr}(V^{(l)}_{s,r})+\|\mu^{(l)}_{s,r}\|^2$,
which yields $\frac12\sum_{s=1}^t\sum_{r=1}^{R_s}\alpha_{s,r}\,\mathbb E\!\Big[\mathrm{tr}\!\big(V^{(l)}_{s,r}\big)+\|\mu^{(l)}_{s,r}\|^2\Big]$.

Finally, applying linearity of the trace to the definition of $V^{(l)}_{s,r}$, we obtain
 \[
 \mathrm{tr}\!\big(V^{(l)}_{s,r}\big)
 =
\lambda_{s,\mathrm{old}}^2\mathrm{tr}\!\big(\Sigma^{\mathrm{old},(l)}_{s,r}\big)
+
\lambda_{s,\mathrm{new}}^2\mathrm{tr}\!\big(\Sigma^{\mathrm{new},(l)}_{s,r}\big).
\]
The second identity follows by expanding the squared Euclidean norm
\[
\|\mu^{(l)}_{s,r}\|^2 = \lambda_{s,\mathrm{old}}^2\|\mu^{\mathrm{old},(l)}_{s,r}\|^2 +\lambda_{s,\mathrm{new}}^2\|\mu^{\mathrm{new},(l)}_{s,r}\|^2 +2\lambda_{s,\mathrm{old}}\lambda_{s,\mathrm{new}}
\langle\mu^{\mathrm{old},(l)}_{s,r},\mu^{\mathrm{new},(l)}_{s,r}\rangle.
\]
\end{proof}
\subsection{Proof of Corollary~\ref{cor:Optimal mixing}}
\begin{restatecorollary}{\ref{cor:Optimal mixing}}[Restate]
Fix any sensitivity metric $H^{(l)}_{s,r}$. For $\lambda\in[0,1]$, consider the mixed vector $\mu_{s,r}^{(l)}(\lambda):=\lambda \mu^{\mathrm{old},(l)}_{s,r}+(1-\lambda)\mu^{\mathrm{new},(l)}_{s,r}$ and its metric energy
$\|\mu_{s,r}^{(l)}(\lambda)\|_H^2$.
Then $\|\mu^{(l)}_{s,r}(\lambda)\|_H^2$ is minimized over $\lambda\in[0,1]$ by
\begin{equation*}
\lambda^\star
=
\mathrm{clip}_{[0,1]}\!\left(
\frac{{\mu^{\mathrm{new},(l)}_{s,r}}^\top H(\mu^{\mathrm{new},(l)}_{s,r}-\mu^{\mathrm{old},(l)}_{s,r})}{(\mu^{\mathrm{old},(l)}_{s,r}- \mu^{\mathrm{new},(l)}_{s,r})^\top H(\mu^{\mathrm{old},(l)}_{s,r}- \mu^{\mathrm{new},(l)}_{s,r})}
\right),
\end{equation*}
with the convention $\lambda^\star=0$ if $\mu^{\mathrm{old},(l)}_{s,r}=\mu^{\mathrm{new},(l)}_{s,r}$ .
\end{restatecorollary}
\begin{proof}
Let $d:=\mu^{\mathrm{old},(l)}_{s,r}-\mu^{\mathrm{new},(l)}_{s,r}$. Then $\mu(\lambda)=\mu^{\mathrm{new},(l)}_{s,r}+\lambda d$ and
\[
\|\mu(\lambda)\|_H^2=(\mu^{\mathrm{new},(l)}_{s,r}+\lambda d)^\top H(\mu^{\mathrm{new},(l)}_{s,r}+\lambda d)={\mu^{\mathrm{new},(l)}_{s,r}}^\top H\mu^{\mathrm{new},(l)}_{s,r}+2\lambda\,{\mu^{\mathrm{new},(l)}_{s,r}}^\top Hd+\lambda^2\,d^\top Hd.
\]
If $\mu^{\mathrm{old},(l)}_{s,r}=\mu^{\mathrm{new},(l)}_{s,r}$, then $d=0$ and $\|\mu(\lambda)\|_H^2\equiv {\mu^{\mathrm{new},(l)}_{s,r}}^\top H\mu^{\mathrm{new},(l)}_{s,r}$ is constant; we define $\lambda^\star=0$.
If $\mu^{\mathrm{old},(l)}_{s,r}\neq \mu^{\mathrm{new},(l)}_{s,r}$, then $d\neq 0$ and since $H\succ 0$, we have $d^\top Hd>0$.
Therefore the scalar function $\varphi(\lambda):=\|\mu(\lambda)\|_H^2$ is a strictly convex quadratic.
Differentiate:
\[
\varphi'(\lambda)=2\,{\mu^{\mathrm{new},(l)}_{s,r}}^\top Hd+2\lambda\,d^\top Hd.
\]
Setting $\varphi'(\lambda)=0$ yields the unique unconstrained minimizer
\[
\lambda^\star_{\mathrm{unc}}
=
-\frac{{\mu^{\mathrm{new},(l)}_{s,r}}^\top Hd}{d^\top Hd}
=
\frac{{\mu^{\mathrm{new},(l)}_{s,r}}^\top H({\mu^{\mathrm{new},(l)}_{s,r}}-\mu^{\mathrm{old},(l)}_{s,r})}{(\mu^{\mathrm{old},(l)}_{s,r}-\mu^{\mathrm{new},(l)}_{s,r})^\top H(\mu^{\mathrm{old},(l)}_{s,r}-\mu^{\mathrm{new},(l)}_{s,r})}.
\]
The constrained minimizer over $\lambda\in[0,1]$ is the Euclidean projection of $\lambda^\star_{\mathrm{unc}}$ onto $[0,1]$,
i.e.,\ $\lambda^\star=\mathrm{clip}_{[0,1]}(\lambda^\star_{\mathrm{unc}})$. The proof is complete.
\end{proof}


\section{Details of the Synthetic Gaussian Experiments}
\label{sec:app:Experimental Details}

We construct a sequence of $T$ binary classification tasks $\{\mathcal{T}_t\}_{t=0}^{T-1}$ in $\mathbb{R}^d$.
Each task is a balanced two-class Gaussian mixture:
\begin{align*}
x \mid (y=0,\mathcal{T}_t) &\sim \mathcal{N}(\mu_t - s w_t, I_d), \\
x \mid (y=1,\mathcal{T}_t) &\sim \mathcal{N}(\mu_t + s w_t, I_d),
\end{align*}
where $w_t$ is a unit direction controlling the class-discriminative axis, $\mu_t$ is a task-specific mean shift, and $s>0$ is the class separation.
To control task relatedness, we restrict $\{w_t\}$ to a random 2D subspace and vary its evolution across tasks.
We consider two modes:
(i) positive synergy, where $w_t$ rotates smoothly by a fixed angle increment $\Delta$ so adjacent tasks remain aligned;
(ii) negative synergy, where $w_t$ alternates sign ($w_{t+1}=-w_t$), which is equivalent to a label-flip across tasks and induces strong interference.
We control cross-task drift via $\mu_t = C\cdot\xi_t$ with $\xi_t\sim\mathcal{N}(0,I_d)$.
For each task $\mathcal{T}_t$, we sample an i.i.d.\ training set $D_t$ of size $n$ and an i.i.d.\ test set $D_t^{\mathrm{test}}$ of size $n_{\mathrm{test}}$.

After completing task $t$, we store exactly $m$ examples from its training set into a per-task buffer $\mathcal{B}_t$. The total replay memory thus scales as $O(Tm)$.
When training on task $t\ge 1$, each update step samples a mini-batch $B^{\mathrm{new}}$ from the current task and a replay mini-batch $B^{\mathrm{old}}$ from replay buffer $\cup_{i<t}\mathcal{B}_i$.
We minimize the mixture loss
\begin{equation*}
\ell_t(\theta)
= \lambda_{\mathrm{new}}(t)\,\widehat{\mathcal{L}}(B^{\mathrm{new}};\theta)
+ \lambda_{\mathrm{old}}(t)\,\widehat{\mathcal{L}}(B^{\mathrm{old}};\theta),
\qquad
\lambda_{\mathrm{new}}(t)=\frac{1}{t+1},\;\;
\lambda_{\mathrm{old}}(t)=\frac{t}{t+1}.
\end{equation*}
We use SGD with fixed learning rate and batch size. 
After finishing all $T$ tasks, we report the test risk
\begin{equation*}
L(\theta)=\frac{1}{T}\sum_{t=0}^{T-1}\widehat{\mathcal{L}}(D_t^{\mathrm{test}};\theta),
\end{equation*}
and the empirical risk (old tasks via replay buffers, the last task via its training set)
\begin{equation*}
\widehat{L}(\theta)
=\frac{T-1}{T}\widehat{\mathcal{L}}\Big(\bigcup_{t=0}^{T-2}\mathcal{B}_t;\theta\Big)
+\frac{1}{T}\widehat{\mathcal{L}}(D_{T-1};\theta).
\end{equation*}
The generalization gap is $\mathrm{gap}=L(\theta)-\widehat{L}(\theta)$. We report this task-averaged gap; it equals the total-risk gap of Theorem~\ref{thm:synergy_drift} divided by the task count $T$. Because $T$ is fixed in this experiment, the constant $1/T$ rescales the gap but leaves the predicted $m^{-1/2}$ scaling and the location of the finite-memory plateau unchanged.
Unless stated otherwise, all curves are averaged over $K$ independent runs; we report $95\%$ confidence intervals for the mean (Student-$t$).

\end{document}